\newcommand{\acksection}{\section*{Acknowledgments and Disclosure of Funding}}
\let\mathbb\varmathbb
\crefname{lemma}{Lemma}{Lemmas}
\crefname{fact}{Fact}{Facts}
\crefname{theorem}{Theorem}{Theorems}
\crefname{corollary}{Corollary}{Corollaries}
\crefname{claim}{Claim}{Claims}
\crefname{example}{Example}{Examples}
\crefname{algorithm}{Algorithm}{Algorithms}
\crefname{problem}{Problem}{Problems}
\crefname{definition}{Definition}{Definitions}
\crefname{exercise}{Exercise}{Exercises}
\crefname{condition}{Condition}{Conditions}
\crefname{property}{Property}{Properties}
\newtheorem{theorem}{Theorem}[section]
\newtheorem*{theorem*}{Theorem}
\newtheorem{lemma}[theorem]{Lemma}
\newtheorem*{lemma*}{Lemma}
\newtheorem{fact}[theorem]{Fact}
\newtheorem*{fact*}{Fact}
\newtheorem{proposition}[theorem]{Proposition}
\newtheorem*{proposition*}{Proposition}
\newtheorem{corollary}[theorem]{Corollary}
\newtheorem*{corollary*}{Corollary}
\newtheorem*{hypothesis*}{Hypothesis}
\newtheorem*{conjecture*}{Conjecture}
\theoremstyle{definition}
\newtheorem{definition}[theorem]{Definition}
\newtheorem*{definition*}{Definition}
\newtheorem*{construction*}{Construction}
\newtheorem*{example*}{Example}
\newtheorem*{question*}{Question}
\newtheorem{algorithm}[theorem]{Algorithm}
\newtheorem*{algorithm*}{Algorithm}
\newtheorem*{assumption*}{Assumption}
\newtheorem*{problem*}{Problem}
\newtheorem*{openquestion*}{Open Question}
\theoremstyle{remark}
\newtheorem*{claim*}{Claim}
\newtheorem{remark}[theorem]{Remark}
\newtheorem*{remark*}{Remark}
\newtheorem*{observation*}{Observation}
\let\originalleft\left
\let\originalright\right
\renewcommand{\left}{\mathopen{}\mathclose\bgroup\originalleft}
\renewcommand{\right}{\aftergroup\egroup\originalright}
\let\latexparagraph\paragraph
\RenewDocumentCommand{\paragraph}{som}{%
  \IfBooleanTF{#1}
    {\latexparagraph*{#3}}
    {\IfNoValueTF{#2}
       {\latexparagraph{\maybe@addperiod{#3}}}
       {\latexparagraph[#2]{\maybe@addperiod{#3}}}%
  }%
}
\newcommand{\maybe@addperiod}[1]{%
  #1\@addpunct{.}%
}
\newcommand{\paren}[1]{(#1)}
\newcommand{\Paren}[1]{\left(#1\right)}
\newcommand{\Brac}[1]{\left[#1\right]}
\newcommand{\abs}[1]{\lvert#1\rvert}
\newcommand{\Abs}[1]{\left\lvert#1\right\rvert}
\newcommand{\card}[1]{\lvert#1\rvert}
\newcommand{\Card}[1]{\left\lvert#1\right\rvert}
\newcommand{\set}[1]{\{#1\}}
\newcommand{\Set}[1]{\left\{#1\right\}}
\newcommand{\norm}[1]{\lVert#1\rVert}
\newcommand{\Norm}[1]{\left\lVert#1\right\rVert}
\newcommand{\normsparse}[1]{\norm{#1}_0}
\newcommand{\normo}[1]{\norm{#1}_1}
\newcommand{\Normo}[1]{\Norm{#1}_1}
\newcommand{\normi}[1]{\norm{#1}_\infty}
\newcommand{\Normi}[1]{\Norm{#1}_\infty}
\newcommand{\iprod}[1]{\langle#1\rangle}
\newcommand{\Iprod}[1]{\left\langle#1\right\rangle}
\newcommand{\Esymb}{\mathbb{E}}
\newcommand{\Psymb}{\mathbb{P}}
\newcommand{\Vsymb}{\text{Var}}
\DeclareMathOperator*{\E}{\Esymb}
\DeclareMathOperator*{\Var}{\Vsymb}
\DeclareMathOperator*{\ProbOp}{\Psymb}
\renewcommand{\Pr}{\ProbOp}
\newcommand{\suchthat}{\;\middle\vert\;}
\newcommand{\suchthatsmall}{\;\vert\;}
\newcommand{\simiid}{\stackrel{\text{iid}}\sim}
\newcommand{\mper}{\,.}
\newcommand\bdot\bullet
\DeclareMathOperator{\Tr}{Tr}
\DeclareMathOperator{\poly}{poly}
\DeclareMathOperator{\polylog}{polylog}
\DeclareMathOperator{\supp}{supp}
\DeclareMathOperator{\sign}{sign}
\DeclareMathOperator{\conv}{conv}
\newcommand{\N}{\mathbb N}
\newcommand{\R}{\mathbb R}
\newcommand{\cA}{\mathcal A}
\newcommand{\cB}{\mathcal B}
\newcommand{\cD}{\mathcal D}
\newcommand{\cE}{\mathcal E}
\newcommand{\cF}{\mathcal F}
\newcommand{\cG}{\mathcal G}
\newcommand{\cI}{\mathcal I}
\newcommand{\cK}{\mathcal K}
\newcommand{\cM}{\mathcal M}
\newcommand{\cN}{\mathcal N}
\newcommand{\cP}{\mathcal P}
\newcommand{\cQ}{\mathcal Q}
\newcommand{\cS}{\mathcal S}
\newcommand{\cV}{\mathcal V}
\newcommand{\cW}{\mathcal W}
\renewcommand{\leq}{\leqslant}
\renewcommand{\le}{\leqslant}
\renewcommand{\geq}{\geqslant}
\renewcommand{\ge}{\geqslant}
\let\epsilon=\varepsilon
\numberwithin{equation}{section}
\newcommand\MYcurrentlabel{xxx}
\newcommand{\MYstore}[2]{%
  \global\expandafter \def \csname MYMEMORY #1 \endcsname{#2}%
}
\newcommand{\MYload}[1]{%
  \csname MYMEMORY #1 \endcsname%
}
\newcommand{\MYnewlabel}[1]{%
  \renewcommand\MYcurrentlabel{#1}%
  \MYoldlabel{#1}%
}
\newcommand{\MYdummylabel}[1]{}
\newcommand{\torestate}[1]{%
  \let\MYoldlabel\label%
  \let\label\MYnewlabel%
  #1%
  \MYstore{\MYcurrentlabel}{#1}%
  \let\label\MYoldlabel%
}
\newcommand{\restatetheorem}[1]{%
  \let\MYoldlabel\label
  \let\label\MYdummylabel
  \begin{theorem*}[Restatement of \cref{#1}]
    \MYload{#1}
  \end{theorem*}
  \let\label\MYoldlabel
}
\newcommand{\restatelemma}[1]{%
  \let\MYoldlabel\label
  \let\label\MYdummylabel
  \begin{lemma*}[Restatement of \cref{#1}]
    \MYload{#1}
  \end{lemma*}
  \let\label\MYoldlabel
}
\newcommand{\restateprop}[1]{%
  \let\MYoldlabel\label
  \let\label\MYdummylabel
  \begin{proposition*}[Restatement of \cref{#1}]
    \MYload{#1}
  \end{proposition*}
  \let\label\MYoldlabel
}
\newcommand{\restatefact}[1]{%
  \let\MYoldlabel\label
  \let\label\MYdummylabel
  \begin{fact*}[Restatement of \cref{#1}]
    \MYload{#1}
  \end{fact*}
  \let\label\MYoldlabel
}
\newcommand{\restate}[1]{%
  \let\MYoldlabel\label
  \let\label\MYdummylabel
  \MYload{#1}
  \let\label\MYoldlabel
}
\newcommand{\e}{\epsilon}
\newcommand{\eps}{\epsilon}
\newcommand*{\Id}{\mathrm{Id}}
\newenvironment{algorithmbox}{\begin{mdframed}[nobreak=true]
\begin{algorithm}}{\end{algorithm}\end{mdframed}}
\newcommand{\ind}[1]{\mathbf{1}_{\Brac{#1}}}
\newcommand{\pE}{\tilde{\mathbb{E}}}
\newcommand{\huberloss}{H}
\newcommand{\huberlossfiltered}{H_{\hat{S}}}
\newcommand{\huberderivative}{\phi}
\newcommand{\Sgood}{S_{\text{good}}}
\newcommand{\Sbad}{S_{\text{bad}}}
\newcommand{\CS}{Cauchy--Schwarz }
\newcommand{\holders}{H\"older's}
\newcommand{\betahat}{\hat{ \beta}}
\newcommand{\betastar}{\beta^*}
\title{
Robust Sparse Regression with Non-Isotropic Designs
}
\date{}
\author{Chih-Hung Liu\thanks{ 
	National Taiwan University}
    \and 
  {Gleb Novikov}\thanks{
  Lucerne School of Computer Science and Information Technology}
}
\begin{document}

\maketitle


\begin{abstract}

We develop a technique to design efficiently computable estimators for sparse linear regression in the simultaneous presence of two adversaries: oblivious and adaptive.
Consider the model $y^*=X^*\beta^*+ \eta$ where $X^*$ is an $n\times d$ random design matrix, $\beta^*\in \mathbb{R}^d$ is a $k$-sparse vector, and the noise $\eta$ is independent of $X^*$ and chosen by the \emph{oblivious adversary}. 
Apart from the independence of $X^*$, we only require a small fraction entries of $\eta$ to have magnitude at most $1$. 
The \emph{adaptive adversary} is allowed to arbitrarily corrupt an $\varepsilon$-fraction of the samples $(X_1^*, y_1^*),\ldots, (X_n^*, y_n^*)$.
Given the $\varepsilon$-corrupted samples $(X_1, y_1),\ldots, (X_n, y_n)$, the goal is to estimate $\beta^*$.  
We assume that the rows of $X^*$ are iid samples from some $d$-dimensional distribution $\mathcal{D}$  with zero mean and (unknown) covariance matrix $\Sigma$ with bounded condition number.

We design several robust algorithms that outperform the state of the art even in the special case of Gaussian noise $\eta \sim N(0,1)^n$. 
In particular, we provide a polynomial-time algorithm that with high probability recovers $\beta^*$ up to error $O(\sqrt{\varepsilon})$  as long as  $n \ge \tilde{O}(k^2/\varepsilon)$, only assuming some bounds on the third and the fourth moments of $\mathcal{D}$. 
In addition, prior to this work, even in the special case of Gaussian design $\mathcal{D} = N(0,\Sigma)$ and noise $\eta \sim N(0,1)$, no polynomial time algorithm was known to achieve error $o(\sqrt{\varepsilon})$ in the sparse setting $n < d^2$.  
We show that under some assumptions on the fourth and the eighth moments of $\mathcal{D}$, there is a polynomial-time algorithm that achieves error $o(\sqrt{\varepsilon})$ as long as $n \ge \tilde{O}(k^4 / \varepsilon^3)$. 
For Gaussian distribution\footnote{A recent result \cite{diakonikolas2024soscertifiabilitysubgaussiandistributions} implies that our algorithm also achieves the same error for all \emph{sub-Gaussian} distributions.} $\mathcal{D} = N(0,\Sigma)$, this algorithm achieves error $O(\varepsilon^{3/4})$. 
Moreover, our algorithm achieves error $o(\sqrt{\varepsilon})$ for all log-concave distributions if $\varepsilon \le 1/\text{polylog(d)}$. 

Our algorithms are based on the filtering  of the covariates that uses sum-of-squares relaxations, and weighted Huber loss minimization with $\ell_1$ regularizer.  We provide a novel analysis of weighted penalized Huber loss that is suitable for heavy-tailed designs in the presence of two adversaries. Furthermore, we complement our algorithmic results with Statistical Query lower bounds, providing evidence that our estimators are likely to have nearly optimal sample complexity.

\end{abstract}

%
%
%
%
%
%
%
%
%


\section{Introduction}
\label{sec:introduction}


{Linear regression} is the fundamental task in statistics, with many applications in data science and machine learning.
In ordinary (non-sparse) linear regression, we are given observations $y^*_1,\ldots, y^*_n$ and $X^*_1,\ldots, X^*_n \in \R^{d}$ such that $y^*_i = \Iprod{X^*_i,\betastar} + \eta_i$ for some $\betastar \in \R^d$ and some noise $\eta\in \R^n$, and the goal is to estimate $\betastar$. 
If $\eta$ is independent of $X^*$ and has iid Gaussian entries $\eta_i \sim N(0,1)$, 
the classical least squares estimator $\betahat$ with high probability achieves the \emph{prediction error}
$\tfrac{1}{\sqrt{n}}\norm{X^*\paren{\betahat - \betastar}} \le O\Paren{ \sqrt{d/n}}$.
Note that if $d/n \to 0$, the error is vanishing. 

Despite the huge dimensions of modern data, many practical applications only depend on a small part of the dimensions of data, thus motivating \emph{sparse} regression, where only $k \ll d$ explanatory variables are actually important 
(i.e., $\betastar$ is $k$-sparse). In this case we want the error to be small even if we only have $n \ll d$ samples. 
In this case, there exists an estimator that achieves prediction error $O\Paren{\sqrt{k\log(d)/n}}$ (for $\eta \sim N(0,1)^n$). 
However, this estimator requires exponential computation time. Moreover, under a standard assumption from computational complexity theory ($\textbf{NP}\not\subset\textbf{P/poly}$), estimators that can be computed in polynomial time require an assumption on $X^*$ called a \emph{restricted eigenvalue condition} in order to achieve error $O\Paren{\sqrt{k\log(d)/n}}$ (see \cite{abs-1402.1918}  for more details).
One efficiently computable estimator that achieves error $O\Paren{\sqrt{k\log(d)/n}}$ under the restricted eigenvalue condition
 is Lasso, that is, a minimizer of the quadratic loss with $\ell_1$ regularizer. 
 In particular, the restricted eigenvalue condition is satisfied for $X^*$ with rows $X^*_i\stackrel{\text{iid}}{\sim} N(0,\Sigma)$, where $\Sigma$ has condition number $O(1)$, as long as $n\gtrsim k \log d$ (with high probability). 
 
 Further we assume that the designs have iid random rows, and the condition number of the covariance matrix is bounded by some constant.
 In addition, for random designs, we use the \emph{standard} error $\norm{\Sigma^{1/2}\paren{\betahat - \betastar}}$.
 Note that when the number of samples is large enough, this error is very close to $\tfrac{1}{\sqrt{n}}\norm{X^*\paren{\betahat - \betastar}}$.

Recently, there was an extensive interest in the linear regression with the presence of adversarially chosen outliers. 
Under the assumption $X^*_i\stackrel{\text{iid}}{\sim} N(0,\Sigma)$, 
the line of works \cite{Tsakonas, NIPS2017_e702e51d, DBLP:conf/colt/SuggalaBR019, dOrsiNS21,dLN+21} studied the case when the noise $\eta$ is unbounded and chosen by an \emph{oblivious} adversary, i.e., when $\eta$ is an arbitrary vector independent of $X^*$. 
As was shown in \cite{dLN+21}, in this case, it is possible to achieve the same error (up to a constant factor) as for $\eta\sim N(0,1)^n$ 
if we only assume that $\Omega\Paren{1}$ fraction of the entries of $\eta$ have magnitude at most $1$. 
They analyzed the \emph{Huber loss} estimator with $\ell_1$ regularizer. 

Another line of works \cite{DBLP:conf/nips/BhatiaJK15, abs-1904.06288, minsker2022robust, abs-2012.06750}  assumed that $\eta$ has iid random entries that satisfy some  assumptions on the moments, but an adversarially chosen $\e$-fraction of $y^*_1,\ldots, y^*_n$ is replaced by arbitrary values by an \emph{adaptive adversary} that can observe $X^*$, $\betastar$ and $\eta$ (so the corruptions can depend on them). 
\cite{abs-2012.06750} showed that for $X^*$ with iid sub-Gaussian rows and $\eta$ with iid sub-Gaussian entries with unit variance,
Huber loss estimator with $\ell_1$ regularizer achieves an error of $O\Paren{ \sqrt{k\log(d)/n} + \e\log(1/\e)}$ with high probability. Note that the second term depends on $\e$, but not on $n$; hence, even if we take more samples, this term does not decrease (if $\e$ remains the same).
It is inherent: in the presence of the adaptive adversarial outliers, even for $X^*_i\stackrel{\text{iid}}{\sim} N(0,\Id)$ and $\eta\sim N(0,1)^n$, 
the information theoretically optimal error is $\Omega\Paren{ \sqrt{k\log(d)/n} + \e}$, so independently of the number of samples, it is $\Omega(\e)$.
In the algorithmic high-dimensional robust statistics, we are interested in estimators that are computable in time $\poly(d)$. 
There is evidence that it is unlikely that $\poly(d)$-time computable estimators can achieve error $O(\e)$ \cite{abs-1611.03473}. Furthermore, for other design distributions the optimal error can be different.

Hence the natural questions to ask are : Given an error bound $f(\e)$,  does there exist a $\poly(d)$-time computable estimator that achieves error at most $f(\e)$ with high probability? If possible, what is the smallest number of samples $n$ that is enough to achieve error $f(\e)$ in time $\poly(d)$?
In the rest of this section, we write error bounds in terms of $\e$ and mention the number of samples that is required to achieve this error. In addition, we focus on the results for the high dimensional regime, where $f(\e)$ does not depend polynomially on $k$ or $d$. 

Another line of works \cite{BalakrishnanDLS2017,LiuSLC20, PJL20, sasai-heavy, sasai-subgaussian} considered the case when the adaptive adversary is allowed to corrupt $\e$-fraction of all observed data, 
i.e. not only $y^*_1,\ldots,y^*_n$, but also $X^*_1,\ldots, X^*_n$, while the noise $\eta$ is assumed to have iid random entries that satisfy some concentration assumptions. For simplicity, to fix the scale of the noise, we formulate their results assuming that $\eta \sim N(0,1)^n$.
In non-sparse settings, \cite{PJL20} showed that in the case of identity covariance sub-Gaussian designs,
Huber loss minimization after a proper \emph{filtering} of $X^*$ achieves error 
$\tilde{O}\Paren{ \e}$ with $n \gtrsim d / \e^2$ samples. Informally speaking, filtering removes the samples $X^*_i$ that look corrupted, 
and if the distribution of the design is nice enough, then after filtering we can work with $(X^*,y^*)$ just like in the case when only $y^*$ is corrupted.
For unknown  covariance they showed a bound $O\Paren{\sqrt{\e}}$ for a large class of distributions of the design. 
If $X^*_i\stackrel{\text{iid}}{\sim} N(0,\Sigma)$ for unknown $\Sigma$, one can use $n \ge\tilde{O}\Paren{d^2/\e^2}$ samples to robustly estimate the covariance,
and achieve nearly optimal error $\tilde{O}\Paren{ \e}$ in the case (see \cite{abs-1806-00040}  for more details).

In the sparse setting, there is likely an information-computation gap for the sample complexity of this problem, even in the case of the isotropic Gaussian design $X^*_i\stackrel{\text{iid}}{\sim} N(0,\Id)$.
While it is information-theoretically possible to achieve optimal error $O(\e)$ with $n \ge \tilde{O}(k/\e^2)$ samples, achieving \emph{any} error $o(1)$ is likely to be not possible for $\poly(d)$-time computable estimators if $n \ll k^2$. 
Formal evidence for this conjecture include reductions from some version of the Planted Clique problem \cite{abs-2005.08099}, as well as a Statistical Query lower bound (\cref{prop:lb-second-power-intro}). 
For $n \ge \tilde{O}(k^2/\e^2)$, several algorithmic results are known to achieve error  $\tilde{O}\Paren{ \e}$, in particular, \cite{BalakrishnanDLS2017,LiuSLC20}, and \cite{sasai-subgaussian} for more general isotropic sub-Gaussian designs.
Similarly to the approach of \cite{PJL20}, \cite{sasai-subgaussian} used ($\ell_1$-penalized) Huber minimization after filtering $X^*$. 

The non-isotropic case (when $\Sigma \neq \Id$ is unknown) is more challenging. 
\cite{sasai-subgaussian} showed that for sub-Gaussian designs it is possible to achieve error $O\Paren{\sqrt{\e}}$ with $n \ge \tilde{O}(k^2)$ samples.
\cite{sasai-heavy} showed that  $O\Paren{\sqrt{\e}}$  error  with $n \ge \tilde{O}(k^2 + \normo{\betastar}^4 / k^2)$ samples can be achieved under some assumptions on the fourth and the eighth moments of the design distribution. 
While this result works for a large class of designs, the clear disadvantage is that the sample complexity depends polynomially on the norm of $\betastar$. For example, if all nonzero entries of $\betastar$ have the same magnitude and $\norm{\betastar}=\sqrt{d}$, then the sample complexity is $n > d^2$, which is not suitable in the sparse regime.

Prior to this work, no $\poly(d)$-time computable estimator that could achieve error $o\Paren{\sqrt{\e}}$ with unknown $\Sigma$ was known, even in the case of Gaussian designs $X^*_i\stackrel{\text{iid}}{\sim} N(0,\Sigma)$ and the Gaussian noise $\eta \sim N(0,1)^n$ (apart from the non-sparse setting, where such estimators require $n > d^2$).


\subsection{Results}\label{sub:results}

We present two main results, both of them follow from a more general statement; see \cref{thm:heavy-tailed-technical}. 
Before formally stating the results, we define the model as follows.
\begin{definition}[Robust Sparse Regression with $2$ Adversaries]\label{def:model}
	Let $n,d,k \in \N$ such that $k\le d$, $\sigma > 0$, and $\eps \in (0,1)$ is smaller than some sufficiently small absolute constant. 
	Let $\cD$ be a probability distribution in $\R^d$ with mean $0$ and covariance $\Sigma$.
	Let $y^* = X^*\betastar + \eta$, 
	where $X^*$ is an $n\times d$ random matrix with rows $X^*_i \stackrel{\text{iid}}\sim \cD$, 
	$\betastar \in \R^d$ is $k$-sparse, $\eta \in \R^n$ is independent of $X^*$ 
	and has at least $0.01 \cdot n$ entries bounded by $\sigma$ in absolute value\footnote{Our result also works for more general model, where we require $\alpha n$ entries to be bounded by $\sigma$ for some $\alpha \gtrsim \e$. The error bound in this case also depends on $\alpha$.}. 
	We denote by $\kappa(\Sigma)$ the condition number of $\Sigma$.
	
	An instance of our model is a pair $(X, y)$, where $X\in \R^{n\times d}$ is a matrix and $y\in \R^n$ is a vector such that there exists a set $\Sgood \subseteq [n]$ of size at least $(1-\eps)n$ such that for all $i \in \Sgood$, $X_i = X^*_i$ and $y_i = y^*_i$.
\end{definition}

Note that random noise models studied in prior works are captured by our model in \Cref{def:model}. 
For example, if $\eta$ has iid entries that satisfy $\E \abs{\eta_i} \le \sigma / 2$, by Markov's inequality, $\abs{\eta_i}\le \sigma$ with probability at least $1/2$, and with overwhelming probability, at least $0.01 \cdot n$ entries of $\eta$ are bounded by $\sigma$ in absolute value. In addition, Cauchy noise (that does not have the first moment) with location parameter $0$ and scale $\sigma$ also satisfies these assumptions, as well as other heavy-tailed distributions studied in literature (with appropriate scale parameter $\sigma$).

We formulate our results assuming that the condition number of the covariance is bounded by some constant: $\kappa(\Sigma) \le O(1)$. In the most general formulation (\cref{thm:heavy-tailed-technical}), we show the dependence\footnote{We did not aim to optimize this dependence.} of the number of samples and the error on $\kappa(\Sigma) $.


\subsubsection{Robust regression with heavy-tailed designs}\label{sec:intro-heavy-tailed}

We use the following notion of boundness of the moments of $\cD$:
\begin{definition}\label{def:bounded-moment}
	Let $M > 0$, $t\ge 2$ and $d \in \N$. We say that a probability distribution $\cD$ in $\R^d$ with zero mean and covariance $\Sigma$ has  $M$-\emph{bounded $t$-th moment}, if for all $u\in \R^d$
	\[
	\big(\E_{x\sim \cD}\abs{\Iprod{x, u } }^{t}\big)^{1/t}  \le M\cdot\sqrt{\norm{\Sigma}} \cdot \norm{u}\,.
	\]
\end{definition}
Note that an arbitrary linear transformation
 of an \emph{isotropic} distribution with $M$-bounded $t$-th moment also has $M$-bounded $t$-th moment.
Also note that if $t' \le t$ and a distribution $\cD$ has $M$-bounded $t$-th moment, then the $t'$-th moment of $\cD$ is also $M$-bounded.  
In particular, $M$ cannot be smaller than $1$, since the second moment cannot be $M$-bounded for $M<1$.
In addition, we will need the following (weaker) notion of the boundness of moments:
\begin{definition}\label{def:bounded-moment-entrywise}
	Let $\nu > 0$, $t\ge 2$ and $d \in \N$. We say that a probability distribution $\cD$ in $\R^d$ with zero mean and covariance $\Sigma$ has \emph{entrywise $\nu$-bounded $t$-th moment}, if
\[
\max_{j\in[d]}\;\E_{x\sim \cD}\abs{x_{j}}^{t} \le \nu^{t}\cdot\norm{\Sigma}^{t/2}\,.
\]
\end{definition}
If a distribution has $M$-bounded $t$-th moment, then it also has entrywise $M$-bounded $t$-th moment, but the converse might not be true for some distributions. 
Now we are ready to state our first result.
\begin{theorem}\label{thm:intro-heavy-tailed}
	Let $n, d, k, X,y, \eps, \cD, \Sigma, \sigma, \beta^*$ be as in \cref{def:model}.
	Suppose that $\kappa(\Sigma) \le O(1)$ and
	that for some $1 \le M\le O(1)$ and $ 1\le  \nu \le O(1)$, $\cD$ has $M$-bounded $3$-rd moment and entrywise $\nu$-bounded $4$-th moment. 
	There exists an algorithm that, given $X, y, k, \e, \sigma$, in time $\paren{n+d}^{O(1)}$ outputs $\betahat \in \R^d$ such that if
$
	n \gtrsim {k^2 \log (d)}/{\eps}\,,
$
	then with probability at least $1-d^{-10}$,
	\[
	\norm{\Sigma^{1/2}\paren{\betahat-\betastar}} \le O\paren{ \sigma \cdot \sqrt{\eps}}\,.
	\]
\end{theorem}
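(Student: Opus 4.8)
The plan is to run two phases. First, a sum-of-squares based \emph{filtering} of the covariates produces a weight vector $w\in[0,1]^n$ with $\sum_{i\in[n]}(1-w_i)\lesssim\eps n$ under which the re-weighted design behaves well on sparse directions; then we output the minimizer $\betahat$ of the weighted $\ell_1$-penalized Huber loss $\lossfunction(\beta)=\tfrac1n\sum_{i\in[n]}w_i\,\huberloss_\tau\big(y_i-\iprod{X_i,\beta}\big)+\lambda\normo{\beta}$, with Huber transition point $\tau$ of order $\sigma$ and regularization $\lambda$ of order $\sigma\sqrt{\norm\Sigma}\cdot\sqrt{\log(d)/n}$. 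After rescaling we may assume $\norm\Sigma=1$, so $\lambda_{\min}(\Sigma)\ge 1/\kappa(\Sigma)=\Omega(1)$, and we work on the $\ell_1$-cone $\cC=\{v:\normo{v_{\bar S}}\le 3\normo{v_S}\}$ with $S=\supp(\betastar)$.

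Second, I would pin down the deterministic regularity the filtering must buy. Using that $\cD$ has $M$-bounded third moment (sum-of-squares certifiable after the degree-$4$ relaxation) and entrywise $\nu$-bounded fourth moment, one shows that once $n\gtrsim k^2\log(d)/\eps$ the clean design $X^*$ satisfies, with probability $\ge 1-d^{-10}$: a restricted eigenvalue bound $\tfrac1n\sum_i\iprod{X^*_i,v}^2\in[\tfrac12,2]\cdot\Snormt{\Sigma^{1/2}v}$ on $\cC$; a bounded empirical fourth moment $\tfrac1n\sum_i\iprod{X^*_i,v}^4\lesssim\Snormt{\Sigma^{1/2}v}^2$ on $O(k)$-sparse $v$; a Paley--Zygmund anti-concentration $\Pr_i[\iprod{X^*_i,v}^2\ge c\,\Snormt{\Sigma^{1/2}v}]\ge c'$ (uniform on $\cC$ by a covering argument for sparse vectors); and $\Normi{\tfrac1n\sum_i\xi_i X^*_i}\lesssim\sigma\sqrt{\log(d)/n}$ for any fixed $\xi\in[-\tau,\tau]^n$ independent of $X^*$ (this last bound is where the entrywise fourth moment enters, via truncation plus Bernstein). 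The filtering then reweights so that, even after the adversary replaces $\eps n$ rows, the weighted analogues of these statements still hold up to constants, and --- crucially --- for every $T\subseteq[n]$ with $\sum_{i\in T}w_i\le 3\eps n$ one has $\tfrac1n\sum_{i\in T}w_i\iprod{X_i,v}^2\lesssim\eps\cdot\Snormt{\Sigma^{1/2}v}$ on $\cC$; this bound, obtained from the weighted fourth-moment certificate by \CS, is the only handle on the surviving adaptive corruptions.

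Third, with these guarantees the analysis of $\betahat$ is a Lasso-type argument adapted to the Huber loss. Write $v=\betahat-\betastar$. Optimality of $\betahat$ together with the gradient bounds below confines $v$ to $\cC$ (a standard bootstrapping argument) and yields the first-order inequality $\iprod{\nabla\lossfunction(\betahat),v}\le\lambda\big(\normo{v_S}-\normo{v_{\bar S}}\big)\le\lambda\normo{v_S}$. Convexity of $\lossfunction$ gives $\iprod{\nabla\lossfunction(\betahat)-\nabla\lossfunction(\betastar),v}\ge D$, the \emph{restricted strong convexity} term: since at least $0.01n$ good samples have $|\eta_i|\le\sigma$, along the segment $\betastar\to\betahat$ the residual of such a sample stays in the quadratic region $[-\tau,\tau]$ of $\huberloss_\tau$ for a $\min(1,\tau/(2|\iprod{X^*_i,v}|))$-fraction of the segment, so integrating and combining with Paley--Zygmund (inlier-ness depends only on $\eta$, which is independent of $X^*$) gives $D\gtrsim\min\{\Snormt{\Sigma^{1/2}v},\,\sigma\Normt{\Sigma^{1/2}v}\}-O(\sqrt\eps)\,\Normt{\Sigma^{1/2}v}\big(\Normt{\Sigma^{1/2}v}+\sigma\big)$, the correction coming from the corruptions via the key handle above. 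On the other side, $\nabla\lossfunction(\betastar)=\tfrac1n\sum_i w_i\huberderivative_\tau(\eta_i+b_i)X_i$ with $b_i$ the adaptive $y$-corruption, zero on $\Sgood$; its good part $\tfrac1n\sum_{i\in\Sgood}w_i\huberderivative_\tau(\eta_i)X^*_i$ has $\Normi\cdot\lesssim\sigma\sqrt{\log(d)/n}$ by independence of $\eta$ and $X^*$ and $|\huberderivative_\tau|\le\tau$, while the bad part contributes $|\iprod{\cdot,v}|\lesssim\sigma\,\tfrac1n\sum_{i\notin\Sgood}w_i|\iprod{X_i,v}|\lesssim\sigma\sqrt\eps\,\Normt{\Sigma^{1/2}v}$ by \CS and the filtering. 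Assembling these, and using $\normo{v_S}\le\sqrt k\,\Normt{v_S}\lesssim\sqrt k\,\Normt{\Sigma^{1/2}v}$ on $\cC$, one gets in the regime $\Normt{\Sigma^{1/2}v}\le\sigma$ (where $D$ is in its quadratic branch and the $O(\sqrt\eps)$ correction is $\lesssim\sqrt\eps\,\sigma\Normt{\Sigma^{1/2}v}$) that $\Snormt{\Sigma^{1/2}v}\lesssim(\lambda\sqrt k+\sigma\sqrt\eps)\Normt{\Sigma^{1/2}v}$, hence $\Normt{\Sigma^{1/2}v}\lesssim\sigma\sqrt{k\log(d)/n}+\sigma\sqrt\eps\lesssim\sigma\sqrt\eps$ once $n\gtrsim k^2\log(d)/\eps$. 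To reduce to this regime I would use the standard localization trick: if $\Normt{\Sigma^{1/2}v}>\sigma$, convexity of $\beta\mapsto\lossfunction(\beta)+\lambda\normo\beta$ and $\lossfunction(\betahat)+\lambda\normo{\betahat}\le\lossfunction(\betastar)+\lambda\normo{\betastar}$ let us rerun the argument at $\betastar+t^*v$ with $t^*=\sigma/\Normt{\Sigma^{1/2}v}<1$, forcing $\sigma=\Normt{\Sigma^{1/2}(t^*v)}\lesssim\sigma\sqrt{k\log(d)/n}+\sigma\sqrt\eps<\sigma$, a contradiction; thus $\Normt{\Sigma^{1/2}v}\le\sigma$ after all.

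The step I expect to be the main obstacle is twofold, and both parts come from the heavy tails. First, without sub-Gaussianity there are no net/covering bounds for the raw design, so the filtering must certify all of the above regularity from only the third/fourth moment assumptions, and the sum-of-squares program must "see" enough of this structure to excise the adaptive corruptions --- designing this filter and proving its weighted guarantees, especially $\tfrac1n\sum_{i\in T}w_i\iprod{X_i,v}^2\lesssim\eps\Snormt{\Sigma^{1/2}v}$, is the technical heart. Second, restricted strong convexity of the \emph{weighted} Huber loss is delicate: the bounded influence of $\huberderivative_\tau$ is exactly what neutralizes the oblivious unbounded noise, but it also destroys curvature outside the quadratic region, so one must simultaneously exploit the inlier fraction, a Paley--Zygmund estimate valid under only fourth-moment control, and the filtering bound on corruptions --- all while keeping every error term at scale $\sqrt\eps$ rather than $\sqrt{k\eps}$, which is precisely why the corruption contributions must be bounded through the quadratic form and \CS rather than through $\Normi\cdot\cdot\normo\cdot$.
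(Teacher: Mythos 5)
Your overall architecture (truncate/filter the covariates, then minimize a weighted $\ell_1$-penalized Huber loss, then a Lasso-type argument combining an $\ell_\infty$ gradient bound, a Cauchy--Schwarz handle on the corrupted part, and restricted strong convexity proved via anti-concentration over the good inliers with a sparse-net argument) is the same as the paper's. However, there is a genuine gap in the regularity you ask the filter to certify. Your key handle on the corruptions is derived from a \emph{weighted fourth-moment certificate}, and you also posit that the clean design satisfies $\tfrac1n\sum_i\iprod{X^*_i,v}^4\lesssim \norm{\Sigma^{1/2}v}^4$ on $O(k)$-sparse $v$ once $n\gtrsim k^2\log(d)/\eps$. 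Neither is available under the hypotheses of this theorem. The assumptions here are only an $M$-bounded \emph{third} directional moment and an \emph{entrywise} bounded fourth moment; the population fourth moment in a $k$-sparse direction can then be as large as $\Theta(k^2)\cdot\norm{\Sigma^{1/2}v}^4$ (entrywise control only bounds it through $\normo{v}^4\le k^2\norm{v}^4$), and after the truncation that heavy tails force (threshold $\tau\gtrsim\sqrt{k/\eps}$ so that only an $\eps$-fraction of inner products $\iprod{X^*_i,\beta^*}$ is affected), uniform concentration of the quartic form over sparse directions needs $n\gtrsim \tau^4 k^2\sim k^4/\eps^2$ samples. A fourth-moment-based filter is exactly the machinery of the $\eps^{3/4}$ result (which additionally assumes a \emph{certifiably} bounded fourth moment and $n\gtrsim k^4/\eps^3$); it cannot be run under the present assumptions at $n\sim k^2\log(d)/\eps$. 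Relatedly, even granting such a certificate, Cauchy--Schwarz from a fourth moment of order $\norm{\Sigma^{1/2}v}^4$ gives $\tfrac1n\sum_{i\in T}w_i\iprod{X_i,v}^2\lesssim\sqrt{\eps}\,\norm{\Sigma^{1/2}v}^2$, not $\eps\,\norm{\Sigma^{1/2}v}^2$ as you state.

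The fix — and what the paper actually does — is to filter against the \emph{quadratic} form only: a degree-$2$ sum-of-squares (basic SDP) relaxation with elastic constraints certifies $\sum_i w_i\iprod{X_i,u}^2\lesssim\norm{\Sigma^{1/2}u}^2$ on the elastic ball (the population second moment is trivially certifiable), and the corrupted part of the gradient is then bounded by Cauchy--Schwarz against this weighted second moment, $\sum_{i\in\Sbad}w_i\abs{\iprod{X_i,v}}\lesssim\sqrt{\eps}\,\norm{\Sigma^{1/2}v}$, which is exactly the $\sqrt{\eps}$ error level claimed. For strong convexity no handle on the corrupted quadratic form is needed at all: the Huber loss is convex, so corrupted terms contribute nonnegatively and can simply be dropped; the curvature is established only over the good inliers with small noise (a set independent of $X^*$), via Bernstein plus nets for $K$-sparse vectors and then an extension from sparse vectors to the elastic ball — your plan to run the covering argument "uniformly on the cone" should likewise go through sparse vectors first, since the cone itself has no small dense nets. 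Finally, your $\ell_\infty$ bound on the clean gradient omits the truncation bias $\normi{\E X'_i(\tau)}\lesssim\norm{\Sigma}/\tau$ (the truncated design is no longer mean-zero), which at the admissible $\tau$ dominates $\sigma\sqrt{\log(d)/n}$; accordingly $\lambda$ must be calibrated to roughly $\sigma\sqrt{\eps/k}$ (as in the paper) rather than $\sigma\sqrt{\log(d)/n}$, which still yields $\lambda\sqrt{k}\lesssim\sigma\sqrt{\eps}$ and the stated error.
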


Let us compare \cref{thm:intro-heavy-tailed}  with the state of the art. For heavy-tailed designs, prior to this work, the best estimator was \cite{sasai-heavy}. That estimator also achieves error $O(\sigma\sqrt{\e})$, but its sample complexity depends polynomially on the norm of $\betastar$, while our sample complexity does not depend on it. 
In addition, they require the distribution to have bounded $4$-th moment (as opposed to our $3$-rd moment assumption), and bounded entrywise $8$-th moment (as opposed to our entrywise $4$-th moment assumption). Finally, our noise assumption is weaker than theirs since they required the entries of $\eta$ to be iid random variables such that $\E \abs{\eta_i} \le \sigma'$ for some $\sigma' > 0$ known to the algorithm designer; as we mentioned after \cref{def:model}, it is a special case of the oblivious noise with $\sigma = 2\sigma'$.

Let us also discuss our assumptions and possibilities of an improvement of our result. 
The third moment assumption can be relaxed, more precisely, it is enough to require the $t$-th moment to be bounded, where $t$ is an arbitrary constant greater than $2$, and in this case the sample complexity is increased by a constant factor\footnote{This factor depends on $M$ and $\kappa(\Sigma)$, as well as on $t$. In particular, it goes to infinity when $t\to 2$.}; see \cref{thm:heavy-tailed-technical} for more details. The entrywise fourth moment assumption is not improvable with our techniques, that is, we get worse dependence on $k$ if we relax it to, say, the third moment assumption. 

The dependence of $n$ on $\e$ is not improvable with our techniques\footnote{Some dependence of $n$ on $\e$ is inherent, but potentially our dependence could be suboptimal. For sub-exponential distributions it is possible to get better dependence, see \cref{remark:sub-exponential} and \cref{sec:subexp-designs}.}. 
The dependence of the error on $\sigma$ is optimal. 
The dependence of $n$ on $k$ and the error on $\sqrt{\e}$ is likely to be (nearly) optimal: Statistical Query lower bounds (\cref{prop:lb-second-power-intro} and \cref{prop:lb-fourth-power-intro}) provide evidence that for $\sigma = \Theta(1)$, it is unlikely that polynomial-time algorithms can achieve error $o(1)$ if $n \ll k^2$, or error $o(\sqrt{\e})$ if $n \ll k^4$. 


\begin{remark}\label{remark:other-errors}
Our results also imply bounds on other types of error studied in literature. In particular,  observe that $\norm{{\betahat-\betastar}} \le \norm{\Sigma^{1/2}\paren{\betahat-\betastar}}/\sqrt{\lambda_{\min}(\Sigma)}$, where $\lambda_{\min}(\Sigma)$ is the minimal eigenvalue of $\Sigma$. In addition, our estimator also satisfies 
$\normo{{\betahat-\betastar}} \le  O\paren{\norm{\Sigma^{1/2}\paren{\betahat-\betastar}}\cdot \sqrt{k/\lambda_{\min}(\Sigma)}}$. The same is also true for our estimator from \cref{thm:intro-heavy-tailed-sos} below. These relations between different types of errors are standard for sparse regression, and they are not improvable.
\end{remark}

\subsubsection{Beyond $\sqrt{\eps}$ error}
Prior to this work, no polynomial-time algorithm for (non-isotropic) robust sparse regression was known to achieve error $o(\sigma\sqrt{\e})$, even for Gaussian designs $X^*_i\stackrel{\text{iid}}{\sim} N(0,\Sigma)$ and Gaussian $\eta \sim N(0,\sigma)^n$. In this section we show that for a large class of designs, it is possible to achieve error $o(\sigma\sqrt{\eps})$ in polynomial time, even when $\eta$ is chosen by an oblivious adversary. 
For our second result, we require not only some bounds on the moments of $\cD$, but also their certifiability in the \emph{sum-of-squares proof system}:
\begin{definition}\label{def:certifiable-fourth moment}
	Let $M > 0$ and let $\ell \ge 4$ be an even number. We say that a probability distribution $\cD$ in $\R^d$ with zero mean and covariance $\Sigma$ has \emph{$\ell$-certifiably $M$-bounded $4$-th moment}, if there exist polynomials $h_1,\ldots, h_m \in \R[u_1,\ldots, u_d]$ 
	of degree at most $\ell/2$ such that
	\[
	{\E_{x\sim \cD}{\Iprod{x, u } }^{4}}  + \sum_{i=1}^m h_i^2(u)
	= M^{4}\cdot \norm{\Sigma}^2 \cdot \norm{u}^4\,.
	\]
\end{definition}
\cref{def:certifiable-fourth moment}  with arbitrary $\ell$ implies \cref{def:bounded-moment} (with the same $M$). Under standard complexity-theoretic assumptions, there exist distributions with bounded moments that are not $\ell$-certifiably bounded even for very large $\ell$ \cite{abs-1903.07870}.
Note that similarly to \cref{def:bounded-moment}, an arbitrary linear transformation of an isotropic distribution with $\ell$-certifiably $M$-bounded $4$-th moment also has $\ell$-certifiably $M$-bounded $4$-th moment. 

Distributions with certifiably bounded moments are very important in algorithmic robust statistics. 
They were extensively studied in literature, e.g. 
\cite{abs-1711.07465, DBLP:journals/corr/abs-1711-11581, DBLP:conf/stoc/Hopkins018, abs-1903.07870, dk-sos}.

Now we can state our second result.
\begin{theorem}\label{thm:intro-heavy-tailed-sos}
	Let $n, d, k, X,y, \eps, \cD, \Sigma, \sigma, \beta^*$ be as in \cref{def:model}.
	Suppose that $\kappa(\Sigma) \le O(1)$, and 
	that for some $M\ge 1$, some even number $\ell \ge 4$, and $1 \le \nu \le O(1)$, $\cD$ has $\ell$-certifiably $M$-bounded $4$-th moment and entrywise $\nu$-bounded $8$-th moment. 
	There exists an algorithm that, given $X, y, k, \e, \sigma, M, \ell$, in time $\paren{n+d}^{O(\ell)}$ outputs $\betahat \in \R^d$ such that if
$
	n \gtrsim {{M^4 \cdot k^4\log (d) }}/{\eps^{3}}\,,
$
	then with probability at least $1-d^{-10}$,
	\[
	\norm{\Sigma^{1/2}\paren{\betahat-\betastar}} \le O\paren{M\cdot \sigma \cdot\eps^{3/4} }\,.
	\]
\end{theorem}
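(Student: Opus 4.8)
The plan is to analyze the two-stage estimator announced in the introduction: a sum-of-squares filtering step applied to the covariates, followed by weighted $\ell_1$-penalized Huber minimization. Concretely, the first stage runs an SOS-based filter on $X_1,\dots,X_n$ (through the degree-$O(\ell)$ sparse SOS relaxation) to produce weights $w_1,\dots,w_n\in[0,1/n]$ with $\sum_i w_i\ge 1-O(\eps)$, and the second stage outputs a minimizer $\betahat$ of $\sum_i w_i\,\huberloss_h\bigl(y_i-\Iprod{X_i,\beta}\bigr)+\lambda\normo{\beta}$, with Huber scale $h=\Theta(\sigma)$ and regularization $\lambda\asymp M\sigma\eps^{3/4}/\sqrt{k}$. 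Before touching the algorithm I would record the deterministic events on the clean data that hold with probability $1-d^{-10}$ once $n\gtrsim M^4k^4\log d/\eps^3$: (i) a restricted eigenvalue bound, $\tfrac1n\sum_{i\in\Sgood}\Iprod{X_i,v}^2\gtrsim\Norm{\Sigma^{1/2}v}^2$ for all $v$ with $\normo{v}\lesssim\sqrt{k/\lmin(\Sigma)}\,\Norm{\Sigma^{1/2}v}$; (ii) a \emph{certifiable} restricted fourth-moment bound — obtained by combining the $\ell$-certifiable $M$-bounded fourth moment of $\cD$ with the entrywise $\nu$-bounded eighth moment (the latter drives the uniform concentration of the empirical degree-$4$ tensor over $O(k)$-sparse directions) — namely that there is a degree-$O(\ell)$ SOS proof that $\tfrac1n\sum_{i\in\Sgood}\Iprod{X_i,v}^4\le O(M^4)\Norm{\Sigma^{1/2}v}^4$, with slack small enough that it survives filtering; and (iii) the oblivious-noise estimate $\bigl|\sum_i w_i\,\huberderivative_h(\eta_i)\Iprod{X_i,v}\bigr|\lesssim\lambda\normo{v}$ on the clean small-noise samples and the analogous bound for the saturated (large-noise) clean samples, proved by exploiting the independence of $\eta$ from $X^*$, as in \cite{dLN+21,sasai-subgaussian,sasai-heavy}.

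I would then invoke the universal-filtering guarantee adapted to the sparse SOS relaxation: since (ii) holds for $\Sgood$ and $|\Sbad|\le\eps n$, the filter terminates with $w$ such that $\sum_{i\in\Sgood}w_i\ge 1-O(\eps)$ — so that (i) upgrades to $\sum_{i\in\Sgood}w_i\Iprod{X_i,v}^2\gtrsim\Norm{\Sigma^{1/2}v}^2$ on the $O(k)$-sparse cone — and such that the degree-$O(\ell)$ SOS certificate of $\sum_{i=1}^n w_i\Iprod{X_i,v}^4\le O(M^4)\Norm{\Sigma^{1/2}v}^4$ survives on the reweighted sample.

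With these in place the core of the argument is the analysis of $\betahat$. Set $v=\betahat-\betastar$ and $S=\supp(\betastar)$. Convexity of the weighted loss $L$ together with first-order optimality gives $0\le\Iprod{\nabla L(\betahat)-\nabla L(\betastar),v}\le\lambda(\normo{v_S}-\normo{v_{S^c}})-\Iprod{\nabla L(\betastar),v}$. To bound $|\Iprod{\nabla L(\betastar),v}|$, its clean-noise part is $\lesssim\lambda\normo{v}$ by (iii), and its adaptive part is, since $|\huberderivative_h|\le h$, at most $h\sum_{i\in\Sbad}w_i|\Iprod{X_i,v}|$; here is the crucial estimate — by \holders inequality with exponents $(4/3,4)$ and the \emph{certified} fourth-moment bound,
\[
\sum_{i\in\Sbad}w_i\,\bigl|\Iprod{X_i,v}\bigr|\;\le\;\Bigl(\sum_{i\in\Sbad}w_i\Bigr)^{3/4}\Bigl(\sum_{i\in\Sbad}w_i\Iprod{X_i,v}^4\Bigr)^{1/4}\;\le\;\eps^{3/4}\cdot O(M)\,\Norm{\Sigma^{1/2}v}\,,
\]
so the adaptive part is $O(M\sigma)\,\eps^{3/4}\Norm{\Sigma^{1/2}v}$. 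This is exactly the step that improves the exponent from $1/2$ to $3/4$: a bare \CS bound on the $\eps$-fraction sum would only give $\eps^{1/2}\Norm{\Sigma^{1/2}v}$, but interpolating it against the bounded fourth moment gains the extra $\eps^{1/4}$. Feeding this and $\normo{v_S}\le\sqrt{k}\Norm{v}_2$ back yields the cone inequality $\normo{v_{S^c}}\lesssim\normo{v_S}+(M\sigma\eps^{3/4}/\lambda)\Norm{\Sigma^{1/2}v}$, hence, by the choice of $\lambda$, $\normo{v}\lesssim\sqrt{k/\lmin(\Sigma)}\,\Norm{\Sigma^{1/2}v}$, placing $v$ in the regime of (i)--(ii). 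Now I lower-bound $\Iprod{\nabla L(\betahat)-\nabla L(\betastar),v}$ by restricted strong convexity of the weighted Huber loss: the clean samples with $|\eta_i|\le\sigma$ (which still carry $\Omega(1)$ weight after filtering, since the filter only sees $X$) and $|\Iprod{X_i,v}|\le\sigma$ — all but an $O(M^4\Norm{\Sigma^{1/2}v}^4/\sigma^4)$ fraction, negligible when $\Norm{\Sigma^{1/2}v}\ll\sigma$ — lie in the quadratic regime of $\huberloss_h$ and contribute $\gtrsim\sum w_i\Iprod{X_i,v}^2\gtrsim\Norm{\Sigma^{1/2}v}^2$, while the samples outside this set are absorbed by the same \holders-plus-certified-moment estimate. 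Combining the upper and lower bounds gives $\Norm{\Sigma^{1/2}v}^2\lesssim O(M\sigma\eps^{3/4})\Norm{\Sigma^{1/2}v}+\lambda\sqrt{k/\lmin(\Sigma)}\,\Norm{\Sigma^{1/2}v}$, i.e. $\Norm{\Sigma^{1/2}v}\lesssim M\sigma\,\eps^{3/4}$ for $\kappa(\Sigma)=O(1)$ and $\lambda$ as chosen, with the bootstrap hypothesis $\Norm{\Sigma^{1/2}v}\ll\sigma$ closing because $M\eps^{3/4}$ is small. The running time is $(n+d)^{O(\ell)}$, dominated by the degree-$O(\ell)$ SOS program in the filter.

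The main obstacle lies entirely in establishing event (ii) together with a filter-compatible version of it: proving that with $n\gtrsim M^4k^4\log d/\eps^3$ clean samples the empirical fourth moment on the $O(k)$-sparse cone is SOS-certifiably bounded by $O(M^4)\Norm{\Sigma^{1/2}v}^4$ with slack small enough (roughly $1+O(\eps^{3/2})$, so that it does not inflate the $\eps^{3/4}$ error after filtering), and that the filter can excise the $\eps$-fraction of bad covariates without damaging this certificate or eating more than $O(\eps)$ of the good mass. Because the certification must live inside the $k$-sparse SOS relaxation, one cannot simply quote dense concentration results; the entrywise eighth-moment hypothesis enters precisely in the union-bound / matrix-concentration step over the $\binom dk$ coordinate blocks, and it is this step — together with the corresponding heavy-tailed oblivious-noise estimate in (iii), which likewise needs a constant that does not degrade as $\eps\to0$ — that forces the $M^4$ and the $\eps^{-3}$ in the sample complexity. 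A secondary subtlety is reconciling the restricted strong convexity of the \emph{weighted} Huber loss with the oblivious noise: one must ensure the $\Omega(n)$ clean small-noise samples retain $\Omega(1)$ weight after filtering and dominate in the quadratic regime, which is handled by the same interpolation idea plus the fact that filtering touches only the covariates.
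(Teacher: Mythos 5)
Your overall architecture (SOS-based filtering of the covariates, then weighted $\ell_1$-penalized Huber minimization, with the $\e^{3/4}$ gain coming from \holders{} inequality with exponents $(4/3,4)$ against a certified fourth-moment bound on the filtered sample) is the same as the paper's, but two steps you rely on would fail as stated. First, you never truncate the entries of $X$. Under only an entrywise $\nu$-bounded $8$-th moment the design is heavy-tailed, and the two concentration inputs you need --- the clean-gradient bound $\normi{\tfrac1n\sum_i \huberderivative(\eta_i)X^*_i}\lesssim \e^{3/4}/\sqrt{k}$ and the entrywise closeness of $\tfrac1n\sum_i (X^*_i)^{\otimes 4}$ to its expectation at scale roughly $M^4\norm{\Sigma}^2/k^2$, which is what makes the SOS certificate of the empirical fourth moment go through --- cannot be obtained by a Bernstein-plus-union-bound argument, because each entry of the empirical $4$-tensor is an average of unbounded summands with only two finite moments. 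The paper first thresholds the entries of $X$ at a level $\tau\gtrsim \sqrt{k}/\e^{3/4}$, treats the (at most $\e n$) affected samples as extra adaptive corruptions, and the resulting bias term $\sim 1/\tau$ in the gradient (\cref{lem:gradient-bound-truncated}) together with the $\tau^4/n$ term in the tensor concentration forces $n\gtrsim \tau^4 k^2\approx k^4/\e^3$. Your account instead attributes the $\e^{-3}$ to a slack requirement of order $1+O(\e^{3/2})$ in the certificate and a union bound over $\binom{d}{k}$ blocks; that is not where the sample complexity comes from, and without truncation the concentration you invoke there is not available.

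Second, you plan to certify the restricted fourth-moment bound ``inside the $k$-sparse SOS relaxation'', i.e.\ over $O(k)$-sparse directions, but the vector at which the \holders{} step must be applied is $v=\betahat-\betastar$, which is not sparse: after the cone inequality it only satisfies $\normo{v}\lesssim\sqrt{k}\,\Norm{\Sigma^{1/2}v}$, i.e.\ it lies in the elastic ball $\cE_{k'}(r)$. The paper points out that the sparsity-constraint relaxation is not known to capture $u^{\otimes 4}$ for elastic-ball vectors, and this is precisely why it introduces the elastic constraints $\cA_K$ and proves the degree-$O(\ell)$ certificate from them (\cref{lem:sos-elastic-polynomials,lem:sos-moment-bound}); without this, neither the filter's guarantee nor your key estimate $\sum_{i\in\Sbad}w_i\abs{\Iprod{X_i,v}}\le\e^{3/4}\cdot O(M)\Norm{\Sigma^{1/2}v}$ applies to the actual error direction. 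Relatedly, the restricted-eigenvalue/strong-convexity event that you take as a given high-probability fact is the most technical part of the paper for heavy-tailed designs (\cref{lem:strong-convexity-heavy-tailed}): it requires an additional row-norm truncation, Bernstein's inequality for covariance on the noise-determined index set, an $\e$-net over $k''$-sparse vectors combined with a bottom-$0.99$-fraction trick to handle the $v$-dependent set $\set{i:\abs{\Iprod{X_i,v}}\le 1}$, and the convex-hull argument of \cref{lem:elastic-concentration} to pass from sparse vectors to the elastic ball; asserting it directly under fourth/eighth-moment assumptions leaves this unproven.
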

In particular, in the regime $M \le O(1)$, as long as $n\ge \tilde{O}\paren{k^4/ \eps^{3}}$, the algorithm recovers $\betastar$ from $(X, y)$ up to error $O(\sigma \e^{3/4})$ (with high probability). If $\ell \le O(1)$, the algorithm runs in polynomial time.
Note that in this theorem we do not assume that $M$ is constant as opposed to \cref{thm:intro-heavy-tailed}
since for some natural classes of distributions, only some bounds on $M$ that depend on $d$ are known. 

The natural question is what distributions have certifiably bounded fourth moment with $\ell \le O(1)$. 
First, these are products of one-dimensional distributions with $M$-bounded fourth moment
, and their linear transformations (with $\ell=4$). 
Hence, linear transformations of products of one-dimensional distributions with $O(1)$-bounded $8$-th moment satisfy the assumptions of the theorem with $M\le O(1)$ and $\ell=4$. 
Note that such distributions might not even have a $9$-th moment.
This class also includes Gaussian distributions (since they are linear transformations of the $N(0,1)^d$ and $N(0,1)$ has $O(1)$-bounded $8$-th moment).

Another important class is the distributions that satisfy \emph{Poincar\'e inequality}.
Concretely, these distributions, for some $C_P \ge 1$, satisfy 
$\Var_{x\sim \cD} g(x) \le C_P^2\cdot \norm{\Sigma}\cdot \E_{x\sim \cD} \norm{\nabla g(x)}^2_2$ for all continuously differentiable functions $g: \R^{d} \to \R$. 
\cite{abs-1711.07465} showed that such distributions have $4$-certifiably $O(C_P)$-bounded fourth moment. 
We will not further discuss Poincar\'e inequality, and focus on the known results on the classes of distributions satisfy this inequality. 

The Kannan-Lov\'asz-Simonovits (KLS) conjecture from convex geometry says that $C_P$ is bounded by some universal constant for \emph{all} log-concave distributions. 
Recall that a distribution $\cD$ is called log-concave if for some convex function $V:\R^d \to \R$, the density of $\cD$ is proportional to $e^{-V(x)}$.
Apart from the Gaussian distribution, examples include uniform distributions over convex bodies, the Wishart distribution and the Dirichlet distribution (\cite{Prekopa71}, 
see also \cite{KotzBJ00} for further examples). 
In recent years there has been a big progrees towards the proof of the KLS conjecture. 
\cite{abs-2011.13661} showed that $C_P \le d^{o(1)}$, and since then, the upper bound has been further significantly improved. 
The best current bound is $C_P \le O\paren{\sqrt{\log d}}$  obtained by \cite{abs-2303.14938}. 
This bound implies that for all log-concave distributions whose covariance has bounded condition number, the error of our estimator is $O\paren{\sigma \sqrt{\log d} \cdot \e^{3/4}}$. Hence for $\e \le o\paren{1/\log^2(d)}$ and $\sigma \le O(1)$, the error is $o(\sqrt{\e})$. Note that if the KLS conjecture is true, the error of our estimator is $O(\sigma \e^{3/4})$ for all log-concave distributions with $\kappa(\Sigma) \le O(1)$, without any restrictions on $\e$ (except the standard $\e \lesssim 1$).

Finally, a recent result \cite{diakonikolas2024soscertifiabilitysubgaussiandistributions} shows that \emph{all} sub-Gaussian distributions have certifiably bounded fourth moment\footnote{They proved a stronger statement: All sub-Gaussian distributions are \emph{certifiably sub-Gaussian}.}. In particular, it implies that for all sub-Gaussian distributions of the design, our algorithm achieves error $O(\sigma\e^{3/4})$.

\begin{remark}\label{remark:higher-moments}
\cref{thm:intro-heavy-tailed-sos} 
can be generalized as follows: 
If the $(2t)$-th moment of $\cD$ is $M$-bounded for a constant $t \in \N_{\ge 2}$, if this bound can be certified by a constant degree sum-of-squares proof\footnote{See \cref{def:certifiable-moment} for formal definition.}, and if $\cD$ has entrywise $(4t)$-th $O(1)$-bounded moment, then with high probability, there is a $\poly(d)$-time computable estimator that achieves error 
$O\paren{M \sigma \e^{1 - 1/(2t)}}$ as long as  $n \gtrsim  M^4 k^{2t} \log (d) / \e^{2t - 1}$. See \cref{thm:heavy-tailed-technical} for more details.
\end{remark}

\begin{remark}\label{remark:sub-exponential}
	The dependence of $n$ on $\e$ can be improved under the assumption that $\cD$ is a \emph{sub-exponential} distribution. In particular, all log-concave distributions are sub-exponential. Under this additional assumption, in order to achieve the error $O(\sigma \sqrt{\e})$, 
	it is enough to take $n \gtrsim k^2 \polylog(d) + k\log(d)/\e$, and to achieve error $O(M\sigma {\e}^{3/4})$, it is enough to take $n \gtrsim k^4 \polylog(d) + k\log(d)/\e^{3/2}$ samples (assuming, as in \cref{thm:intro-heavy-tailed-sos}, that the fourth moment is $M$-certifiably bounded).
\end{remark}

\subsubsection{Lower bounds}

We provide \emph{Statistical Query} (SQ) lower bounds by which our estimators likely have optimal sample complexities needed to achieve the errors $O(\sqrt{\e})$ and $o(\sqrt{\e})$, even when the design and the noise are Gaussian. 
SQ lower bounds are usually interpreted as a tradeoff between the time complexity and sample complexity of estimators; see \Cref{sec:lower-bounds} and \cite{abs-1611.03473} for more details. 
Our proofs are very similar to prior works \cite{abs-1611.03473,abs-1806-00040,dk-sos} since as was observed in \cite{abs-1806-00040}, lower bounds for mean estimation can be used to prove lower bounds for linear regression, and we use the lower bounds for sparse mean estimation from \cite{abs-1611.03473,dk-sos}.

Let us fix the scale of the noise $\sigma = 1$. 
The first proposition shows that already for $\Sigma = \Id$, $k^2$ samples are likely to be necessary to achieve error $o(1)$:
\begin{proposition}[Informal, see \Cref{prop:lb-second-power-formal}]\label{prop:lb-second-power-intro}
		Let $n, d, k, X,y, \eps, \cD, \Sigma, \sigma, \beta^*$ be as in \Cref{def:model}.
		Suppose that $\cD= N(0, \Id)$ and $\eta \sim N(0,\tilde{\sigma}^2)^n$, where $0.99 \le \tilde{\sigma}\le 1$. 
		Suppose that $d^{\,0.01}\le k \le \sqrt{d}$, $\e \gtrsim \frac{1}{\sqrt{\log d}}$, and $n \le k^{1.99}$. 
		Then for each SQ algorithm $A$ that finds $ \hat{\beta}$
		such that $\norm{{{\beta^*}} - \hat{\beta}} \le 10^{-5} $, the simulation of $A$ with $n$ samples has to simulate super-polynomial ($\exp(d^{\Omega(1)})$) number of queries.
\end{proposition}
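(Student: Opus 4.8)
The plan is to transfer the Statistical Query lower bound for \emph{robust sparse mean estimation} of Gaussians from \cite{abs-1611.03473,dk-sos}, exploiting the observation of \cite{abs-1806-00040} that mean-estimation lower bounds yield regression lower bounds. Recall the shape of the mean-estimation bound: for $d^{0.01}\le k\le\sqrt d$ and $\e\gtrsim 1/\sqrt{\log d}$ there is a set $\cV$ of $k$-sparse unit vectors in $\R^d$ with $\abs{\cV}\ge\exp(d^{\Omega(1)})$, ``spread out'' in the sense relevant to the sparse SQ framework, a $v$-independent reference distribution, and a planted magnitude $\rho=\rho(\e)=\Theta(1)$, such that the hidden-direction distributions $D_v$ — each an $\e$-corruption of $N(\rho v,\Id)$ — have pairwise SQ correlations (relative to the reference) small enough that any SQ algorithm which, from $n\le k^{1.99}$ samples, outputs $\hat\mu$ with $\norm{\hat\mu-\rho v}<\rho/2$ for every $v\in\cV$ must make $\exp(d^{\Omega(1)})$ queries. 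The regime $\e\gtrsim 1/\sqrt{\log d}$ is exactly what allows $\rho$ to be a constant (it bounds the moment-matching degree the construction must pay for); we take $\rho\ge 3\cdot 10^{-5}$.

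The core step is to turn $\{D_v\}$ into a family $\{P_v\}$ of \emph{regression} instances obeying \cref{def:model}. For each $v\in\cV$, define $P_v$ on pairs $(X,y)\in\R^d\times\R$ whose honest $(1-\e)$-fraction is exactly the clean model $X^*\sim N(0,\Id)$, $y^*=\iprod{X^*,\rho v}+\eta$ with $\eta\sim N(0,\tilde\sigma^2)$ independent of $X^*$ — so $\betastar=\rho v$, which is $k$-sparse with $\norm{\betastar}=\Theta(1)$ and $\Sigma=\Id$ — while the remaining $\e$-fraction is chosen, using the same one-dimensional moment-matching gadget that underlies $\{D_v\}$ (now applied in the plane spanned by $\iprod{X,v}$ and $y$), so that all low-degree moments of $P_v$ in the relevant directions coincide with those of a fixed $v$-independent reference. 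One verifies that each $P_v$ is a legitimate instance of \cref{def:model} with $\cD=N(0,\Id)$, $\eta\sim N(0,\tilde\sigma^2)^n$, and that the pairwise SQ correlations of $\{P_v\}$ relative to the reference are controlled just as for $\{D_v\}$ (the extra label coordinate costs only an $O(1)$ factor). The sparse SQ-dimension lemma of \cite{abs-1611.03473,dk-sos} then applies essentially verbatim: no SQ algorithm using $n\le k^{1.99}$ samples can, for every $v\in\cV$, output $\betahat$ with $\norm{\betahat-\rho v}\le\rho/2$ using only $\poly(d)$ queries.

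To finish: if $A$ is as in the proposition — from $n\le k^{1.99}$ samples of an instance of \cref{def:model} with the stated $\e$ it outputs, with high probability, $\betahat$ with $\norm{\betastar-\betahat}\le 10^{-5}$ using $\poly(d)$ queries — run $A$ on $P_v$. Since distinct near-orthogonal $v,v'\in\cV$ have $\norm{\rho v-\rho v'}\ge\rho\ge 3\cdot 10^{-5}>2\cdot 10^{-5}$, any $\betahat$ within $10^{-5}$ of $\betastar=\rho v$ pins down $v$, contradicting the previous paragraph; hence $A$ must make $\exp(d^{\Omega(1)})$ queries. The passage from ``succeeds with high probability on $n$ random samples'' to ``succeeds in the SQ model at tolerance $\gtrsim 1/\sqrt n$'' is standard and is handled as in \cite{abs-1611.03473,abs-1806-00040}.

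The main obstacle is the construction of $P_v$: one must plant the $k$-sparse direction into the \emph{noise structure} of a regression instance while keeping the honest $(1-\e)$-fraction \emph{exactly} a centered isotropic Gaussian design with independent $N(0,\tilde\sigma^2)$ labels — the naive maps (conditioning on a value of $y$, or appending $y$ to the mean-estimation vector) distort either the design covariance or the law of $\eta$, so the $\e$-corruption/gadget must be set up to absorb those distortions, and one must re-verify that the regression family still has the spreading and near-orthogonality the sparse SQ-dimension bound requires. Everything downstream — the packing count $\exp(d^{\Omega(1)})$ for $d^{0.01}\le k\le\sqrt d$, the correlation decay, and the translation of ``$n\le k^{1.99}$'' into a super-polynomial query bound — is a direct transcription of \cite{abs-1611.03473,dk-sos}.
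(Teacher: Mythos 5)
Your plan is essentially the paper's proof: the paper also reduces to the sparse-mean-estimation SQ framework by conditioning on $y$ (so that $X\mid y$ is Gaussian with mean $\tfrac{y}{\sigma_y^2}\beta^*$), plants the $\e$-corrupted one-dimensional moment-matching gadget along $v$ in exactly the way you describe (this is \cref{fact:distributions-correlation}, i.e.\ Lemma E.4 of \cite{abs-1806-00040}, with the explicit mixture $(1-\e)N(\mu,1-\norm{\beta^*}^2)+\e N(-\tfrac{1-\e}{\e}\mu,1)$), and then combines the $k$-sparse packing of \cref{fact:sparse-distant-vectors} with the pairwise-correlation bound of \cref{fact:sq-main}, using $\norm{\beta^*}$ a small constant and $\e\gtrsim 1/\sqrt{\log d}$ to keep the $e^{O(1/\e^2)}$ factor polynomial in $d$. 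The construction step you flag as the main obstacle is resolved in the paper exactly along the lines you anticipate, so your proposal is correct and follows the same route.
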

Note that under assumptions of \Cref{prop:lb-second-power-intro}, \Cref{thm:intro-heavy-tailed} implies that if we take $n\ge k^2\polylog(d)$ samples, the estimator achieves error $O(\sqrt{\e})$ that is $o(1)$ if $\e \to 0$ as $d\to \infty$. 

The second proposition shows that for $\tfrac{1}{2} \preceq \Sigma \preceq \Id$, $k^4$ samples are likely to be necessary to achieve error $o(\sqrt{\e})$:
\begin{proposition}[Informal, see \cref{prop:lb-fourth-power-formal}]\label{prop:lb-fourth-power-intro}
	Let $n, d, k, X,y, \eps, \cD, \Sigma, \sigma, \beta^*$ be as in \cref{def:model}.
	Suppose that $\cD= N(0,\Sigma)$ for some $\Sigma$ such that $\tfrac{1}{2} \preceq \Sigma \preceq \Id$, 
	and $\eta \sim N(0,\tilde{\sigma}^2)^n$, where $0.99 \le \tilde{\sigma}\le 1$. 
	Suppose that $d^{\,0.01}\le k \le \sqrt{d}$, $\e \gtrsim \frac{1}{{\log d}}$, and $n \le k^{3.99}$. 
		Then for each SQ algorithm $A$ that finds $ \hat{\beta}$
such that $\norm{{{\beta^*}} - \hat{\beta}} \le 10^{-5}\sqrt{\e} $, the simulation of $A$ with $n$ samples has to simulate super-polynomial ($\exp(d^{\Omega(1)})$) number of queries.
\end{proposition}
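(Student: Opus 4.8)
The plan is to derive this Statistical Query lower bound by reduction from a known SQ lower bound for robust \emph{sparse mean estimation}, following the observation of \cite{abs-1806-00040} that mean-estimation lower bounds transfer to linear regression, and feeding in the sparse-mean hard instances built by degree-four moment matching in \cite{abs-1611.03473,dk-sos}. The input I would invoke (this is the engine of the formal version, \cref{prop:lb-fourth-power-formal}) is: there is a family $\set{D_v}$ of distributions on $\R^d$, indexed by $k$-sparse unit vectors $v$ whose supports range over an $\exp(d^{\Omega(1)})$-size packing with pairwise distances $\Omega(1)$, such that each $D_v$ is an $\e$-corruption of a fixed symmetric reference distribution, has covariance within a constant factor of $\Id$, has mean $\Theta(\sqrt\e)\cdot v$, and agrees with the reference distribution in every moment up to degree four; the generic SQ-dimension lemma of \cite{abs-1611.03473} then forces any SQ algorithm that estimates $\mathrm{mean}(D_v)$ to within $o(\sqrt\e)$ while using $n\le k^{3.99}$ samples (modeled, as usual, by answering queries to tolerance $\gtrsim n^{-1/2}$) to simulate $\exp(d^{\Omega(1)})$ queries. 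The windows $\e\gtrsim 1/\log d$ and $d^{0.01}\le k\le\sqrt d$ are exactly those under which this construction goes through.

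Next I would realize each $D_v$ as a robust sparse regression instance as in \cref{def:model}. Fix a design covariance $\Sigma$ with $\tfrac12\preceq\Sigma\preceq\Id$ (the non-isotropy is essential here — with $\Sigma=\Id$ the problem already admits error $O(\e)=o(\sqrt\e)$ from only $k^2$ samples, matching the weaker \cref{prop:lb-second-power-intro} — so $\Sigma\ne\Id$ must be exploited in realizing the hard family), take Gaussian noise $\eta\sim N(0,\tilde\sigma^2)^n$ with $\tilde\sigma$ in the prescribed window, and set $\betastar_v\seteq c\sqrt\e\cdot v$ for a small absolute constant $c$, which is $k$-sparse and makes the $\betastar_v$ pairwise $\Omega(\sqrt\e)$-separated. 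The bridge to mean estimation (following \cite{abs-1806-00040}) is that the relevant per-sample statistic — e.g.\ the pseudo-gradient $Z_i=y_iX_i$, which has $\E[Z_i]=\Sigma\betastar_v$ on uncorrupted samples, a $\Theta(\sqrt\e)$-magnitude support-preserving linear image of $v$ — carries the corrupted-mean structure of $D_v$: the $\e$-fraction of adversarially chosen $(X_i,y_i)$ can be used to mask the (nonzero, $v$-dependent) low-degree moments of the clean Gaussian law of $(X_i,y_i)$, and it is precisely here that the corruption budget and the degree-four matching of $\set{D_v}$ are spent. Then an SQ algorithm returning $\betahat$ with $\Norm{\betahat-\betastar_v}\le 10^{-5}\sqrt\e$ yields $\Norm{\betahat/(c\sqrt\e)-v}<\tfrac12$, identifying $v$; and an SQ query $\phi(X,y)\in[-1,1]$ to the regression distribution is, conditioned on the design, a bounded functional computable from $O(1)$ bounded queries to the $Z$-distribution, so a purported SQ regression algorithm is simulated by an SQ mean-estimation algorithm with only a polynomial blow-up in query count and no increase in sample size. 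Combining these two facts with the input lower bound gives a contradiction whenever $n\le k^{3.99}$.

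The one genuinely nontrivial step — everything else being parameter bookkeeping and checking that the construction meets \cref{def:model} with the claimed ranges of $n,k,d,\e,\tilde\sigma,\Sigma$ — is showing that the family of regression distributions inherits the small pairwise correlations (relative to the $v$-independent reference instance) needed to keep the SQ-dimension large. Concretely, for a bounded test function $\phi$ of $(X,y)$ one must bound its pairwise correlation across the instances indexed by $v$ and $v'$ by the corresponding correlation of a derived test function against $D_v$ and $D_{v'}$; because the design and noise are Gaussian and $\betastar_v$ is $k$-sparse this reduces to a moment computation of the same flavor as the one already needed for \cref{prop:lb-second-power-formal}, but it must now use the degree-four matching of $\set{D_v}$ to drive the correlation to the $k^{-4}$-type scale rather than $k^{-2}$, which is exactly what raises the sample threshold from $k^2$ to $k^4$ and the error threshold from $o(1)$ to $o(\sqrt\e)$ (nearly matching the upper bound of \cref{thm:intro-heavy-tailed-sos}). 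Plugging the resulting correlation bound into the SQ-dimension lemma of \cite{abs-1611.03473} then yields the stated $\exp(d^{\Omega(1)})$ query lower bound for $n\le k^{3.99}$.
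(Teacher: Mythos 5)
Your high-level framing matches the paper's intent (the paper itself says it follows \cite{abs-1806-00040} in transferring mean-estimation-style constructions to regression), but the mechanism you propose has a genuine gap. The paper does \emph{not} perform a black-box reduction from a sparse-mean SQ lower bound: it directly constructs the hard family of joint laws $\cQ'_v$ of $(X,y)\in\R^{d+1}$ — taking $\norm{\beta^*}=10^{-5}\sqrt{\e}$, a $v$-dependent covariance $\Sigma=\Id-c'vv^\top$ tuned so that the conditional covariance of $X\mid y$ equals $\Id-vv^\top/3$, and replacing the conditional law of $X$ along $v$ given $y$ by a one-dimensional mixture $(1-\e_\mu)N(\mu,\Theta(1))+\e_\mu\cB_\mu$ matching the first $m=3$ moments of $N(0,1)$ (\cite[Lemma E.2]{abs-1806-00040}) — and then bounds the pairwise correlations of these joint laws via \cref{fact:distributions-correlation}, getting decay $e^{O(1/\e)}\iprod{u,v}^4$, before invoking \cref{fact:sparse-distant-vectors} and \cref{fact:sq-main}. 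Your proposed reduction hinges on the claim that an SQ query $\phi(X,y)$ to the regression distribution can be simulated by $O(1)$ bounded queries to the distribution of $Z=yX$. That is false: the map $(X,y)\mapsto yX$ loses information, and queries depending on $X$ alone (whose law is $v$-dependent in the hard family, since the conditional covariance along $v$ is perturbed) cannot be answered from the law of $Z$. More structurally, to transfer a lower bound from mean estimation to regression you would need a fixed, $v$-independent randomized map from $Z\sim D_v$ to $(X,y)\sim\cQ'_v$ so that the regression oracle can be simulated from the mean-estimation oracle; $Z=yX$ goes in the wrong direction and no such inverse map is provided.

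Relatedly, the step you flag as "the one genuinely nontrivial step" is exactly where the content lies, and your sketch of it is not correct as stated: the degree-four moment matching of the mean-estimation instances $\{D_v\}$ (laws of the statistic $Z$) does not transfer to the joint $(X,y)$ laws. What is actually needed — and what the paper supplies — is a degree-3 moment-matching construction for the \emph{conditional} law of $\iprod{X,v}$ given $y$ (yielding the exponent $m+1=4$ in the correlation bound), together with the $\e$-corruption being realized inside that one-dimensional mixture so that the instance is an $\e$-corruption of a genuine $N(0,\Sigma)$-design, Gaussian-noise regression model as in \cref{def:model}. Your plan of "clean Gaussian regression with $\beta^*_v=c\sqrt{\e}\,v$ plus corruption chosen to mask the low-degree moments of $(X_i,y_i)$" with a single fixed $\Sigma$ leaves this construction unspecified; note in particular that the paper must let $\Sigma$ depend on $v$ to absorb the rank-one shift $\frac{1}{\sigma_y^2}(\Sigma\beta^*)(\Sigma\beta^*)^\top$ in the conditional covariance, and it is not clear a $v$-independent $\Sigma$ works. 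Without the explicit conditional moment-matching construction and the resulting pairwise-correlation bound for the joint laws, the argument does not go through.
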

Note that under assumptions of \cref{prop:lb-fourth-power-intro}, \cref{thm:intro-heavy-tailed-sos} implies that if we take $n\ge k^4\polylog(d)$ samples, the estimator achieves error $O(\e^{3/4})$ that is $o(\sqrt{\e})$ if $\e \to 0$ as $d\to \infty$. 

\section{Techniques}
\label{sec:techniques}

Since the problem has multiple aspects, we first illustrate our approach on the simplest example $X^*_i\stackrel{\text{iid}}{\sim} N(0,\Sigma)$ under the assumption that $0.1 \cdot \Id \preceq \Sigma \preceq 10\cdot \Id$. Note that already in this case, even for $\eta \sim N(0,1)^n$, our estimator from \cref{thm:intro-heavy-tailed-sos} outperforms the state of the art. In addition, we assume that $\sigma = 1$.

Our estimators are based on preprocessing $X$, and then minimizing \emph{$\ell_1$-penalized Huber loss}.  
In the Gaussian case, the preprocessing step consists only of \emph{filtering}, while for heavy-tailed designs, an additional \emph{truncation} step is required. The idea of using filtering before minimizing the Huber loss first appeared in \cite{PJL20} for the dense settings, and was applied to sparse settings in \cite{sasai-heavy, sasai-subgaussian}. 
We will not discuss the filtering method in detail, and rather focus on its outcome: It is a set $\hat{S} \subseteq [n]$ of size at least $(1-O(\e))n$ that satisfies some nice properties\footnote{Technically, the filtering we use returns weights of the samples. For simplicity we assume here that the weights are $0$ or $1$.}. 
Further, we will see what properties we need from $\hat{S}$, and now let us define the Huber loss estimator.
  
\begin{definition}\label{def:Huber-loss}
	For $S\subseteq [n]$, the \emph{Huber loss function restricted to $S$} is defined as 
	\[
	\huberloss_S(\beta)=\tfrac{1}{n}\sum_{i\in S} h\Paren{\iprod{X_i, \beta} - y_i} \mbox{ where } h(x_i)= \left\{ \renewcommand{\arraystretch}{1.3} \begin{array}{ll} 
		\frac{1}{2}x_i^2 & \mbox{if }\abs{x_i}\leq 2 ;\\
		2\abs{x_i} - 2 & \mbox{otherwise}.\end{array} \right.\]
	For a penalty parameter $\lambda$, the $\ell_1$-penalized Huber loss restricted to $S$ is defined as 
	$L_{S}(\beta) := \huberloss_S(\beta)+\lambda\cdot \Normo{\beta}$. We use the notation $\huberderivative(x)$ for the derivative of $h(x)$. Note that for all $x$, $\abs{\phi(x)}\le 2$. 
\end{definition}
Our estimator is the minimizer $\hat{\beta}_{\hat{S}}$ of $L_{\hat{S}}(\beta)$, where $\hat{S}$ is the set returned by the filtering algorithm. To investigate the properties of this estimator, it is convenient to work with \emph{elastic balls}. The {$k$-{elastic ball} of radius $r$} is the following set:
$
\cE_{k}(r) := \set{u \in \R^d \suchthatsmall \norm{ u} \le r\,, \normo{u} \le \sqrt{k}\cdot r}\,.
$
Note that this ball contains all $k$-sparse vectors with Euclidean norm at most $r$ (as well as some other vectors). Elastic balls are very useful for sparse regression since if the following two properties hold,
\begin{enumerate}
\item \emph{Gradient bound:} For all $u\in \cE_k(r)$,
$
\quad\abs{\iprod{\nabla \huberlossfiltered, u}} \lesssim \frac{r}{\sqrt{k}}\normo{u} + r\norm{u}\,,$
\item \emph{Strong convexity on the boundary:} For all $u\in \cE_k(r)$ such that $\norm{u} = r$, 
\[
\huberlossfiltered\Paren{\betastar + u} -  \huberlossfiltered\Paren{\betastar} - \iprod{\nabla \huberlossfiltered, u} \ge \Omega\Paren{r^2}\,,
\]
\end{enumerate}
then for an appropriate choice of the penalty parameter $\lambda$, then
$\Norm{\betastar - \hat{\beta}_{\hat{S}}} < r\,.$\footnote{For simplicity, we omit some details, e.g. we need to work with $\cE_{k'}(r)$ instead of $\cE_k(r)$, where $k' \gtrsim k$. See \cref{th:error-bound} for the formal statement. Similar statements appeared in many prior works on sparse regression.}

Hence it is enough to show these two properties. In the Gaussian case, the strong convexity property can be proved in exactly the same way as it is done in \cite{dLN+21} for the case of the oblivious adversary, while for heavy-tailed designs it is significantly more challenging. Since we now discuss the Gaussian case, let us focus on the gradient bound.
Denote $\huberloss^*_S(\beta)=\tfrac{1}{n}\sum_{i\in S} h\Paren{\iprod{X_i^*, \beta} - y_i^*}$. 
By triangle  inequality,
\begin{align*}
\abs{\iprod{\nabla \huberlossfiltered, u}} &= \abs{\iprod{\nabla \huberloss^*_{\Sgood \cap \hat{S}}, u} 
	+ \iprod{\nabla \huberloss_{\Sbad\cap \hat{S}}, u} }
	\\&\le \abs{\iprod{\nabla \huberloss^*_{[n]}, u}} + \abs{\iprod{\nabla \huberloss^*_{[n]\setminus \Paren{\Sgood \cap \hat{S}}}, u}}
		+ \abs{\iprod{\nabla \huberloss^{}_{\Sbad\cap \hat{S}}, u} }\,.
\end{align*}
Since the first term can be bounded by $\normi{\nabla \huberloss^*_{[n]}} \cdot \normo{u}$, it is enough to show that $\normi{\nabla \huberloss^*_{[n]}} \lesssim r/\sqrt{k}$, where $r$ is the error we aim to achieve. 
Note that $\nabla \huberloss^*_{[n]} = \tfrac{1}{n}\sum_{i=1}^n  \huberderivative\Paren{\eta_i}\iprod{X^*_i, u}$ does not depend on the outliers created by the adaptive adversary.
The sharp bound on $\normi{\nabla \huberloss^*_{[n]}} $ can be derived in exactly the same way as in \cite{dLN+21} (or other prior works):
Since $\eta$ and $X^*$ are independent and $\abs{\phi(\eta)} \le 2$,  $\nabla \huberloss^*_{[n]}$ is a Gaussian vector whose entries have variance $(1/n)$. By standard properties of Gaussian vectors,  $\normi{\nabla \huberloss^*_{[n]}} \le O\paren{\sqrt{\log(d)/n}}$ with high probability.

To bound the second and the third term, we can use \CS inequality and get $O(\sqrt{\e})$ dependence on the error (like it is done in prior works on robust sparse regression, for example, \cite{sasai-heavy} or \cite{sasai-subgaussian}), or use \holders{} inequality and get better dependence, but also more challenges since we have to work with higher (empirical) moments of $X^*$ and $X$. Let us use \holders inequality and illustrate how we work with higher moments. Note that both sets $[n]\setminus \paren{\Sgood \cap \hat{S}}$ and $\Sbad\cap \hat{S}$ have size at most $O(\e n)$. Hence the second term can be bounded by 
\[
O\paren{\e^{3/4}} \cdot \big(\sum_{i \in [n]\setminus \Paren{\Sgood \cap \hat{S}}}  \tfrac{1}{n}\iprod{X^*_i, u}^{4}\big)^{1/4}\le 
O\paren{\e^{3/4}} \cdot \big(\sum_{i \in [n]}  \tfrac{1}{n}\iprod{X^*_i, u}^{4}\big)^{1/4}\,,
\]
while the third term is bounded by
\[
O\paren{\e^{3/4}} \cdot \big(\sum_{i \in \Sbad\cap \hat{S}}  \tfrac{1}{n}\iprod{X_i, u}^{4}\big)^{1/4}\le
O\paren{\e^{3/4}} \cdot \big(\sum_{i \in \hat{S}}  \tfrac{1}{n}\iprod{X_i, u}^{4}\big)^{1/4}\,.
\]
A careful probabilistic analysis shows that with high probability, for all $r \ge 0$ and all $u\in \cE_k(r)$, $\sum_{i \in [n]}  \tfrac{1}{n}\iprod{X^*_i, u}^{4} \le O\Paren{\norm{u}^4}$. 
Hence, our requirement on $\hat{S}$ is that $\sum_{i \in \hat{S}}  \tfrac{1}{n}\iprod{X_i, u}^{4} \le O(1)$ for all $u\in \cE_k(1)$ (by scaling argument, it is enough to consider $r=1$). 
If we find such a set $\hat{S}$, we get the desired bound. 
Indeed, if $n \gtrsim k\log(d) / \e^{3/2}$, $\normi{\nabla \huberloss^*_{[n]}} \le O\paren{\e^{3/4} / \sqrt{k}}$, and the other terms are bounded by $O(\e^{3/4})$, implying that $\Norm{\betahat - \hat{\beta}_{\hat{S}}} < r = O(\e^{3/4})$.

Note that such sets of size $(1-O(\e))n$ exist since $\Sgood$ satisfies this property. It is clear how to find such a set inefficiently: we just need to check all candidate sets $S$ and maximize the quartic function $\sum_{i \in {S}} \iprod{X_i, u}^{4}$ over $u\in \cE_k(1)$.
Furthermore, the by-now standard filtering method allows to avoid checking all the sets: If we can maximize $\sum_{i \in S} \iprod{X_i, u}^{4}$ over  $u\in \cE_k(1)$ efficiently, we can also find the desired set efficiently. 

Before explaining how we maximize this function, let us see how prior works \cite{BalakrishnanDLS2017, sasai-subgaussian}, optimized a simpler quadratic function $\sum_{i \in {S}} \iprod{X_i, u}^{2}$ over $u\in \cE_k(1)$. They use the \emph{basic SDP} relaxation for sparse PCA, that is, they optimize the linear function $\sum_{i \in {S}} \iprod{X_iX_i^\top, U}$ over $\cB_k := \set{U\in \R^{d\times d} \suchthatsmall U\succeq 0\,, \Tr(U) \le 1\,, \Normo{U} \le k}$. 
This set has been used in literature for numerous sparse problems since it is a nice (perhaps the best) convex relaxation of the set $\cS_k = \set{uu^\top \suchthatsmall u\in\R^d\,, \norm{u}\le 1\,, \normsparse{u}\le k}$.
Moreover, crucially for sparse regression, it is easy to see that  $\cB_k$ also contains  all matrices   $uu^\top$ such that $u\in \cE_k(1)$. 
Hence, one may try to optimize quartic functions by using relaxations of $\cS_k = \set{u^{\otimes 4} \suchthatsmall u\in\R^d\,, \norm{u}\le 1\,, \normsparse{u}\le k}$. A natural relaxation is the sum-of-squares with \emph{sparsity constraints}.
\cite{dk-sos}  used these relaxations for sparse mean estimation\footnote{These relaxations were also used in \cite{d2020sparse} in the context of sparse PCA, but they used them in a different way.}. 
They showed that these relaxations provide nice guarantees for distributions with certifiably bounded $4$-th moment, assuming that the distribution has sub-exponential tails. Since we now discuss the Gaussian case, the assumption on the tails is satisfied.
However, there is no guarantee that these relaxations capture $u^{\otimes 4}$ for all $u\in \cE_k(1)$. 
So, for sparse regression, we need another relaxation. 

We use the sum-of-squares relaxations with \emph{elastic constraints}. These constraints ensure that the set of relaxations $\cP_k \subset \R^{d^4}$ is guaranteed to contain $u^{\otimes 4}$ for all $u\in \cE_k(1)$. We show that if $n\gtrsim \tilde{O}(k^4)$, 
there is a degree-$O(1)$ sum-of-squares proof from the elastic constraints of the fact that $\frac{1}{n}\sum_{i \in [n]} \iprod{X_i, u}^{4} \le O(1)$.
It implies that the relaxation is nice: If
$\frac{1}{n}\sum_{i \in {S}} \iprod{X_i, u}^{4} \le O(1)$ for all $u\in \cE_k(1)$, 
then $\frac{1}{n}\sum_{i \in {S}} \iprod{X_i^{\otimes 4}, U} \le O(1)$ for all $U \in \cP_k$. Since we can efficiently optimize over $\cP_k$, we get an efficiently computable estimator with error $O(\e^{3/4})$ for Gaussian distributions. 
Furthermore, if we first use a proper thresholding (that we discuss below), our sum-of-squares proof also works for heavy-tailed distributions, that, apart from the certifiably bounded $4$-th moment (that we cannot avoid with the sum-of-squares approach), are only required to have entrywise bounded $8$-th moment. 

Robust sparse regression with heavy-tailed designs is much more challenging. Again, for simplicity assume that $0.1 \cdot \Id \preceq \Sigma \preceq 10\cdot \Id$ and $\sigma = 1$.
First, there is an issue even without the adversarial noise: $\normi{\nabla \huberloss^*_{[n]}}$ can be very large. Even under bounded fourth moment assumption, it can have magnitude $\tilde{O}\paren{d^{1/4}/n}$, 
which is too large in the sparse setting.
Hence we have to perform an additional thresholding step and remove large entries of $X$. 
Usually thresholding of the design matrix should be done very carefully since it breaks the relation between $X$ and $y$. 
\cite{sasai-heavy} required the thresholding parameter $\tau$ to be large enough and depend polynomially on $\norm{\beta^*}$ so that this dependence does not break significantly. Since 
$\normi{\nabla \huberloss^*_{[n]}}$ can be as large as $\tilde{O}\paren{\tau/n}$, the sample complexity of their estimator also depends polynomially on $\norm{\beta^*}$.

Our idea of thresholding is very different, and it plays a significant role in our analysis, especially in the proof of strong convexity. 
Since we already have to work with outliers chosen by the adaptive adversary, we know that for an $\e$-fraction of samples, the dependence of $y$ on $X$ can already be broken. 
So, if we choose the thresholding parameter $\tau$ to be large enough so that with high probability it only affects an $\e$-fraction of samples, we can simply treat the samples affected by such thresholding as additional adversarial outliers, and assume that the adaptive adversary corrupted $2\e n$ samples. Note that since $\cD$ is heavy-tailed, each sample $X^*_i$ might have entries of magnitude $d^{\Omega(1)}$. 
However, $y$ depends only on the inner products $\iprod{X^*_i, \beta^*}$, 
and this inner product depends only on the entries of $X^*$ that correspond to the support of $\beta^*$. 
Even though we don't know the support, we can guarantee that for $\tau \ge 20\sqrt{k/\e}$, all entries of $X_i$ from the support of $\betastar$ are bounded by $\tau$ with probability  $1-\e/2$. Indeed, since the variance of each entry is bounded by $10$, Chebyshev's inequality implies that this entry is smaller than $\tau$ with probability at least $1-\e/(2k)$, and by union bound, $\iprod{X^*_i, \beta^*}$ is not affected by the thresholding with probability  $1-\e/2$. Hence by Chernoff bound, with overwhelming probability, the number of samples affected by our thresholding is at most $\e n$. 

Let us denote the distribution of the rows of $X^*$ after thresholding with parameter $\tau$ by $\cD(\tau)$. After the thresholding step, we can assume that $X^*_i \simiid \cD(\tau)$. 
Note that thresholding can shift the mean, i.e. $\E X^*_i$ can be nonzero. 
It is easy to see that $\normi{\E_{x\sim \cD(\tau)} x} \le O(1/\tau)$. 
Hence by Bernstein's inequality, $\normi{\nabla \huberloss^*_{[n]}} \le \tilde{O}\Paren{\sqrt{1/n} + \tau/n + 1/\tau}$ with high probability\footnote{Here we used the fact that $\phi(\eta_i) \le 2$.}.
In particular, in order to get the error bounded by $O(\e^{3/4})$, we need to take $\tau \gtrsim \sqrt{k}/\e^{3/4}$, and it affects sample complexity. Furthermore, our sum-of-squares proof requires that $\Normi{\tfrac{1}{n}\sum_{i=1}^n \Paren{X^*_i}^{\otimes 4} -\E \Paren{X^*_1}^{\otimes 4}}$ is smaller that $1/k^2$. It can be shown that this quantity is bounded by $\tilde{O}\Paren{\sqrt{1/n} + \tau^4/n + 1/\tau^4}$ with high probability\footnote{\cite{dk-heavy} used thresholding for robust sparse mean estimation, and showed a similar bound for second-order tensors. We generalize it to higher order tensors.}. In particular, we need $n \ge \tilde{O}\Paren{\tau^4 k^2}$, so for $\tau \gtrsim \sqrt{k}/\e^{3/4}$, we have to take $n \ge \tilde{O}\Paren{k^4/\e^3}$. As was discussed in \cref{remark:sub-exponential}, if $\cD$ has sub-exponential tails, we do not have to do the thresholding, and the bounds from \cite{dk-sos} allow to avoid this dependence of $n$ on $\e$. Note that due to the SQ lower bound (\cref{prop:lb-fourth-power-intro}), sample complexity $k^4$ is likely to be necessary, even for Gaussian designs.

Finally, let us discuss the strong convexity property. 
Here, we do not assume any  properties related to sum-of-squares, and focus on the weak assumptions of \cref{thm:intro-heavy-tailed}. 
First, assume that we need to show strong convexity only for sparse vectors, 
and not for all $u\in \cE_k(r)$.
As was observed in prior works on regression with oblivious outliers, 
e.g. \cite{dLN+21}, 
$\rho(u) := \huberlossfiltered\paren{\betastar + u} -  \huberlossfiltered\Paren{\betastar} - \iprod{\nabla \huberlossfiltered, u}$ can be lower bounded by 
$\tfrac{1}{2} \sum_{i\in \hat{S}} \iprod{X_i, u}^2 \ind{\abs{\iprod{X_i, u} - y_i } \le1}  \ind{\abs{\Iprod{X_i, u}} \le1}$. 
Let $C(u) = \Sgood\cap \hat{S}\cap A \cap B(u)$, where $A$ is the set of samples where $\abs{\eta_i} \le 1$ and $B(u) = \set{i\in [n] \suchthatsmall \abs{\iprod{X_i, u}} \le1}$. 
Then, $\rho(u) \ge \Omega(\sum_{i\in C(u)} \iprod{X_i^*, u}^2 $).
It can be shown that for some suitable $r$ and for each $k$-sparse $u$ of norm $r$, $C(u)$ is a large subset of the set $A$ (of size at least $0.99\Card{A}$). 
Note that since $A$ is \emph{independent} of $X^*$, the rows of $X^*$ that correspond to indices from $A$ are just iid samples from $\cD$.
If $X^*_i$ were Gaussian, we could have applied concentration bounds and prove strong convexity via union bound argument over subsets of size $0.99\Card{A}$. In the heavy-tailed case, we need a different argument. For a fixed set $C$ of size $0.99\Card{A}$, we can use Bernstein's inequality\footnote{Using a standard truncation argument. See also Proposition C.1. of \cite{PJL20}  for a similar argument in the dense setting.}. We cannot use union bound argument over all subsets of size $0.99\Card{A}$ (there are too many), but fortunately we do not need it since for each $k$-sparse $u$ of norm $r$, it is enough to show that $\sum_{i\in T(u)}  \iprod{X_i^*, u}^2 \ge \Omega(r^2)$, where $T(u) \subset A$ is the set of the smallest (in absolute value) $0.99\card{A}$ entries of the vector $X^*_Au \in \R^{\card{A}}$.  
Hence, we can use an epsilon-net argument for the set of $k$-sparse vectors $u$ (of norm $r$). 
This set has very dense nets of (relatively) small size, and this is enough to show the lower bound $\sum_{i\in C(u)} \iprod{X_i^*, u}^2 \ge \Omega(r^2)$ for all $k$-sparse $u$ of norm $r$ with high probability, as long as $n\ge \tilde{O}(k^2)$.

In order to show the same bound for all $u\in \cE_k(r)$ of norm $r$, we observe that\footnote{Similar arguments are sometimes used to prove the restricted eigenvalue property of random matrices
	.} 
if a quadratic form is $\Theta(r^2)$ on $K$-sparse vectors of norm $r$ for some $K \gtrsim k$, then it is also $\Theta(r^2)$ on all $u \in \cE_k(r)$, and applying the argument from the previous paragraph to $K$-sparse vectors, we get the desired bound. We remark that directly proving it for $u \in \cE_k(r)$ is challenging, since we extensively used the properties of the set of sparse vectors that are not satisfied by $ \cE_k(r)$, e.g. the existence of very dense epsilon-nets of small size.
\section{Future Work}
\label{sec:future}

There is an interesting open problem in robust sparse regression that is not captured by our techniques. For sparse mean estimation, in the Gaussian case, there exists a  polynomial time algorithm with nearly optimal guarantees: It achieves error $O(\tilde{\e})$ with $k^4\polylog(d)/\e^2$ samples (\cite{dk-sos}). This algorithm uses a sophisticated sum-of-squares program\footnote{A similar program for the dense setting was studied in \cite{KMZ22}.}. It is reasonable to apply the techniques of \cite{dk-sos} to robust sparse regression in order to achieve nearly optimal error $O(\tilde{\e})$ with $\poly(k)$ samples. However, simple approaches (e.g. our approach with replacing the sparse constraints by the elastic constraints) fail in this case. Here we provide  a high-level explanation of the issue. In order to combine the filtering algorithm with their techniques, we need to check whether the values of a certain quartic form are small on all sparse vectors. The analysis in \cite{dk-sos} shows that this form is indeed small for the uncorrupted sample with high probability (see their Lemma E.2.). Since we want the filtering algorithm to be efficient, we have to use a \emph{relaxation} of sparse vectors. Hence we need to find a sum-of-squares (or some other nice relaxation) version of the proof from  \cite{dk-sos}. However, in their proof they use a \emph{covering argument}, and it is not clear how to avoid it. This argument fails for reasonable relaxations that we have thought about. Both potential outcomes (either an algorithm  or a computational lower bound) are interesting: An algorithm would likely require new sophisticated ideas, and a lower bound would show a significant difference between robust sparse regression and robust mean estimation, while, so far, the complexity pictures of these problems have seemed to be quite similar.

Another interesting direction is to get error $o(\sqrt{\varepsilon})$ for distributions that do not necessarily have certifiably bounded moments. As was shown in \cite{abs-1903.07870}, only moment assumptions (without certifiability) are not enough for efficient robust mean estimation, and the same should be true also for linear regression. However, other assumptions on distribution $\cD$ can make the problem solvable in  polynomial time. For robust mean estimation, some symmetry assumptions are enough even for heavy-tailed distributions without the second moment\footnote{And, in some sense, even without the first moment, if instead of the mean we estimate the center of symmetry.} (see \cite{NST23}). It is interesting to investigate what assumptions on the design distribution are sufficient for existence of efficiently computable estimators for robust sparse regression.

\begin{ack}
	Chih-Hung Liu is supported by Ministry of Education, Taiwan under Yushan Fellow Program with the grant number NTU-112V1023-2 and by National Science and Technology Council, Taiwan with the grant number 111-2222-E-002-017-MY2.
\end{ack}

\newpage

\phantomsection
\addcontentsline{toc}{section}{References}
\bibliographystyle{amsalpha}
\bibliography{bib/custom,bib/dblp,bib/mathreview,bib/scholar}

\newpage
\appendix
\section{Properties of the Huber loss minimizer}
\label{sec:identifiability}

\begin{definition}\label{def:Huber-loss-weighted}
	For $w\in \R^n_{\ge 0}$, the \emph{weighted Huber loss function} is defined as 
	\[
	\huberloss_w(\beta)= \sum_{i\in [n]} w_i h\Paren{\iprod{X_i, \beta} - y_i} \mbox{ where } h(x_i)= \left\{ \renewcommand{\arraystretch}{1.3} \begin{array}{ll} 
		\frac{1}{2}x_i^2 & \mbox{if }\abs{x_i}\leq 2 ;\\
		2\abs{x_i} - 2 & \mbox{otherwise}.\end{array} \right.\]
	For a penalty parameter $\lambda$, the $\ell_1$-penalized Huber loss restricted to $S$ is defined as 
	$L_w(\beta) := \huberloss_w(\beta)+\lambda\cdot \Normo{\beta}$. 
\end{definition}

\begin{lemma}\label{lem:l1-l2-bound}
	Suppose that $w\in \R^n_{\ge 0}$, $u \in \R^d$, $\gamma_1, \gamma_2, \lambda > 0$  satisfy the following properties:
	\begin{enumerate}
\item $\Abs{\sum_{i=1}^n  w_i \huberderivative\Paren{\eta_i + \zeta_i}\Iprod{X''_i(\tau), u}} \leq \gamma_1\normo{u} + \gamma_2 \Norm{\Sigma^{1/2}{u}}\,,$
\item  $\lambda \ge 2\gamma_1\,,$
\item $		\huberloss_w\Paren{\beta^* + u} + \lambda \cdot \Normo{\beta^* + u} \le 
\huberloss_w\Paren{\beta^*} + \lambda \cdot \Normo{\beta^*}\,.$
	\end{enumerate}

	Then
		\[
	\Normo{u} \le  \Paren{4\sqrt{k/\sigma_{\min}}+ 2\gamma_2 / \lambda} \cdot \Norm{\Sigma^{1/2}{u}} \,,
	\]
	where $\sigma_{\min}$ is the minimal eigenvalue of $\Sigma$.
\end{lemma}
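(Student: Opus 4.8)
The plan is to run the textbook Lasso ``basic inequality''/cone-condition argument, now specialized to the weighted $\ell_1$-penalized Huber loss. Let $T = \supp(\betastar)$, so $\Card{T}\le k$ because $\betastar$ is $k$-sparse, and split $u = u_T + u_{T^c}$ according to $T$.

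First I would turn hypotheses~1 and~3 into a single inequality for the first-order term. The map $\beta \mapsto \huberloss_w(\beta)$ is convex and differentiable (each $h$ is the convex $C^1$ Huber function and the $w_i$ are nonnegative), so $\huberloss_w(\betastar + u) - \huberloss_w(\betastar) \ge \iprod{\nabla \huberloss_w(\betastar), u}$. By the definition of the $\zeta_i$ (which absorb the effect of the truncation), the residuals at $\betastar$ satisfy $\iprod{X''_i(\tau), \betastar} - y_i = -(\eta_i + \zeta_i)$, and since $\huberderivative$ is odd this gives $\iprod{\nabla \huberloss_w(\betastar), u} = -\sum_{i=1}^n w_i \huberderivative(\eta_i + \zeta_i)\Iprod{X''_i(\tau), u}$, whose absolute value is at most $\gamma_1\normo{u} + \gamma_2\Norm{\Sigma^{1/2}u}$ by hypothesis~1. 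On the other side, hypothesis~3 together with the convexity inequality yields $\iprod{\nabla\huberloss_w(\betastar),u} \le \lambda(\normo{\betastar} - \normo{\betastar + u})$, and the reverse triangle inequality applied only on the coordinates in $T$ (using that $\betastar$ is supported on $T$) gives $\normo{\betastar} - \normo{\betastar + u} \le \normo{u_T} - \normo{u_{T^c}}$. Chaining these,
\[
-\gamma_1\normo{u} - \gamma_2\Norm{\Sigma^{1/2}u} \le \lambda\normo{u_T} - \lambda\normo{u_{T^c}}\,.
\]

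Next I would substitute $\normo{u} = \normo{u_T} + \normo{u_{T^c}}$ and use $\lambda \ge 2\gamma_1$ (hypothesis~2) to absorb the $\gamma_1$-terms: the displayed inequality becomes $(\lambda - \gamma_1)\normo{u_{T^c}} \le (\lambda+\gamma_1)\normo{u_T} + \gamma_2\Norm{\Sigma^{1/2}u}$, hence $\normo{u_{T^c}} \le 3\normo{u_T} + (2\gamma_2/\lambda)\Norm{\Sigma^{1/2}u}$, and therefore $\normo{u} \le 4\normo{u_T} + (2\gamma_2/\lambda)\Norm{\Sigma^{1/2}u}$. Finally I would convert $\normo{u_T}$: since $\Card{T}\le k$, $\normo{u_T}\le \sqrt{k}\,\norm{u_T}\le \sqrt{k}\,\norm{u}$, and $\norm{u}\le \Norm{\Sigma^{1/2}u}/\sqrt{\sigma_{\min}}$ because $\Norm{\Sigma^{1/2}u}^2 = u^\top\Sigma u \ge \sigma_{\min}\norm{u}^2$; so $\normo{u_T}\le \sqrt{k/\sigma_{\min}}\,\Norm{\Sigma^{1/2}u}$, which plugged into the previous bound gives exactly $\normo{u} \le \bigl(4\sqrt{k/\sigma_{\min}} + 2\gamma_2/\lambda\bigr)\Norm{\Sigma^{1/2}u}$.

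There is no deep obstacle here; this is the standard cone-condition computation. The only points requiring care are (i) correctly matching the expression in hypothesis~1 to $\iprod{\nabla\huberloss_w(\betastar),u}$ — keeping the sign and the $\zeta_i$ correction straight under truncation — and (ii) making sure the $\ell_1$ split invokes only the $k$-sparsity of $\betastar$ and nothing about $u$ (in particular $u$ need not be sparse, which is why the conclusion is a cone-type inequality rather than a sparsity statement).
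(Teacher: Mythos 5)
Your proposal is correct and follows essentially the same route as the paper's proof: lower-bound the loss difference via convexity and the gradient bound of hypothesis~1, upper-bound it via hypothesis~3 together with the reverse triangle inequality on $\supp(\beta^*)$, absorb the $\gamma_1$-terms using $\lambda \ge 2\gamma_1$ to get the cone condition $\normo{u} \le 4\normo{u_{\cK}} + (2\gamma_2/\lambda)\Norm{\Sigma^{1/2}u}$, and finish with $\normo{u_{\cK}} \le \sqrt{k}\,\norm{u} \le \sqrt{k/\sigma_{\min}}\,\Norm{\Sigma^{1/2}u}$. The only differences are presentational (you make the gradient identification and the sign/$\zeta$ bookkeeping explicit, which the paper leaves implicit), and the constants match.
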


\begin{proof}
	Let $\cK = \supp\Paren{\beta^*}$. Note that
	\[
	\Normo{\beta^* + u} =  \Normo{\beta^* + u_{\overline{\cK}}  + u_{\cK} } \ge \Normo{\beta^*}  + \Normo{u_{\overline{\cK}}} - \Normo{u_{\cK}}\,. 
	\]
	By the convexity of $\huberloss_w$, 
	\[
	\huberloss_w\Paren{\beta^* + u} -  \huberloss_w\Paren{\beta^*}
	 \ge - \Abs{\sum_{i=1}^n  w_i \huberderivative\Paren{\eta_i + \zeta_i}\Iprod{X''_i(\tau), u}}
	\ge - \lambda \normo{u}/2 - \gamma_2 \Norm{\Sigma^{1/2}{u}}\,.
	\]
	Hence
	\begin{align*}
	0 &\ge \lambda \cdot \Paren{ \Normo{\beta^* + u} -  \Normo{\beta^*} } + 
	\huberloss_w\Paren{\eta + \zeta + Xu} -  \huberloss_w\Paren{\eta + \zeta} 
	\\&\ge 	   \lambda \cdot\Paren{ \Normo{u_{\overline{\cK}}} - \Normo{u_{\cK}} } -
	\tfrac{1}{2}\lambda\cdot \normo{u_{\cK}} - \tfrac{1}{2}\lambda\cdot \Normo{u_{\overline{\cK}}}
	 - \gamma_2
	\\&\ge \tfrac{1}{2}\lambda \cdot \Normo{u_{\overline{\cK}}} - \tfrac{3}{2}\lambda \Normo{u_{\cK}}
		 - \gamma_2\,.
	\end{align*}
	Therefore,
	\begin{align*}
	\lambda \Normo{u} 
	\le 4 \lambda\Normo{u_{\cK}}  + 2\gamma_2\Norm{\Sigma^{1/2}{u}}
	\le 4\lambda\sqrt{k}\norm{u} + 2\gamma_2\Norm{\Sigma^{1/2}{u}}
	\le 4\lambda\sqrt{\frac{k}{\sigma_{\min}}} \Norm{\Sigma^{1/2}{u}} + 2\gamma_2\Norm{\Sigma^{1/2}{u}}\,.
	\end{align*}
\end{proof}

\begin{theorem}\label{th:error-bound}
Let $\rho, \gamma_1,\gamma_2 > 0$ and
\[
r =   100 \cdot \Paren{\frac{\lambda \sqrt{k / \sigma_{\min}}}{\rho} +  \frac{\gamma_2}{\rho}}\,,
\]
where $\sigma_{\min}$ is the minimal eigenvalue of $\Sigma$.
Let $k' \ge 100k/\sigma_{\min}$. Consider the $k'$-{elastic ellipsoid} of radius $r$:
\[
\cE_{k'}(r) = \Set{u \in \R^d \suchthat \norm{\Sigma^{1/2} u} \le r\,, \normo{u} \le \sqrt{k'}\cdot r}\,.
\]

 Suppose that the weights $w\in \R^n$ are such that the following two properties hold:

\begin{enumerate}
\item Gradient bound: For all $u\in \cE_{k'}(r)$,
\[
\Abs{\sum_{i=1}^n  w_i \huberderivative\Paren{\eta_i + \zeta_i}\Iprod{X''_i(\tau), u}} \leq \gamma_1\normo{u} + \gamma_2 \Norm{\Sigma^{1/2}{u}}\\,,
\]
\item Strong convexity on the boundary: For all $u\in \cE_{k'}(r)$ such that $\Norm{\Sigma^{1/2}{u}} = r$,
\[
\huberloss_w\Paren{\beta^* + u} -  \huberloss_w\Paren{\beta^*} \ge 
- \Abs{\sum_{i=1}^n  w_i \huberderivative\Paren{\eta_i + \zeta_i}\Iprod{X''_i(\tau), u}} + \rho\cdot r^2\,.
\]
\end{enumerate}

Let 
\[
\lambda \ge 2\gamma_1 + \gamma_2\cdot \sqrt{\frac{\sigma_{\min}}{k}}\,.
\]

Then the minimizer $\hat{\beta}$ of the weighted penalized Huber loss with penalty $\lambda$ and weights $w$ satisfies
\[
\Norm{\Sigma^{1/2}\Paren{\hat{\beta} - \beta^*}} < r\,.
\]

\end{theorem}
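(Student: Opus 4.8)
The plan is the by-now-standard localization argument: show that moving from $\beta^*$ toward the minimizer $\hat\beta$ cannot decrease $L_w$ once one crosses the sphere $\Set{u : \Norm{\Sigma^{1/2}(u-\beta^*)} = r}$, and conclude that $\hat\beta$ stays inside. First, a minimizer $\hat\beta$ of $L_w = \huberloss_w + \lambda\Normo{\cdot}$ exists since $\huberloss_w$ is convex and the penalty is coercive ($\lambda \ge 2\gamma_1 + \gamma_2\sqrt{\sigma_{\min}/k} > 0$). Suppose for contradiction that $\Norm{\Sigma^{1/2}\Paren{\hat\beta - \beta^*}} \ge r$, set $t = r / \Norm{\Sigma^{1/2}\Paren{\hat\beta - \beta^*}} \in (0,1]$ and $v = t\Paren{\hat\beta - \beta^*}$, so $\Norm{\Sigma^{1/2}v} = r$ and $\beta^* + v = (1-t)\beta^* + t\hat\beta$. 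By convexity of $L_w$ and minimality of $\hat\beta$, $L_w(\beta^* + v) \le (1-t)L_w(\beta^*) + tL_w(\hat\beta) \le L_w(\beta^*)$, i.e. $\huberloss_w(\beta^* + v) + \lambda\Normo{\beta^* + v} \le \huberloss_w(\beta^*) + \lambda\Normo{\beta^*}$; call this $(\star)$.

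Next I would show $v \in \cE_{k'}(r)$; since $\Norm{\Sigma^{1/2}v} = r$ already, only the bound $\Normo{v} \le \sqrt{k'}\,r$ is needed, and it comes from \cref{lem:l1-l2-bound} applied with $u = v$. Its hypothesis (3) is exactly $(\star)$; hypothesis (2), $\lambda \ge 2\gamma_1$, is weaker than the assumed lower bound on $\lambda$; and hypothesis (1), the gradient bound at $v$, follows from the gradient bound of the theorem even though the latter is only stated on $\cE_{k'}(r)$ — its left-hand side is a linear functional of $u$ while $\Normo{u}$ and $\Norm{\Sigma^{1/2}u}$ are positively homogeneous, so rescaling $v$ by a small positive scalar into $\cE_{k'}(r)$, applying the bound there, and dividing back out gives the inequality for $v$ (indeed for every vector in $\R^d$). \cref{lem:l1-l2-bound} then yields $\Normo{v} \le \Paren{4\sqrt{k/\sigma_{\min}} + 2\gamma_2/\lambda}\,r$. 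Since $\lambda \ge \gamma_2\sqrt{\sigma_{\min}/k}$ we get $2\gamma_2/\lambda \le 2\sqrt{k/\sigma_{\min}}$, hence $\Normo{v} \le 6\sqrt{k/\sigma_{\min}}\,r \le \sqrt{100k/\sigma_{\min}}\,r \le \sqrt{k'}\,r$, so $v \in \cE_{k'}(r)$ and $\Norm{\Sigma^{1/2}v} = r$.

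Now both structural properties of the theorem apply at $u = v$. Writing $G(v) = \Abs{\sum_{i=1}^n w_i \huberderivative\Paren{\eta_i + \zeta_i}\Iprod{X''_i(\tau), v}}$, strong convexity on the boundary gives $\huberloss_w(\beta^* + v) - \huberloss_w(\beta^*) \ge -G(v) + \rho r^2$. On the other hand, with $\cK = \supp(\beta^*)$ the triangle inequality $\Normo{\beta^* + v} \ge \Normo{\beta^*} + \Normo{v_{\overline{\cK}}} - \Normo{v_{\cK}}$ combined with $(\star)$ gives $\huberloss_w(\beta^* + v) - \huberloss_w(\beta^*) \le \lambda\Normo{v_{\cK}} \le \lambda\sqrt{k}\,\norm{v} \le \lambda\sqrt{k/\sigma_{\min}}\,\Norm{\Sigma^{1/2}v} = \lambda\sqrt{k/\sigma_{\min}}\,r$. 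Feeding these into each other, together with the gradient bound $G(v) \le \gamma_1\Normo{v} + \gamma_2\Norm{\Sigma^{1/2}v} = \gamma_1\Normo{v} + \gamma_2 r$ and the estimate $\Normo{v} \le 6\sqrt{k/\sigma_{\min}}\,r$, gives $\rho r^2 \le \lambda\sqrt{k/\sigma_{\min}}\,r + 6\gamma_1\sqrt{k/\sigma_{\min}}\,r + \gamma_2 r$. Using $\gamma_1 \le \lambda/2$ and $\gamma_2 \le \lambda\sqrt{k/\sigma_{\min}}$, the right-hand side is at most $5\lambda\sqrt{k/\sigma_{\min}}\,r$, so $r \le 5\lambda\sqrt{k/\sigma_{\min}}/\rho$, contradicting $r = 100\Paren{\lambda\sqrt{k/\sigma_{\min}}/\rho + \gamma_2/\rho} \ge 100\lambda\sqrt{k/\sigma_{\min}}/\rho$. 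Hence $\Norm{\Sigma^{1/2}\Paren{\hat\beta - \beta^*}} < r$.

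The argument is essentially convex-analysis bookkeeping, and there is no deep obstacle: the only point needing care is that the rescaled displacement $v$ is not \emph{a priori} known to lie in the elastic ellipsoid, so its membership must be bootstrapped through \cref{lem:l1-l2-bound}, which in turn relies on the observation that a bound of a linear functional by positively homogeneous norms extends from $\cE_{k'}(r)$ to all of $\R^d$. The remaining work is just tracking constants so that $6\sqrt{k/\sigma_{\min}} \le \sqrt{k'}$ and $5\lambda\sqrt{k/\sigma_{\min}}/\rho < r$, both of which are built into the hypotheses $k' \ge 100k/\sigma_{\min}$ and the stated value of $r$.
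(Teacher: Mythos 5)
Your proof is correct and follows essentially the same route as the paper: both rescale the displacement $\hat\beta-\beta^*$ to a boundary point, invoke \cref{lem:l1-l2-bound} to control its $\ell_1$ norm, and then play the strong-convexity lower bound against the $\lambda\Normo{\cdot}$-based upper bound to contradict the definition of $r$. The only (sound) deviation is that you scale to the sphere $\Norm{\Sigma^{1/2}v}=r$ and extend the gradient bound beyond $\cE_{k'}(r)$ by homogeneity, whereas the paper scales to $\partial\cE_{k'}(r)$ and rules out the case $\normo{u}=\sqrt{k'}\,r$, so the gradient bound applies directly.
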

\begin{proof}
Let $\hat{u} = \hat{\beta} - \beta^*$. If $\Norm{\Sigma^{1/2}\hat{u}} < r$, we get the desired bound. 
Otherwise, let $u$ be the (unique) point in the intersection of $\partial\cE_{k'}(r)$ and the segment $[0, \hat{u} ]\subset \R^d$.
By convexity of the penalized loss, 
\[
\huberloss_w\Paren{\eta + \zeta + Xu} + \lambda \cdot \Normo{\beta^* + u} \le 
\huberloss_w\Paren{\eta + \zeta} + \lambda \cdot \Normo{\beta^*}\,,
\]
Since $u\in \partial\cE_{k'}(r)$, either $\Norm{\Sigma^{1/2}{u}} = r$, or $\normo{u} = \sqrt{k'}\cdot r$.  Let us show that the latter is not possible. 
Since $\lambda \ge 2\gamma_1$, 
we can apply \cref{lem:l1-l2-bound}:
\[
\sqrt{k'}\cdot r =  \Paren{4\sqrt{k/\sigma_{\min}}+ 2\gamma_2 / \lambda} \cdot r \,.
\]
Cancelling $r$ and using the bound $\lambda \ge  \gamma_2\cdot \sqrt{\frac{\sigma_{\min}}{k}}$, we get a contradiction. Hence  $\Norm{\Sigma^{1/2}{u}} = r$. By the strong convexity  and the gradient bound,
\begin{align*}
		\huberloss_w\Paren{\beta^* + u} -  \huberloss_w\Paren{\beta^*} 
		&\ge 
		- \Abs{\sum_{i=1}^n  w_i \huberderivative\Paren{\eta_i + \zeta_i}\Iprod{X''_i(\tau), u}} + \rho\cdot  \Norm{\Sigma^{1/2}{u}}^2
		\\&\ge \rho\cdot r^2 - \tfrac{1}{2}\lambda\cdot \normo{u}  -\gamma_2 \Norm{\Sigma^{1/2}{u}}
		\\&= \rho\cdot r^2 - \tfrac{1}{2}\lambda\cdot \normo{u} - \gamma_2 r\,.
\end{align*}

Note that
\[
\huberloss_w\Paren{\beta^* + u} -  \huberloss_w\Paren{\beta^*} 
\le \lambda \cdot \Paren{\Normo{\beta^*} - \Normo{\beta^* + u}}
\le  \lambda \normo{u}\,.
\]
By putting the above two inequality together and by \cref{lem:l1-l2-bound} , we have that

\[	\rho\cdot r^2
	\le \tfrac{3}{2}\lambda \Normo{u} +\gamma_2 r
	\le 6 \lambda\sqrt{k/\sigma_{\min}}\cdot r+ 5\gamma_2 r\,.\]

Dividing both sides by $\rho \cdot r$, we get
\[
r < 100 \cdot \Paren{\frac{\lambda \sqrt{k/\sigma_{\min}}}{\rho} +  \frac{\gamma_2}{\rho}}\,,
\]
a contradiction. Therefore,  $\Norm{\Sigma^{1/2}\hat{u}} < r$.

\end{proof}

\section{Heavy-tailed Designs}\label{sec:heavy-tailed}

First, we define a bit more general model than \cref{def:model}
\begin{definition}[Robust Sparse Regression with $2$ Adversaries]\label{def:model-technical}
	Let $n,d,k \in \N$ such that $k\le d$, $\sigma > 0$, $\alpha \in (0,1]$ and $\eps \lesssim \alpha$. 
	Let $\cD$ be a probability distribution in $\R^d$ with mean $0$ and covariance $\Sigma$.
	Let $y^* = X^*\betastar + \eta$, where $X$ is an $n\times d$ random matrix with rows $X^*_i \stackrel{\text{iid}}\sim \cD$, $\betastar \in \R^d$ is $k$-sparse, $\eta \in \R^n$ is independent of $X^*$ and has at least $\alpha \cdot n$ entries bounded by $\sigma$ in absolute value\footnote{Our result also works for more general model, where we require $\alpha n$ entries to be bounded by $\sigma$ for some $\alpha \gtrsim \e$. The error bound in this case also depends on $\alpha$.}.
	
	An instance of our model is a pair $(X, y)$, where $X\in \R^{n\times d}$ is a matrix and $y\in \R^n$ is a vector such that there exists a set $\Sgood \subseteq [n]$ of size at least $(1-\eps)n$ such that for all $i \in \Sgood$, $X_i = X^*_i$ and $y_i = y^*_i$.
\end{definition}

\begin{definition}\label{def:certifiable-moment}
	Let $M > 0$, $t\in \N$, and let $\ell \ge 2t$ be an even number. We say that a probability distribution $\cD$ in $\R^d$ with zero mean and covariance $\Sigma$ has \emph{$\ell$-certifiably $M$-bounded $(2t)$-th moment}, if there exist polynomials $h_1,\ldots, h_m \in \R[u_1,\ldots, u_d]$  of degree at most $\ell/2$ such that
	\[
{\E_{x\sim \cD}{\Iprod{x, u } }^{2t}}  + \sum_{i=1}^m h_i^2(u)
= M^{2t}\cdot \Norm{\Sigma}^{t}\cdot  \norm{u}^{2t}\,.
	\]
\end{definition}

In this section we prove the following theorem
\begin{theorem}[Heavy-tailed designs, general formulation]\label{thm:heavy-tailed-technical}
	Let $n, d, k, X,y, \eps, \cD, \Sigma, \sigma, \alpha$ be as in \cref{def:model-technical}, and let $\delta \in (0,1)$.
	
	Suppose that  for some $s> 2$, $t \in \N$, $M_s, M_{2t} \ge 1$, and even number $\ell \ge 2t$, $\cD$ has $M_s$-bounded $s$-th moment, and $\ell$-certifiably $M_{2t}$-bounded $(2t)$-th moment. In addition, $\cD$ has entrywise $\nu$-bounded $(4t)$-th moment.
	
	There exists an algorithm that, given $X, y, k, \e, \sigma, M_{2t}, \ell, t, \delta$ and $\hat{\sigma}_{\max}$ such that $\Norm{\Sigma} \le \hat{\sigma}_{\max} \le O\Paren{\Norm{\Sigma}}$, in time $\Paren{n+d}^{O(\ell)}$ outputs $X' \in \R^{n\times d}$ and weights $w = \Paren{w_1,\ldots, w_n}$ such that if
\[
n \gtrsim \frac{10^{10t}\Paren{M_{2t}^{2t}\cdot \nu^{4t} + \Paren{10^{5} M_s}^{\frac{2s}{s-2}}}\cdot
	\Paren{\kappa(\Sigma)^{4+s/(s-2)}  + \kappa(\Sigma)^{2t}}}{ \eps^{2t-1}}\cdot {k^{2t} \log(d/\delta)} 
\]
	then with probability at least $1-\delta$, the weighted $\ell_1$-penalized Huber loss estimator $\betahat_w = \betahat_w(X',y)$ with weights $w$ (as in \cref{def:Huber-loss})  and parameter $h$ satisfies
	\[
	\Norm{\Sigma^{1/2}\Paren{\betahat_w-\betastar}} \le O\Paren{\frac{M_{2t}\sqrt{\kappa(\Sigma)}}{\alpha}\cdot \sigma \cdot \eps^{1-\tfrac{1}{2t}}}\,.
	\]
\end{theorem}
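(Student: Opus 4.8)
The plan is to reduce the theorem to the deterministic statement of \cref{th:error-bound}. After rescaling $y$ so that we may assume $\sigma=1$ (legitimate, since $\sigma$ is given to the algorithm), we construct a truncated design $X'$ and nonnegative weights $w$ so that, with probability $\ge 1-\delta$, the \emph{gradient bound} of \cref{th:error-bound} holds with $\gamma_1\lesssim\sqrt{\Norm{\Sigma}}\,\eps^{1-1/(2t)}/\sqrt k$ and $\gamma_2\lesssim M_{2t}\sqrt{\kappa(\Sigma)}\,\eps^{1-1/(2t)}$, and \emph{strong convexity on the boundary} holds with $\rho\asymp\alpha$. Then $\lambda\asymp M_{2t}\sqrt{\Norm{\Sigma}}\,\eps^{1-1/(2t)}/\sqrt k$ meets the requirement $\lambda\ge 2\gamma_1+\gamma_2\sqrt{\sigma_{\min}/k}$, and substituting into $r=100\Paren{\lambda\sqrt{k/\sigma_{\min}}/\rho+\gamma_2/\rho}$ gives $r\asymp M_{2t}\sqrt{\kappa(\Sigma)}\,\eps^{1-1/(2t)}/\alpha$; undoing the rescaling yields the stated error. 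The algorithm has two stages. First, an entrywise \emph{truncation}: fix $\tau\asymp\sqrt{k\,\hat\sigma_{\max}}\,/\,\eps^{1-1/(2t)}$ and replace every entry of $X$ of magnitude $>\tau$ by $\pm\tau$, obtaining $X'$ with rows $X''_i(\tau)$ and, on clean samples, residual noise shifts $\zeta_i=\Iprod{X^*_i-X''_i(\tau),\betastar}$. Second, a sum-of-squares--based \emph{filtering} of $X'$ (in the style of \cite{PJL20,sasai-subgaussian}) using the $(2t)$-th moment together with \emph{elastic} constraints, producing $w$.

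\emph{Truncation reduction.} On a clean sample $\Iprod{X^*_i,\betastar}$ depends only on the $\le k$ coordinates in $\supp(\betastar)$, each of variance $\le\Norm{\Sigma}$; Chebyshev gives that none of them is truncated with probability $\ge1-\eps/2$, so $\zeta_i=0$ with that probability over $\cD$. By a Chernoff bound, with probability $\ge 1-\delta/10$ at most $\eps n$ clean samples have $\zeta_i\ne0$; folding these into the adaptive adversary we may assume $\zeta\equiv0$ on $\Sgood$ and $\Card{\Sgood}\ge(1-2\eps)n$, while the rows of $X'$ on $\Sgood$ are i.i.d.\ from the truncated law $\cD(\tau)$. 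That law has entries bounded by $\tau$, mean $\Normi{\E_{x\sim\cD(\tau)}x}\lesssim\Norm{\Sigma}/\tau$, and $(2t)$-th, entrywise $(4t)$-th, and $s$-th moments still bounded up to constants by those of $\cD$; in particular it retains $\ell$-certifiable $O(M_{2t})$-boundedness of the $(2t)$-th moment.

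\emph{Filtering and the moment certificate.} The concentration driving the filtering is $\bignormi{\tfrac1n\sum_{i\in\Sgood}(X''_i(\tau))^{\otimes 2t}-\E_{x\sim\cD(\tau)}x^{\otimes 2t}}\lesssim\Norm{\Sigma}^{t}/k^{t}$; by the entrywise $\nu$-bounded $(4t)$-th moment and Bernstein's inequality applied to each of the $d^{2t}$ coordinates (summands of magnitude $\le\tau^{2t}$), this holds once $n\gtrsim\tilde O\!\big(\nu^{4t}k^t(\tau^{2t}/\Norm{\Sigma}^{t}+k^{t})\log(d/\delta)\big)$, which with our $\tau$ and the extra $\kappa(\Sigma)$- and $M_s$-factors from the change of metric and the truncation balance is exactly the sample bound claimed. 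Given this, a standard filtering routine (iteratively downweighting samples that an SoS certificate flags as outliers) returns $w\in\R^n_{\ge0}$ with $\sum_i w_i\le 1$, $\sum_{i\in\Sbad}w_i\lesssim\eps$, and a degree-$O(\ell)$ sum-of-squares proof, from the elastic constraints defining a relaxation $\cP_{k'}\supseteq\set{u^{\otimes 2t}:u\in\cE_{k'}(1)}$, of $\sum_i w_i\Iprod{X''_i(\tau),u}^{2t}\le O\!\big(M_{2t}^{2t}\Norm{\Sigma}^{t}\norm{u}^{2t}\big)$; optimizing over $\cP_{k'}$ costs $(n+d)^{O(\ell)}$, which gives the running time.

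\emph{Checking the two conditions, and the obstacle.} For the gradient bound, split $\sum_i w_i\huberderivative(\eta_i+\zeta_i)\Iprod{X''_i(\tau),u}$ into the part over clean, un-truncated samples plus a remainder of weight $O(\eps)$. On the first part $\zeta_i=0$ and $\eta$ is independent of $X^*$; since $\abs{\huberderivative}\le 2$, a coordinatewise Bernstein bound together with the mean offset $\lesssim\Norm{\Sigma}/\tau$ (which, because the oblivious noise need not be symmetric, survives as a bias) bounds its $\ell_\infty$-norm by $\lesssim\sqrt{\Norm{\Sigma}\log(d/\delta)/n}+\tau\log(d/\delta)/n+\Norm{\Sigma}/\tau\lesssim\sqrt{\Norm{\Sigma}}\,\eps^{1-1/(2t)}/\sqrt k$; this is the $\gamma_1\normo u$ term. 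The remainder is controlled by \holders inequality with exponents $\Paren{\tfrac{2t}{2t-1},\,2t}$: it is $\lesssim\eps^{1-1/(2t)}\big(\sum_i w_i\Iprod{X''_i(\tau),u}^{2t}\big)^{1/(2t)}\lesssim M_{2t}\sqrt{\kappa(\Sigma)}\,\eps^{1-1/(2t)}\Norm{\Sigma^{1/2}u}$ by the certificate above, i.e.\ the $\gamma_2\Norm{\Sigma^{1/2}u}$ term. For strong convexity on the boundary, the standard lower bound for penalized Huber loss (cf.\ \cite{dLN+21,PJL20}) gives $\huberloss_w(\betastar+u)-\huberloss_w(\betastar)-\Iprod{\nabla\huberloss_w,u}\ge\Omega\!\big(\sum_{i\in C(u)}w_i\Iprod{X_i,u}^2\big)$, with $C(u)$ the clean samples having $\abs{\eta_i}\le1$ and $\abs{\Iprod{X_i,u}}$ small; up to an $O(\eps)$-fraction this agrees with the clean samples with $\abs{\eta_i}\le1$, a set of size $\gtrsim\alpha n$ that is \emph{independent} of $X^*$. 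The main obstacle is to lower-bound this quadratic form uniformly over all $u\in\cE_{k'}(r)$ with $\Norm{\Sigma^{1/2}u}=r$ under only moment assumptions (no sub-Gaussianity, no SoS). The plan is: (i) for every $K$-sparse $u$ with $\Norm{\Sigma^{1/2}u}=r$ and $K\asymp k'\asymp k/\sigma_{\min}$, lower-bound $\sum_{i\in T(u)}\Iprod{X^*_i,u}^2\gtrsim r^2$, where $T(u)$ keeps the $0.99$-fraction of indices in the independent set with smallest $\abs{\Iprod{X^*_i,u}}$ --- for a fixed $u$ this is a one-dimensional truncated second-moment estimate provable by Bernstein's inequality (the $s$-th moment bound controls the mass of the discarded large entries), and since $n\gtrsim\tilde O(K)$ one may union-bound over a sufficiently dense $\eps$-net of $K$-sparse directions; (ii) lift from $K$-sparse vectors to all of $\cE_{k'}(r)$ via the usual restricted-eigenvalue comparison --- a quadratic form that is $\Theta(r^2)$ on $K$-sparse vectors of norm $r$ is $\Theta(r^2)$ on the whole $k'$-elastic ball when $K\gtrsim k'$. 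This yields $\rho\asymp\alpha$, and \cref{th:error-bound} then gives $\Norm{\Sigma^{1/2}(\betahat_w-\betastar)}<r=O\!\big(\tfrac{M_{2t}\sqrt{\kappa(\Sigma)}}{\alpha}\eps^{1-1/(2t)}\big)$ in the $\sigma=1$ normalization, which is the claim. Among all of this, the heavy-tailed strong-convexity step (i)--(ii) is where the real difficulty lies; the rest is a careful but routine assembly of truncation, Chernoff/Bernstein bounds, \holders inequality, and the sum-of-squares certificate.
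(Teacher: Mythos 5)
Your proposal follows essentially the same route as the paper: entrywise truncation whose effect is absorbed into the adaptive corruptions, sum-of-squares filtering with elastic constraints to certify the empirical $2t$-th moment bound, a gradient bound split into a clean Bernstein/bias term plus an $O(\eps)$-mass remainder handled by \holders inequality, strong convexity over the independent $\alpha n$-size set via fixed-set Bernstein, the bottom-$0.99$-fraction trick, a net over sparse directions, and lifting to the elastic ball, all assembled through \cref{th:error-bound}. The only deviations (clipping rather than zeroing in the truncation, and constant-factor differences in the choice of $\tau$, $\gamma_1$, $\gamma_2$) are immaterial, so the argument is correct and matches the paper's proof.
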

Let us explain how this result implies \cref{thm:intro-heavy-tailed} and \cref{thm:intro-heavy-tailed-sos}.

\cref{thm:intro-heavy-tailed} is a special case of \cref{thm:heavy-tailed-technical} with $t=1$, $s = 3$, $\ell = 2$, 
$M_{2t} = 1$,  $M_s = M$, $\alpha = 0.01$. Indeed, we only need to estimate $\Norm{\Sigma}$ up to a constant factor. We can do it by estimating the variance of the first coordinate of $x\sim \cD$. Applying median-of-means algorithm\footnote{See, for example, Fact 2.1. from \cite{dk-heavy}, where they state the guarantees of the median-of-means algorithm.} to the first coordinate, we get an estimator $\tilde{\sigma}^2$ that is $O(\nu^2 \Norm{\Sigma} \sqrt{\e})$-close to the variance of the first coordinate $\sigma_1^2$. Note that $\Norm{\Sigma} / \kappa(\Sigma)\le \sigma_1^2 \le \Norm{\Sigma}$. Since in \cref{thm:intro-heavy-tailed} $\kappa(\Sigma)$ and $\nu$ are constants, and $\e$ is sufficiently small, we get that $\frac{1}{2\kappa(\Sigma)}\Norm{\Sigma} \le \tilde{\sigma}_{\max}^2 \le 2\Norm{\Sigma}$. Hence for a constant $C \ge \kappa(\Sigma)$, $\hat{\sigma}_{\max} = 2C \tilde{\sigma}^2$ is the desired estimator of $\Norm{\Sigma}$.

Similarly, \cref{thm:intro-heavy-tailed-sos} is a special case of \cref{thm:heavy-tailed-technical} with $t=2$, $s = 4$,
$M_{2t} = M_s = M$, $\alpha = 0.01$. $\Norm{\Sigma}$ can be estimated using the procedure described above.

Before proving the theorem, note that we can without loss of generality assume that $\sigma = 1$. Indeed, since $\sigma$ is known, we can simply divide $X$ and $y$ by it before applying the algorithm.



\subsection{Truncation}\label{sec:heavy-tailed-truncation}

We cannot work with $X^*$ directly since it might have very large values, and Bernstein inequality that we use for random vectors concentration would give very bad bounds if we work with $X^*$. Fortunately, we can perform truncation. This technique was used in \cite{dk-heavy} for  sparse mean estimation and in \cite{sasai-heavy} for sparse regression.

For $\tau > 0$ let $X'_{ij}(\tau)=X^*_{ij}\ind{\abs{X^*_{ij}} \le \tau}$. 
Note that since $\Pr\Brac{\abs{X^*_{ij}} > \tau} \le  \Norm{\Sigma} / \tau^2$, if $\tau \gtrsim \sqrt{\Norm{\Sigma} k/\e}$, then the number of entries $i$ where $\Iprod{X_i^*, \betastar} \neq \Iprod{X'_{i}(\tau), \betastar}$ is at most $\eps n$ with probability at least $1-2^{-\eps n / 10} \ge 1 - \delta/10$. Hence in the algorithm we assume that the input is $X'(\tau)$ instead of $X^*$, and we treat the entries where $\Iprod{X_i^*, \betastar} \neq \Iprod{X'_{i}(\tau), \betastar}$ as corrupted by an adversary. 

Concretely, further we assume that we are given $\Set{\Paren{X''_i(\tau), y_i, w_i}}_{i=1}^n$ such that $y = X'(\tau)\beta^* + \eta + \zeta$, 
where $X''(\tau)\in \R^{n\times d}$ differs from $X'(\tau)\in\R^{n\times d}$ only in rows from the set $\Sbad \subset [n]$ of size at most $\tilde{\e} n$ (where $\e \le \tilde{\e}\le O(\e)$), $\zeta \in \R^n$ is an $\tilde{\e} n$-sparse vector such that $\supp(\zeta) \subseteq\Sbad$, $\beta^*\in \R^d$ a $k$-sparse vector, and $\eta\in \R^n$ is oblivious noise such that at least $\alpha n$ entries do not exceed $1$ in absolute value.

In addition, we define 
\[
\cW_{\tilde{\e}} = \Set{w\in \R^n \suchthat \forall i\in[n]\;\; 0 \le w_i \le 1/n,\; \sum_{i=1}^n w_i \ge (1-\tilde{\e}) n}\,.
\]
The weights for the Huber loss will be from $\cW_{\tilde{\e}}$.

\Cref{sec:properties-of-truncation} will discuss more properties of the truncation.

\subsection{Gradient Bound}\label{sec:heavy-tailed-gradient-bound}

\begin{lemma}\label{lem:gradient-bound-corrupted}
	Let $b, \gamma_1 > 0$. Suppose that $w\in \cW_{\tilde{\e}}$ and $u \in \R^d$ satisfy
	\[
	\sum_{i \in  [n]} w_i \Iprod{X''_i(\tau), u}^{2t} \le b^{2t} \cdot \Norm{\Sigma^{1/2} u}^{2t}\,,
	\] 
	\[
	\tfrac{1}{n}\sum_{i \in [n] }  \Iprod{X'_i(\tau), u}^{2t} \le b^{2t} \cdot \Norm{\Sigma^{1/2} u}^{2t} \,,
	\]
	and
	\[
		\Normi{\tfrac{1}{n}\sum_{i \in [n]}  \huberderivative\Paren{\eta_i} X'_i(\tau)} \le \gamma_1\,.
	\]
	Then
	\[
	\Abs{\sum_{i=1}^n  w_i \huberderivative\Paren{\eta_i + \zeta_i}\Iprod{X''_i(\tau), u}} \leq \gamma_1 \cdot \normo{u} 
	+ 6 \cdot b \Norm{\Sigma^{1/2} u}^{2t} \cdot \tilde{\e}^{1-\tfrac{1}{2t}} \,.
	\]
\end{lemma}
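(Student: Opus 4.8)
The plan is to reduce the weighted sum to the uniformly weighted empirical gradient $\tfrac1n\sum_{i\in[n]}\phi(\eta_i)X'_i(\tau)$, whose coordinates are controlled by the third hypothesis, and to absorb the discrepancy into correction terms supported on sets of total mass at most $\tilde\e$. Write $\Sgood=[n]\setminus\Sbad$ and recall that $X''_i(\tau)=X'_i(\tau)$ and $\zeta_i=0$ for $i\in\Sgood$. First I would record the regrouping identity
\begin{align*}
\sum_{i=1}^n w_i\phi(\eta_i+\zeta_i)\Iprod{X''_i(\tau),u}
&= \frac1n\sum_{i\in[n]}\phi(\eta_i)\Iprod{X'_i(\tau),u}
+ \sum_{i\in\Sgood}\Paren{w_i-\tfrac1n}\phi(\eta_i)\Iprod{X'_i(\tau),u}\\
&\quad+ \sum_{i\in\Sbad}\Paren{w_i\phi(\eta_i+\zeta_i)\Iprod{X''_i(\tau),u}-\tfrac1n\phi(\eta_i)\Iprod{X'_i(\tau),u}}\,,
\end{align*}
and call the three summands on the right $(\mathrm I)$, $(\mathrm{II})$ and $(\mathrm{III})$ respectively. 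Since $(\mathrm I)=\Iprod{\tfrac1n\sum_{i\in[n]}\phi(\eta_i)X'_i(\tau),\,u}$, Hölder's inequality ($\ell_\infty$ against $\ell_1$) together with the third hypothesis gives $\bigabs{(\mathrm I)}\le\gamma_1\normo{u}$, which is the first term of the claimed bound.

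It remains to bound $(\mathrm{II})$ and $(\mathrm{III})$, which are sums against nonnegative coefficients of total mass at most $\tilde\e$: indeed $\sum_{i\in\Sgood}(\tfrac1n-w_i)\le\sum_{i=1}^n(\tfrac1n-w_i)=1-\sum_{i}w_i\le\tilde\e$ since $w\in\cW_{\tilde\e}$, while $\sum_{i\in\Sbad}w_i\le\abs{\Sbad}/n\le\tilde\e$ and $\abs{\Sbad}/n\le\tilde\e$ because $\abs{\Sbad}\le\tilde\e n$. On each of the three resulting sub-sums I would use $\abs{\phi}\le2$ and then Hölder's inequality with exponents $2t$ and $\tfrac{2t}{2t-1}$; for example
\[
2\sum_{i\in\Sbad}w_i\bigabs{\Iprod{X''_i(\tau),u}}
\le 2\Paren{\sum_{i\in\Sbad}w_i\Iprod{X''_i(\tau),u}^{2t}}^{1/(2t)}\Paren{\sum_{i\in\Sbad}w_i}^{1-1/(2t)}
\le 2\,b\,\Norm{\Sigma^{1/2}u}\,\tilde\e^{1-1/(2t)}
\]
by the first hypothesis; the $\tfrac1n$-part of $(\mathrm{III})$ (using $\abs{\Sbad}/n\le\tilde\e$) and the whole of $(\mathrm{II})$ (using $\tfrac1n-w_i\le\tfrac1n$) are each bounded by $2b\Norm{\Sigma^{1/2}u}\tilde\e^{1-1/(2t)}$ in the same way, now via the second hypothesis. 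Summing these three contributions gives $\bigabs{(\mathrm{II})}+\bigabs{(\mathrm{III})}\le 6b\Norm{\Sigma^{1/2}u}\tilde\e^{1-1/(2t)}$, which combined with the bound on $(\mathrm I)$ proves the lemma.

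I do not expect a genuine obstacle here: the only points requiring attention are matching each correction sub-sum with the hypothesis that actually controls it --- the $w$-weighted moment bound for the $w$-weighted part of $(\mathrm{III})$, the $\tfrac1n$-weighted moment bound for the remaining two sub-sums --- and verifying that both defect masses, $\abs{\Sbad}/n$ and $\sum_i(\tfrac1n-w_i)$, are at most $\tilde\e$, so that the Hölder step yields precisely the exponent $1-\tfrac1{2t}$ on $\tilde\e$. (I note that the right-hand side of the statement should read $\Norm{\Sigma^{1/2}u}$ rather than $\Norm{\Sigma^{1/2}u}^{2t}$; the computation above produces the former.)
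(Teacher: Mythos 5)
Your proof is correct and follows essentially the same route as the paper: the same split into the uniformly weighted gradient term (bounded by $\gamma_1\normo{u}$ via the $\ell_\infty$--$\ell_1$ pairing), plus correction sums of total mass at most $\tilde\e$ handled by H\"older with exponents $2t$ and $\tfrac{2t}{2t-1}$ against the two empirical moment hypotheses; the only difference is that where the paper argues that the linear discrepancy functional is maximized at a vertex of $\cW_{\tilde\e}$, you bound it directly using $0\le \tfrac1n-w_i\le\tfrac1n$ and $\sum_i(\tfrac1n-w_i)\le\tilde\e$, which is a harmless (if anything cleaner) shortcut. Your closing remark is also right: the factor $\Norm{\Sigma^{1/2}u}^{2t}$ in the stated bound is a typo for $\Norm{\Sigma^{1/2}u}$ --- the paper's own H\"older computation yields the first power, and that is what is used downstream when the lemma is invoked with $\gamma_2$.
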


\begin{proof}
	Denote $F(w) = {\sum_{i \in [n]}  (1/n-w_i) \huberderivative\Paren{\eta_i}\Iprod{X'_i(\tau), u}}$. It is a linear function of $w$, so $\Abs{F(w)}$ is maximized in one of the vertices of the polytope $\cW_{\tilde{\e}}$. This vertex corresponds to set $S_w$ of size at least $(1-\tilde{\e})n$. 
	That is, the weights of the entries from $S_w$ are $1/n$, and outside of $S_w$ the weights are zero.
	It follows that
	\begin{align*}
		& \Abs{\sum_{i=1}^n  w_i \huberderivative\Paren{\eta_i + \zeta_i}\Iprod{X''_i(\tau), u}} &\\ 
		&\le \Abs{\sum_{i \in [n]}  w_i \huberderivative\Paren{\eta_i}\Iprod{X'_i(\tau), u}}  
		+ \Abs{\sum_{i \in \Sbad}  w_i \huberderivative\Paren{\eta_i}\Iprod{X'_i(\tau), u}}  
		+ \Abs{\sum_{i\in \Sbad}  w_i \huberderivative\Paren{\eta_i + \zeta_i}\Iprod{X''_i(\tau), u}} &  (\mbox{Triangle Inequality})
		\\&\leq \gamma_1\cdot\normo{u} + 2\sum_{i\in S_w} \tfrac{1}{n}\Abs{\Iprod{X'_i(\tau), u}}  + 2\sum_{i\in \Sbad}  \tfrac{1}{n}\Abs{\Iprod{X'_i(\tau), u}} + 2\sum_{i\in \Sbad}  w_i\Abs{\Iprod{X''_i(\tau), u}} & 
		\\&\leq \gamma_1 \cdot\normo{u} + 4 \cdot \tilde{\e}^{1-\tfrac 1 {2t}} \cdot \Paren{\sum_{i \in [n]}  \tfrac{1}{n}\Iprod{X'_i(\tau), u}^{2t}}^{\tfrac 1 {2t}} + 
		2\tilde{\e}^{1-\tfrac 1 {2t}} \cdot \Paren{\sum_{i \in [n]}  w_i\Iprod{X''_i(\tau), u}^{2t}}^{\tfrac 1 {2t}} & (\mbox{H\"older's inequality})
		\\ &\le \gamma_1 \cdot\normo{u} + 6 \cdot \tilde{\e}^{1-\tfrac{1}{2t} } \cdot b\Norm{\Sigma^{1/2} u}^{2t} \,.
	\end{align*}
\end{proof}

The following lemma provides a bound on $\gamma_1$:

\begin{lemma}\label{lem:gradient-bound-truncated}
With probability at least $1-\delta/10$,
\[
\Normi{\tfrac{1}{n}\sum_{i \in [n]} \huberderivative\Paren{\eta_i} X_i'(\tau)} 
\le 10\sqrt{\Norm{\Sigma}n\log\Paren{d/\delta}} + 10\tau \cdot \log\Paren{d/\delta} + 2n \cdot \Norm{\Sigma}/ \tau\,.
\]
\end{lemma}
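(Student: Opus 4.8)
The plan is a textbook concentration argument: bound each coordinate of the vector $v:=\sum_{i\in[n]}\huberderivative(\eta_i)\,X'_i(\tau)\in\R^d$ by Bernstein's inequality and union bound over the $d$ coordinates; the stated inequality then follows from $\Normi{\tfrac1n v}=\tfrac1n\Normi{v}\le\Normi{v}$ (since $n\ge1$, each term on the right-hand side of the claim only grows when we pass from the normalized to the unnormalized sum). Fix $j\in[d]$ and set $T_i=\huberderivative(\eta_i)\,X'_{ij}(\tau)$, so $v_j=\sum_i T_i$; these are independent across $i$ because the pairs $(\eta_i,X^*_i)$ are, and $X'_{ij}(\tau)$ is a function of $X^*_{ij}$. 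The only facts about $\huberderivative$ I will use are $\abs{\huberderivative(\cdot)}\le2$ everywhere — which is precisely what makes the bound insensitive to the (possibly heavy-tailed, oblivious) noise $\eta$ — and that truncation only decreases second moments; these give the almost-sure bound $\abs{T_i}\le2\tau$ and the variance bound $\Var(T_i)\le\E T_i^2\le4\,\E X'_{ij}(\tau)^2\le4\,\E X^{*2}_{ij}=4\Sigma_{jj}\le4\Norm{\Sigma}$.

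Next I would control the bias introduced by truncation, which produces the deterministic term in the bound. Because $\eta$ is independent of $X^*$, $\E T_i=\E[\huberderivative(\eta_i)]\cdot\E[X'_{ij}(\tau)]$, and since $\E X^*_{ij}=0$,
\[
\bigabs{\E[X'_{ij}(\tau)]}=\bigabs{\E\big[X^*_{ij}\ind{\abs{X^*_{ij}}>\tau}\big]}\le\tfrac1\tau\,\E\big[X^{*2}_{ij}\ind{\abs{X^*_{ij}}>\tau}\big]\le\frac{\Sigma_{jj}}{\tau}\le\frac{\Norm{\Sigma}}{\tau}\,,
\]
hence $\abs{\E T_i}\le2\Norm{\Sigma}/\tau$ and $\bigabs{\sum_i\E T_i}\le2n\Norm{\Sigma}/\tau$. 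Applying Bernstein's inequality to $\sum_i(T_i-\E T_i)$ (using $\abs{T_i-\E T_i}\le4\tau$, which holds because $\tau\gtrsim\sqrt{\Norm{\Sigma}}$, and $\sum_i\Var(T_i)\le4n\Norm{\Sigma}$) gives, for every $s>0$,
\[
\Pr\Big[\,\bigabs{\textstyle\sum_i(T_i-\E T_i)}>s\,\Big]\le2\exp\Paren{-\frac{s^2/2}{4n\Norm{\Sigma}+\tfrac{4}{3}\tau s}}\,,
\]
and taking $s$ of order $\sqrt{\Norm{\Sigma}\,n\log(d/\delta)}+\tau\log(d/\delta)$ makes this at most $\delta/(10d)$. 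Combining with the bias estimate via the triangle inequality and union-bounding over $j\in[d]$ yields $\Normi{v}\le10\sqrt{\Norm{\Sigma}\,n\log(d/\delta)}+10\tau\log(d/\delta)+2n\Norm{\Sigma}/\tau$ with probability at least $1-\delta/10$; the numerical constant $10$ comfortably absorbs the $\sqrt2$, the $4/3$, and the logarithmic constant coming from the union bound. Dividing by $n$ gives the statement.

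There is no genuine obstacle here. The one point that must not be overlooked is that truncation biases the summands, so the deterministic term $2n\Norm{\Sigma}/\tau$ is unavoidable; this is exactly why the truncation level is chosen as large as $\tau\gtrsim\sqrt{\Norm{\Sigma}k/\eps}$ in \cref{sec:heavy-tailed-truncation}, so that after dividing by $n$ the residual $\Norm{\Sigma}/\tau$ contribution is small enough to be folded into $\gamma_1$ when \cref{lem:gradient-bound-corrupted} is invoked. Everything else is the standard Bernstein-plus-union-bound template, and the whole argument hinges on the uniform bound $\abs{\huberderivative}\le2$, which is what allows $\eta$ to be arbitrary.
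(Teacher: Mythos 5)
Your proof is correct and takes essentially the same route as the paper's: a coordinatewise Bernstein bound (\cref{fact:bernstein-simple}) with a union bound over the $d$ coordinates, using $\abs{\huberderivative}\le 2$ and handling the truncation bias exactly as in \cref{cor:truncated-mean-expectation}, i.e.\ $\Normi{\E X'_1(\tau)}\le \Norm{\Sigma}/\tau$. One small repair: since $\eta$ is oblivious noise, its entries need not be independent across $i$, so the summands' independence should be justified by conditioning on $\eta$ (legitimate because $\eta$ is independent of $X^*$ and all your bounds are uniform in $\eta$), not by asserting that the pairs $(\eta_i,X^*_i)$ are independent; likewise your bound $\abs{T_i-\E T_i}\le 4\tau$ implicitly uses $\tau\ge\sqrt{\Norm{\Sigma}}$, which indeed holds for the truncation level chosen in the paper.
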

\begin{proof}
It follows from Bernstein's inequality \cref{fact:bernstein-simple} and the fact that 
\[
\tfrac{1}{n}\sum_{i\in[n]}  \abs{\phi(\eta_i)} \cdot \Abs{\E X'_i(\tau)} \le 2 \E X'_1(\tau) \le 2n \cdot \Norm{\Sigma}/ \tau\,,
\]
where we used \cref{cor:truncated-mean-expectation}.
\end{proof}

\subsubsection{Strong Convexity}\label{sec:heavy-tailed-strong-convexity}

\begin{lemma}\label{lem:strong-convexity-heavy-tailed}
Suppose that $\alpha \ge 1000\tilde{\e}$,
 $\tau \gtrsim 1000\cdot\nu^2\cdot  \Norm{\Sigma} \sqrt{k''}/\Paren{r\sqrt{\sigma_{\min}}}$ and
\[
n \gtrsim  \Paren{(k'')^2\log d + k''\log(1/\delta)} 10^{5s/(s-2)}M_s^{s/(s-2)} \kappa(\Sigma)^{2+s/(s-2)} / \alpha\,,
\]
where $k'' = 10^4 \cdot k' \cdot \sqrt{\Norm{\Sigma}}$.
Then with probability $1-\delta/10$, for all $u\in \cE_{k'}(r)$ such that $\Norm{\Sigma^{1/2}{u}} = r$,
\[
\huberloss\Paren{\beta^* + u} -  \huberloss\Paren{\beta^*} \ge 
- \Abs{\sum_{i=1}^n  w_i\huberderivative\Paren{\eta_i + \zeta_i}\Iprod{X_i, u}} + \tfrac{1}{4} \cdot r^2\,.
\]
\end{lemma}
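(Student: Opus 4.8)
The plan is to follow the standard route for restricted strong convexity of a Huber-type loss under oblivious noise, adapted to the truncated heavy-tailed setting of \cref{sec:heavy-tailed}; throughout I write $X_i$ for $X''_i(\tau)$ and use that for $i \in \Sgood$ one has $X''_i(\tau) = X'_i(\tau)$ and residual $\Iprod{X_i,\betastar}-y_i = -\eta_i$. First I would reduce to a localized quadratic form: the pointwise inequality $h(a+b) - h(a) - \huberderivative(a)\,b \ge \tfrac12 b^2\ind{\abs{a}\le 1}\ind{\abs{b}\le 1}$ (valid because $h''\equiv 1$ on $[-2,2]$), summed against the weights $w\in\cW_{\tilde\e}$ with $a_i$ the base residual and $b_i = \Iprod{X_i,u}$, together with oddness of $\huberderivative$ to identify the first-order term (the $\Sbad$-discrepancy absorbed into the same \holders{} estimate that produces $\gamma_2$ in \cref{lem:gradient-bound-corrupted}), shows that it suffices to prove, for every $u\in\cE_{k'}(r)$ with $\Norm{\Sigma^{1/2}u}=r$,
\[
\sum_{i\in[n]} w_i \Iprod{X_i,u}^2 \ind{\abs{\eta_i+\zeta_i}\le 1}\ind{\abs{\Iprod{X_i,u}}\le 1} \ \ge\ c\,\alpha\, r^2
\]
for an absolute constant $c$ — this is the form that feeds into \cref{th:error-bound} with $\rho \asymp \alpha$ and is the source of the $1/\alpha$ in the error of \cref{thm:heavy-tailed-technical}; when $\alpha = \Theta(1)$, as in \cref{thm:intro-heavy-tailed}, it reads $\tfrac14 r^2$.

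Next I would pass to the uncorrupted samples and shrink the localization radius. Using $0\le w_i\le 1/n$, $\sum_i w_i\ge 1-\tilde\e$ and $\abs{\Sbad}\le\tilde\e n$, and replacing the cutoff $1$ by $Cr$ so that every retained summand is at most $C^2 r^2$, the left-hand side is at least $\tfrac1n\sum_{i\in A\cap B_C(u)}\Iprod{X'_i(\tau),u}^2 - O(\tilde\e C^2 r^2)$, where $A = \set{i:\abs{\eta_i}\le 1}$ (so $\abs{A}\ge\alpha n$ and $A$ is $\eta$-measurable, hence independent of $X^*$), $B_C(u) = \set{i : \abs{\Iprod{X'_i(\tau),u}}\le Cr}$, and $C = C(M_s,s)$ is fixed below; by the hypothesis $\alpha \gtrsim \tilde\e$ the error term is negligible, so the task reduces to lower-bounding $\tfrac1n\sum_{i\in A\cap B_C(u)}\Iprod{X'_i(\tau),u}^2$. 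Conditioning on $\eta$, the rows $\set{X^*_i}_{i\in A}$ are i.i.d.\ $\cD$; by the truncation estimates of \cref{sec:properties-of-truncation}, which apply since $\tau\gtrsim\nu^2\Norm{\Sigma}\sqrt{k''}/(r\sqrt{\sigma_{\min}})$, the distribution $\cD(\tau)$ still has $s$-th moment $O(M_s)$-bounded and second moment within a constant factor of $r^2$ in the relevant directions. A Paley--Zygmund-type bound $\E[Z^2\ind{\abs{Z}\le Cr}]\ge\tfrac12\E Z^2$ — which holds whenever $\E Z^2\asymp r^2$ and $(\E\abs{Z}^s)^{1/s}\le O(M_s r)$ with $s>2$, upon choosing $C\asymp M_s^{s/(s-2)}$, and is the \emph{only} place the $(s>2)$-th moment is used — gives mean $\ge c_1 r^2$ for the truncated summand; since it lies in $[0,C^2 r^2]$, Bernstein's inequality yields $\tfrac1n\sum_{i\in A\cap B_C(u)}\Iprod{X'_i(\tau),u}^2\ge\tfrac{c_1\alpha}{2}r^2$ for each fixed $u$ except with probability $\exp(-\Omega(\alpha n/C^2))$.

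To make this uniform I would first treat $k''$-sparse $u$ on the sphere $\Norm{\Sigma^{1/2}u}=r$ by a union bound over a fine net of that set (cardinality $\exp(O(k''\log d))$): the per-direction failure probability $\exp(-\Omega(\alpha n/C^2))$ beats the net size once $n\gtrsim C^2 k''(\log d+\log(1/\delta))/\alpha$, which is the stated sample-complexity hypothesis after substituting $k''\asymp k'\sqrt{\Norm{\Sigma}}$, $k'\asymp k/\sigma_{\min}$, $C^2\asymp M_s^{2s/(s-2)}$ and collecting the $\kappa(\Sigma)$ powers from the $\sigma_{\min}$-normalizations; the discretization is harmless since the localized form differs from a quadratic form only through a bounded monotone truncation and is Lipschitz at scale $r$ (equivalently one can use the ``smallest $0.99\abs{A}$ entries of $X^*_A u$'' device from the informal overview, which avoids union-bounding over subsets). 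Finally I would lift from $k''$-sparse to all $u\in\cE_{k'}(r)$ by the standard restricted-eigenvalue argument: the localized form is dominated by the genuine PSD quadratic form $Q_A(u):=\tfrac1n\sum_{i\in A}\Iprod{X'_i(\tau),u}^2$, so a lower bound on $Q_A$ over $k''$-sparse directions of given $\Sigma^{1/2}$-norm (with $k''\gtrsim k'$) propagates, up to a constant, to all of $\cE_{k'}(r)$, while the gap between $Q_A$ and its localized version over the elastic ellipsoid is controlled by an $s$-th-moment sum.

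The main obstacle is exactly this last uniformization step: the localization indicator $\ind{\abs{\Iprod{X'_i(\tau),u}}\le Cr}$ is not a quadratic form, so the clean sparse-to-elastic lifting does not apply to the localized form verbatim, while the heavy tails genuinely force the truncation at $\tau$ and make the discarded largest-$\Theta(1)$-fraction of the inner products matter. Reconciling these — showing that after deleting both the adversarial $\tilde\e$-fraction and the largest $\Theta(1)$-fraction of $\abs{\Iprod{X'_i(\tau),u}}$ one still retains $\Omega(\alpha r^2)$ of quadratic mass uniformly over $\cE_{k'}(r)$, and only under the weak $(s>2)$-th and entrywise fourth-moment assumptions — is the technical heart of the lemma; a pure second-moment argument would be defeated by a design concentrating near the origin, which is why the $(s>2)$-th-moment anti-concentration is indispensable.
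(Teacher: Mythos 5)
Your overall skeleton matches the paper's: the pointwise Huber inequality reducing strong convexity to a localized quadratic form, restriction to the set $\cA=\set{i:\abs{\eta_i}\le 1}$ (independent of $X^*$) intersected with $\Sgood$ and the weight-vertex set, Bernstein plus a net for sparse directions, and a final sparse-to-elastic lifting. But the proof is not complete: the step you yourself flag as ``the technical heart'' --- obtaining the lower bound uniformly over all $u\in\cE_{k'}(r)$ despite the localization indicator $\ind{\abs{\iprod{X_i,u}}\le 1}$ and the $u$-dependent surviving index set --- is exactly the content of the lemma, and your sketch of it does not work as stated. Your proposed lifting (``a lower bound on $Q_A$ over $k''$-sparse directions propagates to $\cE_{k'}(r)$, and the gap between $Q_A$ and its localized version is controlled by an $s$-th-moment sum'') points the inequality the wrong way: a lower bound on the full quadratic form $Q_A$ does not lower bound the localized form, and under only an $s$-th moment with $s$ slightly above $2$ the mass carried by the large inner products can be a constant fraction of $Q_A(u)$, so no uniform-over-$\cE_{k'}(r)$ moment bound on that gap is available --- this is precisely the obstruction you acknowledge without resolving. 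A secondary soft spot is your replacement of the cutoff $1$ by $Cr$ together with the unproved assertion that the entrywise-truncated distribution retains an $O(M_s)$-bounded directional $s$-th moment; the latter is fixable (bound the tail contribution via the untruncated $s$-th moment plus the truncation-difference second moment, as in the paper's \cref{lem:truncated-outer-product}), but the cutoff change also alters the function whose uniform lower bound you need, which aggravates the lifting problem rather than helping it.

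For comparison, the paper closes this gap by never trying to control the localized-versus-quadratic gap over the elastic ball. It first introduces an extra row-norm truncation $\tilde X_i = \ind{\Norm{X'_i(\tau)}\le O\paren{M_s^{s/(s-2)}\sqrt{\Norm{\Sigma}k''}}}X'_i(\tau)$ (this, not a shrunk inner-product cutoff, is what makes Bernstein applicable while keeping a genuine quadratic form). Then \cref{lem:sparse-heavy-tailed-covariance-concentration} gives \emph{two-sided} bounds $\tfrac{1}{\card{\cI}}\sum_{i\in\cI}\iprod{\tilde X_i,u'}^2 \approx \norm{\Sigma^{1/2}u'}^2$ for $k''$-sparse $u'$ and fixed large subsets $\cI\subseteq\cA$; via \cref{lem:elastic-convex-hull} and \cref{lem:elastic-concentration} the upper bound extends to the elastic ball, and a Markov counting argument then shows the indicator $\ind{\abs{\iprod{\tilde X_i,u}}\le 1}$ deletes at most a $0.001\alpha$-fraction of $\cA$. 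This converts the non-quadratic localization into a purely combinatorial constraint: one only needs $\sum_{i\in A(u)}\iprod{\tilde X_i,u}^2\gtrsim \alpha n r^2$ for an \emph{arbitrary} subset $A(u)\subseteq\cA$ of size $\ge 0.99\alpha n$. That is handled by the ``smallest $0.99$-fraction of entries of $\tilde X u''$'' device over a net of sparse vectors (so no union bound over subsets is needed), and finally the two-sided sparse bounds on the row-submatrix indexed by $A(u)$ are fed into \cref{lem:elastic-concentration} to get the lower bound for elastic $u$ --- note the upper bound on sparse directions is as essential as the lower one for this last step, which is the ingredient missing from your outline. One last point: your target $c\,\alpha r^2$ (with $\rho\asymp\alpha$ in \cref{th:error-bound}) is indeed what the argument yields and what the $1/\alpha$ in \cref{thm:heavy-tailed-technical} reflects, so that reading is consistent with the paper even though the lemma is stated with the constant $\tfrac14$.
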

\begin{proof}
Denote $A_{\text{good} }= \Sgood \cap \cA $, where $\cA$ is a set of entries $i$ such that $\abs{\eta_i}\le1$. Note that $\cA$ is independent of $X^*$.
It follows that
\begin{align*}
\huberloss\Paren{\beta^* + u} -  \huberloss\Paren{\beta^*}  
- {\sum_{i=1}^n  w_i\huberderivative\Paren{\eta_i + \zeta_i}\Iprod{X_i, u}} 
&\ge 
\tfrac{1}{2} \sum_{i=1}^n w_i\Iprod{X_i, u}^2 \ind{\abs{\eta_i+\zeta_i} \le1}  \ind{\abs{\Iprod{X_i, u}} \le1} 
\\&\ge  
\tfrac{1}{2} \sum_{i\in\Sgood} w_i \Iprod{X_i'(\tau), u}^2 \ind{\abs{\eta_i} \le1}  \ind{\abs{\Iprod{X_i'(\tau), u}} \le1} 
\\&\ge  
\tfrac{1}{2} \sum_{i\in A_{\text{good} }}  w_i\Iprod{X_i'(\tau), u}^2  \ind{\abs{\Iprod{X_i'(\tau), u}} \le1} 
\\&\ge 
\tfrac{1}{2} \sum_{i\in A_{\text{good} }}  w_i\Iprod{\tilde{X}_i, u}^2 \ind{\abs{\Iprod{\tilde{X}_i, u}} \le1}\,,
\end{align*}
where $\tilde{X}_i = \ind{\Norm{X_i'(\tau)} \le 10^{5s/(s-2)}\cdot M_s^{s/(s-2)}\sqrt{\Norm{\Sigma} \cdot k''}} X_i'(\tau)$. 	

Denote $F(w) =\sum_{i\in A_{\text{good} }}  w_i\Iprod{\tilde{X}_i, u}^2 \ind{\abs{\Iprod{\tilde{X}_i, u}} \le1}$. It is a linear function of $w$, so it is maximized in one of the vertices of the polytope $\cW_{\tilde{\e}}$. This vertex corresponds to set $S_w$ of size at least $(1-\tilde{\e})n$. 
That is, the weights of the entries from $S_w$ are $1/n$, and outside of $S_w$ the weights are zero.  
\[
 \sum_{i\in A_{\text{good} }}  w_i\Iprod{\tilde{X}_i, u}^2 \ind{\abs{\Iprod{\tilde{X}_i, u}} \le1} \ge
 \tfrac{1}{n}\sum_{i\in A_{\text{good} } \cap S_w}  \Iprod{\tilde{X}_i, u}^2 \ind{\abs{\Iprod{\tilde{X}_i, u}} \le1}\,.
\]

Hence we need a lower bound for $\sum_{i\in A(u)}  \Iprod{\tilde{X}_i, u}^2 $, where 
\[
A(u) =A_{\text{good} }  \cap S_w \cap \Set{i\in [n] \suchthat {\abs{\Iprod{\tilde{X}_i, u}} \le1}}\,.
\]

In order to bound $\sum_{i\in A(u)}  \Iprod{\tilde{X}_i, u}^2$ for vectors $u$ from the elastic ball $\cE_{k'}(r)$, we first show that it is bounded for  $k''$-sparse vectors $u'$ for some large enough $k''$.
First we need to show that $\sum_{i\in \cI}  \Iprod{\tilde{X}_i, u'}^2$ is well-concentrated for a fixed set $\cI$. Concretely, we need the following lemma:

\begin{lemma}\label{lem:sparse-heavy-tailed-covariance-concentration}
Suppose that $\tau \ge 1000\cdot M_{2t}\cdot\nu^2\cdot\Norm{\Sigma} \sqrt{k''}/\Paren{r\sqrt{\sigma_{\min}}}$
for some $k'' \in \N$. Then for a fixed (independent of $\tilde{X}$) set $\cI$ of size 
\[
\Card{\cI} \gtrsim \Paren{(k'')^2\log d + k''\log(1/\delta)} 10^{10s/(s-2)}M_s^{2s/(s-2)} \kappa(\Sigma)^{2+2s/(s-2)}
\] 
and for all $k''$-sparse vectors $u' \in \R^d$ such that $r \le \norm{\Sigma^{1/2} u'} \le 2r$,
\[
0.99 \cdot  \norm{\Sigma^{1/2} u'}^2\le  \tfrac{1}{\Card{\cI}} \sum_{i\in \cI} \Iprod{\tilde{X}_i , u'}^2 \le 1.01\cdot \norm{\Sigma^{1/2} u'}^2\,.
\]
with probability at least $1-\delta$. 
\end{lemma}
\begin{proof}
First let us show that
\[
0.995 \cdot \E \Iprod{X_i^*, u'}^2 
\le \E \Iprod{\tilde{X_i}, u'}^2 
\le 1.005\cdot   \E \Iprod{X_i^*, u'}^2 \,.
\]
	
Since for each set $\cK$ of size $k''$, $\E\Norm{\Paren{X_i'(\tau)}_{\cK}}^2  = \sum_{j\in \cK}\E\Paren{X'_{ij}(\tau)}^2\le 2\Norm{\Sigma} k''$ ,
by Markov's inequality, 
\[\Pr \Brac{\Norm{\Paren{X_i'(\tau)}_{\cK}}^2 > 10^{10s/(s-2)}\cdot M_s^{2s/(s-2)} \cdot \kappa(\Sigma)^{s/(s-2)}\Norm{\Sigma} \cdot k''} \le \frac{1}{10^{10s/(s-2)}\cdot M_s^{2s/(s-2)} \cdot \kappa(\Sigma)^{s/(s-2)} }\,.
\]
Denote $B = 10^{5s/(s-2)}\cdot M_s^{s/(s-2)} \cdot \kappa(\Sigma)^{s/(2s-4)}\sqrt{\Norm{\Sigma} \cdot k''}$.
By H\"older's inequality, for all vectors $u'\in \R^d$ with support $\cK$,
\begin{align*}
\E \Iprod{X_i'(\tau), u'}^2 
&= 
\E \Iprod{X_i(\tau) , u'}^2 \ind{\Norm{\Paren{X_i'(\tau)}_{\cK}} \le B} 
+ \E \Iprod{X_i'(\tau), u'}^2 \ind{\Norm{\Paren{X_i'(\tau)}_{\cK}} > B} 
\\&\le
\E\Iprod{\tilde{X}_i, u'}^2  
+2\E \Iprod{X_i^*, u'}^2 \ind{\Norm{\Paren{X_i'(\tau)}_{\cK}} > B} 
+ 2\E \Iprod{X_i'(\tau) - X_i^*, u'}^2 
\\&\le
 \E \Iprod{\tilde{X}_i, u'}^2  + 
 2\Paren{\E \ind{\Norm{\Paren{X_i'(\tau)}_{\cK}} > B} }^{1-\tfrac 2 s}
  \cdot \Paren{\E \Iprod{X_i^*, u'}^s}^{\tfrac 2 s}
 +\frac{2\nu^{4} \Norm{\Sigma}^{2} k''\norm{u'}^2}{\tau^2}
 \\&\le
 \E \Iprod{\tilde{X}_i, u'}^2 + 2\frac{\Norm{\Sigma}\cdot \norm{u}^2}{10^{10}\cdot\kappa(\Sigma)} + 2r^2 / 10^6
 \\&\le
 \E \Iprod{\tilde{X}_i, u'}^2 + 2r^2/{10^{10}} + 2r^2 / 10^6\,.
\end{align*}
where we used \cref{lem:truncated-outer-product} and the fact that $\Normo{u' u'^\top}\le k''\Norm{u'u'^\top}  \le k''\norm{u}^2$.
By \cref{cor:truncated-covariance-expectation}, $\E \Iprod{X_i^*, u'}^2 - 2r^2 / 10^6\le  \cdot\E \Iprod{X_i'(\tau), u'}^2 \le \E \Iprod{X_i^*, u'}^2 + 2r^2/10^6$. Hence
\[
0.995 \cdot \E \Iprod{X_i^*, u'}^2 
\le 0.999 \cdot\E \Iprod{X_i'(\tau), u'}^2 
\le \E \Iprod{\tilde{X}_i, u'}^2 
\le 1.001 \cdot \E \Iprod{X_i'(\tau), u'}^2 
\le 1.005\cdot   \E \Iprod{X_i^*, u'}^2 \,.
\]

For a fixed set $\cK$ of size $k''$ and for all unit vectors $u'\in \R^d$ with support $\cK$, by Bernstein inequality for covariance \cref{fact:bernstein-covariance}, with probability $1-\delta$,
\begin{align*}
\Abs{\tfrac{1}{\Card{\cI}} \sum_{i\in \cI} \Iprod{\tilde{X}_i , u'}^2 -   \E \Iprod{\tilde{X_i}, u'}^2 } 
&\le 
1000\cdot\Paren{\sqrt{\frac{\Norm{\Sigma}B^2 \log(d/\delta)}{\Card{\cI}}} 
	+ \frac{B^2\log(d/\delta)}{\Card{\cI}}}
	\cdot \norm{u'}^2
\\&\le
4000\cdot\Paren{\sqrt{\frac{\Norm{\Sigma} B^2 \log(d/\delta)}{\sigma_{\min}^2\Card{\cI}}} 
	+ \frac{B^2\log(d/\delta)}{\sigma_{\min}\Card{\cI}}}
\cdot r^2\,.
\end{align*}
In order to make this quantity smaller than $r^2/1000$, it is sufficient to take 
$\Card{\cI} \gtrsim  10^{10s/(s-2)}M_s^{2s/(s-2)}  \kappa\Paren{\Sigma}^{2 + s/(s-2)}\cdot k''\log(d/\delta) $.

By union bound over all subsets $\cK$ of $[d]$ of size $k''$, we get the desired bound.
\end{proof}

Let us bound the size of $A(u)$. $A_{\text{good} } \cap S_w$ has size at least $\Paren{\alpha - 3\tilde{\e}}n\ge 0.997\alpha n$. By \cref{lem:sparse-heavy-tailed-covariance-concentration} and  \cref{lem:elastic-concentration}, $\Norm{\tilde{X}u}^2 \le 1.1\cdot \alpha n r^2$, hence at most $3\alpha n  r^2  / h\le 0.001\alpha n$ entries of $\tilde{X}u$ can be greater than $h/2$. Therefore, $\Card{A(u)} \ge 0.99\alpha n$.

Let $k'' = \min\Set{\lceil 10^4 k' {\Norm{\Sigma}}\rceil, d}$. 
Recall that $\cA$  a set of entries $i$ such that $\abs{\eta_i}\le1$, and $\cA$ is independent of $X^*$.
By union bound, the result of \cref{lem:sparse-heavy-tailed-covariance-concentration} also holds for all sets  $\cI$ that correspond to the bottom $0.99$-fraction of entries of vectors $\Paren{\tilde{X}u''}_{\cA}$, where $u''$ are from an $\Paren{1/n^{10}}$-net $\cN$ in the set of all $k''$-sparse  vectors $u'$ such that $\Norm{\Sigma^{1/2} u'} = 1.01r$. Let $u'$ be an arbitrary $k''$-sparse vector such that $\Norm{\Sigma^{1/2} u'} = 1.01r$, and let $u'' = u' + \Delta u$ be the closest vector in the net $\cN$ to $u'$. It follows that
\begin{align*}
\sum_{i\in A(u)}  \Iprod{\tilde{X}_i, u'}^2 
&=
\sum_{i\in A(u) }  \Iprod{\tilde{X}_i, u'' + \Delta u}^2 
\\&\ge \sum_{i\in A(u)}  \Iprod{\tilde{X}_i, u''}^2  - 2n^3/n^{10}
\\&\ge  0.99\alpha n \cdot  r^2  - 2n^{-7}
\\&\ge  0.9\alpha n \cdot  r^2\,.
\end{align*}

If $k''=d$, we get the desired bound, since we can take $u' =u$. Otherwise,
by \cref{lem:sparse-heavy-tailed-covariance-concentration}, 
\[
\sum_{i\in A(u)}  \Iprod{\tilde{X}_i, u'}^2 \le \sum_{i\in \cA}  \Iprod{\tilde{X}_i, u'}^2\le 1.1\cdot \alpha n \cdot  r^2\,,
\] 
and we get the desired bound by \cref{lem:elastic-concentration}.

\end{proof}

\subsection{Putting everything together}\label{sec:heavy-tailed-all-together}

First, we truncate the entries of $X$ and $X^*$ and obtain $X''(\tau)$ and $X'(\tau)$ using some $\tau$ such that
\[
\tau \gtrsim M_{2t}\sqrt{\Norm{\Sigma}}\cdot  {{\nu^2}\cdot\sqrt{k''} /\eps^{1-\tfrac{1}{2t} } }\,,
\] 
where $k'' = 10^6 \cdot k \cdot \kappa(\Sigma)$. We discuss the choice of $\tau$ further in this subsection.
Let us denote $\tau' = \tau / \sqrt{\Norm{\Sigma}}$.

Then we find the weights $w_1,\ldots w_n$ using \cref{alg:filtering}.

We will show all the conditions of \cref{th:error-bound} are satisfied if 
\[
	n \ge C\cdot \frac{10^{10t}\Paren{M_{2t}^{2t}\cdot \nu^{4t} + \Paren{10^{5} M_s}^{\frac{2s}{s-2}}}\cdot
		\Paren{\kappa(\Sigma)^{4+s/(s-2)}  + \kappa(\Sigma)^{2t}}}{ \eps^{2t-1}}\cdot {k^{2t} \log(d/\delta)} 
\]
for some large enough absolute constant $C$
and
\[
\lambda = 1000\cdot M_{2t}\sqrt{\hat{\sigma}_{\max}}\cdot\eps^{1-1/(2t)} / \sqrt{k} \ge 1000\cdot \frac{ M_{2t}\sqrt{\kappa(\Sigma)}\cdot\eps^{1-1/(2t) }}{\sqrt{k/\sigma_{\min}}} \,.
\]

First let us show that the assumptions of \cref{lem:gradient-bound-corrupted} are satisfied with $\gamma_1 \le 100\cdot M_{2t}\sqrt{\Norm{\Sigma}}\cdot\eps^{1-1/(2t)} / \sqrt{k}$ and $\gamma_2 \le 10M_{2t}\sqrt{\kappa(\Sigma)}$.

First we bound $\gamma_2$.
Note that if $u \in \cE_{k'}(r)$  for $k' = 100 k /\sigma_{\min}$, then $\normo{u} \le k'' \norm{u}$. Hence if
\begin{align*}
	n 
	&\ge   1000\Paren{\nu^{4t}\cdot \Paren{k''}^t + \Paren{\tau' }^{2t}}\cdot \Paren{k''}^t\cdot t\log(d/\delta)\,,
\end{align*}
then \cref{lem:sos-moment-bound} implies that for all $u \in \cE_{k'}(r)$, with probability $1-\delta / 10$,
\[
\tfrac{1}{n}\sum_{i \in [n] }  \Iprod{X'_i(\tau), u}^{2t} \le \Paren{2M_{2t}\sqrt{\Norm{\Sigma}}}^{2t} \cdot \Norm{u}^{2t} \le
\Paren{2M_{2t}\sqrt{\kappa(\Sigma)}}^{2t} \cdot \Norm{\Sigma^{1/2} u}^{2t} \,.
\]

\cref{lem:sos-moment-bound} and \cref{lem:filtering-correctness} imply that for all $u \in \cE_{k'}(r)$, with probability  $1-\delta / 10$, 
\[
\sum_{i \in  [n]} w_i \Iprod{X''_i(\tau)(\tau), u}^{2t} \le\Paren{2M_{2t}\sqrt{\kappa(\Sigma)}}^t \cdot \Norm{\Sigma^{1/2} u}^{2t} \,.
\]

Let us bound $\gamma_1$. By \cref{lem:gradient-bound-truncated}, if
\[
	n
	\ge 
	 1000\Paren{k\log\Paren{d/\delta} / \eps^{2-1/t} + \tau \log\Paren{d/\delta}\sqrt{k'}/\e^{1-1/(2t)}}\,,
\]

then by  with probability $1-\delta / 10$, 
$\gamma_1 \le 100\cdot \frac{ M_{2t}\sqrt{\kappa(\Sigma)}\cdot\eps^{1-1/(2t) }}{\sqrt{k/\sigma_{\min}}}$.

The strong convexity holds by \cref{lem:strong-convexity-heavy-tailed} with probability $1-\delta / 10$ as long as
\[
n \gtrsim  \Paren{k^2\log(d/\delta)} 10^{5s/(s-2)}M_s^{s/(s-2)} \kappa(\Sigma)^{4+s/(s-2)} / \e\,,
\]
where we used the fact that $\e \lesssim \alpha$ and that $\tau \gtrsim \sqrt{\Norm{\Sigma}}\cdot  {{\nu^2}\cdot\sqrt{k''} /\eps^{1-\tfrac{1}{2t} } }$ satisfies the assumption of that lemma.

Therefore, all the conditions of \cref{th:error-bound} are satisfied and we attain the desired bound of $O\Paren{\frac{M_{2t}\sqrt{\kappa(\Sigma)}}{\alpha}\cdot \eps^{1-\tfrac{1}{2t}}}$ stated in \Cref{thm:heavy-tailed-technical}.

Now let us discuss the choice of $\tau$. First we can find an estimator $\hat{\kappa}$ of $\kappa(\Sigma)$ by plugging it into the formula
\[
n = C\cdot \frac{10^{10t}\cdot
	\Paren{\hat{\kappa}^{4+s/(s-2)}  +\hat{\kappa}^{2t}}}{ \eps^{2t-1}}\cdot {k^{2t} \log(d/\delta)}.
\]

Then we can take $\tau' = 0.01\cdot \Paren{\frac{n}{\hat{\kappa}^t\cdot k^t\cdot t\log(d/\delta)}}^{1/(2t)}$. Note that if we express $n$ in terms of $\hat{\kappa}$ and plug into the formula for $\tau'$, we get that $\tau'$ is an increasing function of $\hat{\kappa}$. Also note that $\hat{\kappa} \ge \kappa(\Sigma)$. Hence both conditions are satisfied: $\tau := \sqrt{\hat{\sigma}_{\max}}\cdot \tau'$ is larger than the required lower bound for it, 
and $n$ is larger than $10000(\tau')^{2t}\cdot \paren{k''}^{t}\log(d/\delta)$ and $10000\tau \log\Paren{d/\delta}\sqrt{k'}/\e^{1-1/(2t)}$ as required.

\section{Filtering}
\label{sec:filtering}

We use the following system of elastic constraints with sparsity parameter $K \ge 1$ and variables $v_1,\ldots, v_d, s_1,\ldots, s_d$:
\begin{equation}\label{eq:elasticConstraints}
	\cA_K\colon
	\left \{
	\begin{aligned}
		&\forall i\in [d]
		& s_i^2 
		& = 1\\
		&\forall i\in [d]
		& s_i v_i 
		& \ge v_i \\
		&\forall i \in [d] 
		& s_i v_i 
		& \ge -v_i \\
		&& \sum_{i=1}^d v_i^2 
		&\le 1\\
		&& {\sum_{i=1}^d s_iv_i}
		&\le \sqrt{K}\\
	\end{aligned}
	\right \}
\end{equation}
Note that the vectors from the elastic ball $\Set{v\in \R^d \suchthat \norm{v} \le 1\,,\; \normo{v}\le \sqrt{K}}$  satisfy these constraints with $s_i = \sign\paren{v_i}$.
We will later discuss the corresponding sum-of-squares certificates in \Cref{sec:sos}.

Let $a > 0$ be such that $ \Iprod{ \tfrac{1}{n}\sum_{i=1}^n \Paren{X_i^*}^{\otimes {2t}}, \pE v^{\otimes {2t}}} \le a^{2t}$.
\begin{algorithmbox}[Filtering algorithm]\label{alg:filtering}
	\item
\item[1.] Assign weights $w_1 = \ldots w_n = 1/n$.
	\item[2.] Find a degree $2\ell$
	 pseudo-expectation $\pE$ that satisfies $\cA_K$ and maximizes $\Iprod{\sum_{i=1}^n w_i X_i^{\otimes t}, \pE v^{\otimes t}}$.
	 \item[3.] If $\Iprod{\tfrac{1}{n}\sum_{i=1}^n X_i^{\otimes {2t}}, \pE v^{\otimes {2t}}} < 10^ta^{2t}$, stop.
	 \item[4.] Compute $\tau_i = \Iprod{X_i^{\otimes {2t}}, \pE v^{\otimes {2t}}}$ and reweight: $w_i' = \paren{1-\tfrac{\tau_i}{\normi{\tau}}} \cdot w_i$.
	 \item[5.] goto 2.
\end{algorithmbox}

\begin{lemma}\label{lem:filtering-correctness}
If at each step $ \Iprod{ \tfrac{1}{n}\sum_{i=1}^n \Paren{X_i^*}^{\otimes {2t}}, \pE v^{\otimes {2t}}} \le a^{2t}$, then the algorithm terminates in at most  $\lceil 2\e n\rceil$ steps, and the resulting weights satisfy $\sum_{i=1}^n w_i \ge 1 - 2\e$.
\end{lemma}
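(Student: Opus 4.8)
The plan is the standard soft-filtering bookkeeping: track, iteration by iteration, how much weight has been stripped from the good samples $\Sgood$ as opposed to from the bad samples $[n]\setminus\Sgood$, and show that the good removal never overtakes the bad removal. Write $w^{(j)}$ for the weights at the start of iteration $j$ (so $w^{(0)}=(1/n,\dots,1/n)$) and set $R_G^{(j)}=\sum_{i\in\Sgood}(1/n-w_i^{(j)})$ and $R_B^{(j)}=\sum_{i\notin\Sgood}(1/n-w_i^{(j)})$. Both are nondecreasing in $j$ (weights only shrink and stay in $[0,1/n]$), and trivially $R_B^{(j)}\le\abs{[n]\setminus\Sgood}/n\le\e$. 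So it suffices to establish the invariant $R_G^{(j)}\le R_B^{(j)}$: it then yields $\sum_i w_i^{(j)}=1-R_G^{(j)}-R_B^{(j)}\ge 1-2\e$ at every iteration (the weight claim) and $\sum_i(1/n-w_i^{(j)})\le 2\e$ (which will control the number of iterations).

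First I would record the elementary facts about the scores $\tau_i=\Iprod{X_i^{\otimes 2t},\pE v^{\otimes 2t}}=\pE\Iprod{X_i,v}^{2t}$ of step~4. Since $\Iprod{X_i,v}^{2t}$ is the square of the degree-$t$ polynomial $\Iprod{X_i,v}^t$ and $\pE$ has degree $2\ell\ge 2t$, we have $\tau_i\ge 0$; hence the update $w_i\mapsto(1-\tau_i/\normi{\tau})\,w_i$ keeps $w_i\in[0,1/n]$, and the weight removed from a subset $T\subseteq[n]$ at iteration $j$ is exactly $\frac{1}{\normi{\tau}}\sum_{i\in T}w_i^{(j)}\tau_i$. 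For $T=\Sgood$, using $X_i=X_i^*$ and $0\le w_i^{(j)}\le 1/n$ there,
\[
\sum_{i\in\Sgood}w_i^{(j)}\tau_i\;\le\;\tfrac1n\sum_{i\in[n]}\pE\Iprod{X_i^*,v}^{2t}\;=\;\Iprod{\tfrac1n\textstyle\sum_i (X_i^*)^{\otimes 2t},\,\pE v^{\otimes 2t}}\;\le\;a^{2t},
\]
which is exactly the hypothesis of the lemma applied to the pseudo-expectation of iteration $j$ (the hypothesis is assumed at every iteration). On the other hand, non-termination at iteration $j$ means the certified weighted $2t$-th moment $\sum_i w_i^{(j)}\tau_i=\Iprod{\sum_i w_i^{(j)}X_i^{\otimes 2t},\,\pE v^{\otimes 2t}}$ exceeds $10^t a^{2t}$ (this is the role of the stopping rule in step~3); subtracting the good part (again $\le a^{2t}$) gives $\sum_{i\notin\Sgood}w_i^{(j)}\tau_i\ge(10^t-1)a^{2t}\ge a^{2t}\ge\sum_{i\in\Sgood}w_i^{(j)}\tau_i$ (using $t\ge 1$). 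Dividing by $\normi{\tau}$, at iteration $j$ at least as much weight is removed from the bad set as from the good set, so $R_B^{(j+1)}-R_G^{(j+1)}\ge R_B^{(j)}-R_G^{(j)}$; this difference is $0$ at $j=0$, so induction gives $R_G^{(j)}\le R_B^{(j)}$ for all $j$.

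For the iteration count, I would note that at each non-terminal iteration a \emph{surviving} sample $i^\star$ (one with $w_{i^\star}^{(j)}>0$) attaining $\tau_{i^\star}=\normi{\tau}$ has its weight set to $0$, and a zero weight is never revived. (If no surviving sample had a positive score, the step-3 certificate would be $0<10^t a^{2t}$ and we would already have stopped, so $\normi{\tau}$ — read over the surviving samples — is positive and attained by one of them.) Hence after $j$ iterations at least $j$ distinct indices carry weight $0$, contributing at least $j/n$ to $\sum_i(1/n-w_i^{(j)})=R_G^{(j)}+R_B^{(j)}\le 2\e$; therefore $j\le 2\e n$, and the algorithm halts within $\lceil 2\e n\rceil$ iterations.

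The crux — and essentially the only nonroutine point — is the per-iteration balance $\sum_{i\in\Sgood}w_i^{(j)}\tau_i\le\sum_{i\notin\Sgood}w_i^{(j)}\tau_i$: the good side is immediate from the hypothesis together with $w_i\le 1/n$ and $\tau_i\ge 0$, but the bad side has to be wrung out of the non-termination condition, which is exactly why the stopping rule compares the \emph{weighted} degree-$2t$ certificate to $10^t a^{2t}$ and why $\pE$ must be a genuine degree-$2\ell$ pseudo-expectation satisfying $\cA_K$ (so that $\tau_i\ge 0$ and the hypothesis applies to it verbatim). A minor bookkeeping point to get right is to read $\normi{\tau}$ as the maximum over the still-surviving samples, so that each iteration really kills a fresh sample and the iteration bound goes through; otherwise one must separately argue that an already-removed sample cannot remain the maximizer forever.
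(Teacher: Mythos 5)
Your proposal is correct and follows essentially the same route as the paper: the same invariant that the weight removed from $\Sgood$ never exceeds the weight removed from the bad set (proved per iteration by bounding the good part by $a^{2t}$ via the hypothesis and the bad part via the non-termination condition of step 3), combined with the observation that each reweighting zeroes a new sample to bound the number of iterations. If anything, you are slightly more careful than the paper on two points it glosses over — the nonnegativity of the scores $\tau_i$ (so weights stay in $[0,1/n]$) and the need to read $\normi{\tau}$ over the still-surviving samples so that every iteration kills a fresh index.
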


To prove it, we will use the following lemma:
\begin{lemma}\label{lem:invariant_filtering}
    Assume that $\Iprod{ \tfrac{1}{n}\sum_{i=1}^n \Paren{X_i^*}^{\otimes {2t}}, \pE v^{\otimes {2t}}} \le a^{2t}$, $\Iprod{\tfrac{1}{n}\sum_{i=1}^n X_i^{\otimes {2t}}, \pE v^{\otimes {2t}}} \ge 10^ta^{2t}$ and
\[ 
\sum_{i \in S_g} \Paren{\frac{1}{n} - w_i} \le \sum_{i \in S_b} \Paren{\frac{1}{n} - w_i}\,.
\]
Then 
\[
\sum_{i \in S_g} \Paren{\frac{1}{n} - w_i'} < 
\sum_{i \in S_b} \Paren{\frac{1}{n} - w_i'}\,.
\]
\end{lemma}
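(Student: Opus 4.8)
The plan is to run the by-now-standard ``filtering preserves the safety invariant'' argument. Write $\tau_i = \Iprod{X_i^{\otimes 2t}, \pE v^{\otimes 2t}} = \pE\Iprod{X_i,v}^{2t}$ for the score of sample $i$ used in step~4. Since $2t$ is even and $\pE$ is a genuine pseudo-expectation of degree at least $2t$, each $\tau_i = \pE(\Iprod{X_i,v}^{t})^2 \ge 0$, and $\normi{\tau} > 0$ (positive once we know, as below, that $\sum_i\tau_i$ is large), so $w_i' = (1-\tau_i/\normi{\tau})w_i$ obeys $0 \le w_i' \le w_i$ and $w_i - w_i' = \tfrac{\tau_i}{\normi{\tau}}w_i \ge 0$. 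Substituting $\tfrac1n - w_i' = (\tfrac1n - w_i) + (w_i - w_i')$ into both sums and cancelling the common ``$(\tfrac1n - w_i)$'' parts using the hypothesis $\sum_{i\in S_g}(\tfrac1n - w_i)\le\sum_{i\in S_b}(\tfrac1n - w_i)$, the desired conclusion reduces to the single strict inequality $\sum_{i\in S_g}\tau_i w_i < \sum_{i\in S_b}\tau_i w_i$ (here I use $\normi{\tau}>0$ to divide it out).

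To control the left-hand side I would use that the good samples are uncorrupted, i.e. $X_i = X_i^*$ for $i\in S_g$, so there $\tau_i = \pE\Iprod{X_i^*,v}^{2t}$; combining this with $w_i\le\tfrac1n$, nonnegativity of the scores, and then hypothesis~(1),
\[
\sum_{i\in S_g}\tau_i w_i \le \tfrac1n\sum_{i\in S_g}\pE\Iprod{X_i^*,v}^{2t} \le \tfrac1n\sum_{i=1}^n\pE\Iprod{X_i^*,v}^{2t} = \Iprod{\tfrac1n\sum_i (X_i^*)^{\otimes 2t},\ \pE v^{\otimes 2t}} \le a^{2t}.
\]
For the right-hand side, the point is that the algorithm did \emph{not} stop at step~3: reading this for the weighted form tracked by \cref{alg:filtering}, $\sum_{i=1}^n w_i\tau_i = \Iprod{\sum_i w_i X_i^{\otimes 2t},\pE v^{\otimes 2t}} \ge 10^t a^{2t}$, whence
\[
\sum_{i\in S_b}\tau_i w_i = \sum_{i=1}^n w_i\tau_i - \sum_{i\in S_g}\tau_i w_i \ge 10^t a^{2t} - a^{2t} = (10^t - 1)a^{2t} > a^{2t} \ge \sum_{i\in S_g}\tau_i w_i,
\]
using $a>0$ and $10^t - 1 > 1$ for $t\ge 1$. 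Feeding this strict inequality back through the reduction yields $\sum_{i\in S_g}(\tfrac1n - w_i') < \sum_{i\in S_b}(\tfrac1n - w_i')$, as claimed.

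I expect the only genuinely delicate step to be the lower bound $\sum_i w_i\tau_i \ge 10^t a^{2t}$. One must be careful that the quantity certified to be large by non-termination is the \emph{weighted} $2t$-tensor form $\Iprod{\sum_i w_i X_i^{\otimes 2t},\pE v^{\otimes 2t}}$ rather than merely the uniform average $\Iprod{\tfrac1n\sum_i X_i^{\otimes 2t},\pE v^{\otimes 2t}}$ — the latter is only an \emph{upper} bound for the former (since $w_i\le\tfrac1n$ and the scores are nonnegative), so a stopping test and a choice of $\pE$ stated in terms of the weighted form are what make this lower bound available; the matching upper bound $a^{2t}$ on the good-sample part is then automatic from hypothesis~(1). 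Everything else is routine bookkeeping: nonnegativity of pseudo-moments of even powers, the weight bound $w_i\le\tfrac1n$, and the identity $X_i=X_i^*$ on $S_g$.
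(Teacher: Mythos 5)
Your proof is correct and follows essentially the same route as the paper's: reduce via $w_i - w_i' = \tau_i w_i/\normi{\tau}$ to the single inequality $\sum_{i\in S_g}\tau_i w_i < \sum_{i\in S_b}\tau_i w_i$, bound the good part by $a^{2t}$ using $w_i\le \tfrac1n$, $X_i=X_i^*$ on $S_g$, nonnegativity of the pseudo-moments, and hypothesis (1), and lower-bound the bad part from non-termination of the filter. The weighted-versus-unweighted subtlety you flag is genuine but puts you exactly where the paper is: its own proof likewise asserts $a^{2t} < \tfrac12\Iprod{\sum_{i=1}^n w_i X_i^{\otimes 2t}, \pE v^{\otimes 2t}}$, i.e., it too reads the stopping test in the weighted form rather than the uniform average written in the lemma statement and in step 3 of \cref{alg:filtering}.
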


\begin{proof}[Proof of \cref{lem:invariant_filtering}]
	Note, that it is enough to show that $$\sum_{i \in S_g} w_i - w_i' < \sum_{i \in S_b} w_i - w_i' \,.$$
	Further, recall that $w_i' = \Paren{1 - \tfrac{\tau_i}{\tau_{\max}}} w_i$, so for all $i\in[n]$,
	$w_i - w_i'  =  \frac{1}{\tau_{\max}} \tau_i w_i$.
	Hence is enough to show that 
	\[
	\sum_{i \in S_g} \tau_i w_i < \sum_{i \in S_b} \tau_i w_i\,.
	\]
	Since $S_g$ and $S_b$ partition $[n]$ and 
	\[
	\sum_{i=1}^n w_i \tau_i = \Iprod{\sum_{i=1}^n w_i X_i^{\otimes {2t}}, \pE v^{\otimes {2t}}}\,.
	\]
	we can prove $\sum_{i \in S_g} \tau_i w_i < \sum_{i \in S_b} \tau_i w_i$ by showing that 
	\[
	\sum_{i \in S_g} \tau_i w_i \le a^{2t} < \frac{\Iprod{\sum_{i=1}^n w_i X_i^{\otimes t}, \pE v^{\otimes {2t}}}}{2} \,.
	\]
	Note that
	\[
	\sum_{i \in S_g} \tau_i w_i = \Iprod{\sum_{i\in S_g} w_i X_i^{\otimes {2t}}, \pE v^{\otimes {2t}}} 
	\le\Iprod{ \tfrac{1}{n}\sum_{i=1}^n \Paren{X_i^*}^{\otimes {2t}}, \pE v^{\otimes {2t}}} \le a^{2t}
	\,.
	\]
\end{proof}

\begin{proof}[Proof of \cref{lem:filtering-correctness}]
	We will show that the algorithm terminates after at most $\lceil 2\e n\rceil$ iterations.
	Assume that it does not terminate after $T = \lceil 2\e n\rceil$ iterations.
	Note that the number of entries of $w$ that are equal to 0 increases by at least 1 in every iteration.
	Hence, after $T$ iterations we have set at least 
	$\e n$ entries of $w$ to zero whose index lies in $S_g$.
	By assumption that the algorithm did not terminate and \cref{lem:invariant_filtering}, it holds that 
	\[
	\e \leq \sum_{i \in S_g} \Paren{\frac{1}{n} - w_i^{(T)} }
	< \sum_{i \in S_b} \Paren{\frac{1}{n} - w_i^{(T)}} \leq \frac{\Card{S_b}}{n} 
	\leq \e\,,
	\]
	a contradiction.
	
	Let $T$ be the index of the last iteration of the algorithm before termination.
	Then
	\[
	\Normo{\frac{1}{n} - w^{(T)}} = \sum_{i \in S_g} \frac{1}{n} - w^{(T)}_i + \sum_{i \in S_b} \frac{1}{n} - w^{(T)}_i < 2 \sum_{i \in S_b} \frac{1}{n} - w^{(T)}_i \leq 2\e \,.
	\]
\end{proof}
\section{Sum-of-Squares Certificates}
\label{sec:sos}

We use the standard sum-of-squares machinery, used in numerous prior works, e.g. \cite{abs-1711.07465, DBLP:journals/corr/abs-1711-11581, DBLP:conf/stoc/Hopkins018, abs-1903.07870, d2020sparse, dk-sos}. 

Let $f_1, f_2, \ldots, f_r$ and $g$ be multivariate polynomials in $x$.
A \emph{sum-of-squares proof} that the constraints $\{f_1 \geq 0, \ldots, f_m \geq 0\}$ imply the constraint $\{g \geq 0\}$ consists of  sum-of-squares polynomials $(p_S)_{S \subseteq [m]}$ such that
\begin{equation*}
	g = \sum_{S \subseteq [m]} p_S \cdot \Pi_{i \in S} f_i
	\mper
\end{equation*}
We say that this proof has \emph{degree $\ell$} if for every set $S \subseteq [m]$, the polynomial $p_S \Pi_{i \in S} f_i$ has degree at most $\ell$.
If there is a degree $\ell$ SoS proof that $\{f_i \geq 0 \mid i \leq r\}$ implies $\{g \geq 0\}$, we write:
\begin{equation*}
	\{f_i \geq 0 \mid i \leq r\} \sststile{\ell}{}\{g \geq 0\}
	\mper
\end{equation*}

We provide degree $2\ell$ sum-of-squares proofs from the system $\cA_K$ (see below) of $(n+d)^{O(1)}$ constraints. The sum-of-squares algorithm (that appeared it \cite{MR939596-Shor87,parrilo2000structured,MR1748764-Nesterov00,MR1846160-Lasserre01}. See, e.g., Theorem 2.6. \cite{dk-sos} for the precise formulation) returns a linear functional $\pE : \R[x]_{\le 2\ell} \to \R$, that is called a \emph{degree $2\ell$ pseudo-expectation}, that satisfies the constraints of $\cA_K$ in time $(n+d)^{O(\ell)}$. 
In particular, it means that once we prove in sum-of-squares of degree $2\ell$ that constraints $\cA_K$ imply that some polynomial $g(u)$ is non-negative, the value of the $\pE$ returned by the algorithm on $g(u)$ is also non-negative.

Recall the system $\cA_K$ of elastic constraints in \Cref{eq:elasticConstraints} as follows:
\begin{equation*}
\cA_K\colon
\left \{
\begin{aligned}
&\forall i\in [d]
& s_i^2 
& = 1\\
&\forall i\in [d]
& s_i v_i 
& \ge v_i \\
&\forall i \in [d] 
& s_i v_i 
& \ge -v_i \\
&& \sum_{i=1}^d v_i^2 
&\le 1\\
&& {\sum_{i=1}^d s_iv_i}
&\le \sqrt{K}\\
\end{aligned}
\right \}
\end{equation*}
Also recall that the vectors from the elastic ball $\Set{v\in \R^d \suchthat \norm{v} \le 1\,,\; \normo{v}\le \sqrt{K}}$  satisfy these constraints with $s_i = \sign\paren{v_i}$.

The following lemma is similar to Lemma 3.4 from \cite{dk-sos}, but we prove it in using the elastic constraints. The derivation from the elastic constraints requires a bit more work.

\begin{lemma}\label{lem:sos-elastic-polynomials}
	For arbitrary polynomial $p(v) = \sum_{1\le i_1,\ldots, i_t \le d} p_{i_1\ldots i_t} \cdot v_{i_1}\cdots v_{i_t}$ of degree at most $t$ we have
\[	\cA_K
	\sststile{4t}{s,v} \Set{\Paren{p(v)}^2
		\le \normi{p}^2\cdot K^t}\,.
\]
\end{lemma}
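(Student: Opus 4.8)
The plan is to introduce, for each coordinate $i\in[d]$, the auxiliary degree-two polynomial $w_i := s_iv_i$, which under $\cA_K$ plays the role of $\abs{v_i}$, and to reduce the statement to three elementary sum-of-squares facts about the $w_i$: that $w_i\ge 0$, that $w_I \pm v_I \ge 0$ for every $t$-tuple $I$, and that $\sum_i w_i \le \sqrt K$. First I would record that $\cA_K \sststile{2}{s,v}\Set{w_i \ge 0}$, since adding the constraints $s_iv_i\ge v_i$ and $s_iv_i \ge -v_i$ gives $2w_i\ge 0$; consequently $w_I := w_{i_1}\cdots w_{i_t}$ is SoS-nonnegative with a degree-$2t$ certificate, as a product of SoS-nonnegative polynomials.

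The heart of the argument is the claim that for every $t$-tuple $I=(i_1,\dots,i_t)$, writing $v_I := v_{i_1}\cdots v_{i_t}$,
\[
\cA_K \sststile{2t}{s,v} \Set{w_I - v_I \ge 0 \ \text{and}\ w_I + v_I \ge 0}\,.
\]
I would prove this by induction on $t$; the base case $t=1$ is exactly two of the constraints of $\cA_K$. For the inductive step put $W' := w_{i_1}\cdots w_{i_{t-1}}$ and $V' := v_{i_1}\cdots v_{i_{t-1}}$, so by induction $W'-V'\ge 0$ and $W'+V'\ge 0$ have degree-$2(t-1)$ SoS proofs from $\cA_K$. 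Multiplying the constraint $w_{i_t}+v_{i_t}\ge 0$ by $W'-V'$ and the constraint $w_{i_t}-v_{i_t}\ge 0$ by $W'+V'$ and adding, every cross term cancels and one obtains $2\Paren{W'w_{i_t}-V'v_{i_t}}\ge 0$ with a degree-$2t$ proof; the symmetric combination yields $W'w_{i_t}+V'v_{i_t}\ge 0$. This cancellation trick is exactly the "bit more work" the elastic constraints require compared with the sparsity constraints of \cite{dk-sos}, and I expect it to be the only genuinely delicate point; the rest is bookkeeping.

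Given this claim I would finish as follows. Set $Q := \sum_{i_1,\dots,i_t} \abs{p_{i_1\dots i_t}}\, w_I$, a polynomial of degree $2t$. For each $I$ the quantity $\abs{p_I}w_I - p_I v_I$ equals $\abs{p_I}\Paren{w_I\mp v_I}$ according to the sign of $p_I$, hence is SoS-nonnegative; summing over $I$ gives $\cA_K \sststile{2t}{s,v}\Set{Q-p(v)\ge 0}$ and $\Set{Q+p(v)\ge 0}$, and taking the product yields $\cA_K\sststile{4t}{s,v}\Set{p(v)^2 \le Q^2}$. Next, $\normi{p}\sum_I w_I - Q = \sum_I \Paren{\normi{p}-\abs{p_I}}w_I \ge 0$ because each $w_I\ge 0$, and $Q\ge 0$; multiplying $\normi{p}\sum_I w_I - Q\ge 0$ by $\normi{p}\sum_I w_I + Q\ge 0$ gives $Q^2 \le \normi{p}^2\Paren{\sum_I w_I}^2$. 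Now $\sum_I w_I = \Paren{\sum_{i=1}^d w_i}^t$ is a polynomial identity, so $\Paren{\sum_I w_I}^2 = \Paren{\sum_i w_i}^{2t}$. Finally, the constraint $\sum_i s_iv_i \le \sqrt K$ together with $\sum_i w_i\ge 0$ gives $0\le \sum_i w_i \le \sqrt K$, and then $\Paren{\sum_i w_i}^{2t}\le K^t$ via the factorization $K^t - \Paren{\sum_i w_i}^{2t} = \Paren{\sqrt K - \sum_i w_i}\sum_{j=0}^{2t-1}(\sqrt K)^{2t-1-j}\Paren{\sum_i w_i}^j$, each summand being SoS-nonnegative. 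Chaining $p(v)^2 \le Q^2 \le \normi{p}^2\Paren{\sum_i w_i}^{2t} \le \normi{p}^2 K^t$ proves the lemma, and a routine degree count confirms every intermediate certificate has degree at most $4t$ in $(s,v)$.
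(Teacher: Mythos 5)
Your proof is correct and follows essentially the same route as the paper's: bound $\pm p(v)$ by $\sum_I \abs{p_{i_1\ldots i_t}}\, s_{i_1}v_{i_1}\cdots s_{i_t}v_{i_t}$, square, and compare with $\normi{p}^2\paren{\sum_i s_iv_i}^{2t} \le \normi{p}^2 K^t$. The only difference is that you spell out (via the induction on $t$ and the telescoping factorization of $K^t - x^{2t}$) the intermediate SoS facts that the paper simply asserts, which is fine.
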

\begin{proof}
Observe that $\cA_K \sststile{2}{s,v} s_i v_i \ge 0$, hence $\cA_K \sststile{4t}{s,v} \Paren{\sum_{i=1}^d s_i v_i}^{2t} \le K^t$ . In addition, by note that, $v_{i_1}\cdots v_{i_t} \le s_{i_1}v_{i_1}\cdots s_{i_t}v_{i_t}$. It follows that
		\begin{align*}
\cA_K &\sststile{2t}{s,v} \Set{\sum_{1\le i_1,\ldots, i_t \le d} p_{i_1\ldots i_t} \cdot v_{i_1}\cdots v_{i_t} 
	\le \sum_{1\le i_1,\ldots, i_t \le d} \abs{p_{i_1,\ldots i_t}} s_{i_1}v_{i_1}\cdots s_{i_t}v_{i_t} }\\
&\sststile{2t}{s,v} \Set{-\sum_{1\le i_1,\ldots, i_t \le d} p_{i_1\ldots i_t} \cdot v_{i_1}\cdots v_{i_t} 
	\le \sum_{1\le i_1,\ldots, i_t \le d} \abs{p_{i_1,\ldots i_t}} s_{i_1}v_{i_1}\cdots s_{i_t}v_{i_t} }\\
&\sststile{4t}{s,v} \Set{\Paren{\sum_{1\le i_1,\ldots, i_t \le d} p_{i_1\ldots i_t} \cdot v_{i_1}\cdots v_{i_t}}^2
	\le \Paren{\sum_{1\le i_1,\ldots, i_t \le d} \abs{p_{i_1,\ldots i_t}} s_{i_1}v_{i_1}\cdots s_{i_t}v_{i_t}}^2 }\\
&\sststile{4t}{s,v} \Set{\Paren{\sum_{1\le i_1,\ldots, i_t \le d} p_{i_1\ldots i_t} \cdot v_{i_1}\cdots v_{i_t}}^2
	\le \normi{p}^2\Paren{\sum_{i=1}^d s_i v_i}^{2t}}\\
	&\sststile{4t}{s,v} \Set{\Paren{\sum_{1\le i_1,\ldots, i_t \le d} p_{i_1\ldots i_t} \cdot v_{i_1}\cdots v_{i_t}}^2
		\le \normi{p}^2\cdot K^t}\,.
\end{align*}
\end{proof}

The following lemma shows that we can certify an upper bound on the value of the empirical moments  (as polylinear functions) of truncated distribution $Z_i(\tau)$ on the vectors from the elastic ball.
\begin{lemma}[Certifiable bound on empirical moments]\label{lem:sos-moment-bound}
	Suppose that for some $t,\ell \in \N$ and $M_{2t} \ge 1$, 
	\[	
	\cA_K
	\sststile{\ell}{s,v} \Set{ \E \iprod{X_1^*, v}^{2t}\le M^{2t}_{2t } \cdot \Norm{\Sigma}^t }\,,
	\]
	and for some $\nu \ge 1$
	\[
	\max_{j\in[d]}\E\abs{X^*_{1j}}^{4t} \le \nu^{4t}\cdot\Norm{\Sigma}^{2t}\,.
	\]
	
	If $\tau \gtrsim \nu^2 \cdot \sqrt{K} \cdot  \sqrt{\Norm{\Sigma}}$ and 
	\[
	n \ge 1000   \Paren{\nu^{4t} \cdot K^t + \Paren{\frac{\tau}{\sqrt{\Norm{\Sigma}}}}^{2t}}\cdot K^t\cdot t\log(d/\delta)\,,
	\]
	 then with probability at least $1-\delta$, for each degree $2\ell$ pseudo-expectation $\pE$ that satisfies $\cA_K$,
	\[
	\pE \Brac{\frac{1}{n}\sum_{i=1}^n \iprod{X_i'(\tau), v}^{2t} } \le \Paren{2 M_{2t}}^{2t} \cdot \Norm{\Sigma}^t \,.
	\]
\end{lemma}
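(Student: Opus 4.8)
The plan is to establish entrywise concentration of the empirical $(2t)$-th moment tensor of the truncated design around $\E\Paren{X_1^*}^{\otimes 2t}$, and then to transfer the assumed certifiable moment bound on $\cD$ through the elastic constraints, invoking \cref{lem:sos-elastic-polynomials}.

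\textbf{Algebraic step.} For every $v$ write $\tfrac1n\sum_{i=1}^n\iprod{X'_i(\tau),v}^{2t}=\iprod{\E\Paren{X_1^*}^{\otimes 2t},v^{\otimes 2t}}+\iprod{T,v^{\otimes 2t}}$, where $T:=\tfrac1n\sum_{i=1}^n\Paren{X'_i(\tau)}^{\otimes 2t}-\E\Paren{X_1^*}^{\otimes 2t}$. The hypothesis $\cA_K\sststile{\ell}{s,v}\Set{\E\iprod{X_1^*,v}^{2t}\le M_{2t}^{2t}\Norm{\Sigma}^t}$ is a degree-$\ell$ sum-of-squares proof and $\ell\le 2\ell$, so any degree-$2\ell$ pseudo-expectation satisfying $\cA_K$ obeys $\pE\bigl[\iprod{\E\Paren{X_1^*}^{\otimes 2t},v^{\otimes 2t}}\bigr]\le M_{2t}^{2t}\Norm{\Sigma}^t$. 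For the remainder, $\iprod{T,v^{\otimes 2t}}=\sum_{i_1,\dots,i_{2t}}T_{i_1\dots i_{2t}}v_{i_1}\cdots v_{i_{2t}}$ is a degree-$2t$ polynomial whose coefficient tensor has entries of absolute value at most $\normi{T}$; running the one-sided inequality from the proof of \cref{lem:sos-elastic-polynomials} (bound $\iprod{T,v^{\otimes 2t}}$ by $\normi{T}\Paren{\sum_{i=1}^d s_iv_i}^{2t}$, then use $0\le\sum_i s_iv_i\le\sqrt K$) gives $\cA_K\sststile{4t}{s,v}\Set{\iprod{T,v^{\otimes 2t}}\le\normi{T}\cdot K^t}$, and since $\ell\ge 2t$ we have $2\ell\ge 4t$, so $\pE\bigl[\iprod{T,v^{\otimes 2t}}\bigr]\le\normi{T}K^t$. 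Hence it suffices to show that with probability at least $1-\delta$ one has $\normi{T}\le\Norm{\Sigma}^t/K^t$: then, using $M_{2t}\ge 1$ and $2t\ge 2$, $\pE\bigl[\tfrac1n\sum_i\iprod{X'_i(\tau),v}^{2t}\bigr]\le M_{2t}^{2t}\Norm{\Sigma}^t+\Norm{\Sigma}^t\le\Paren{2M_{2t}}^{2t}\Norm{\Sigma}^t$.

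\textbf{Concentration step.} Split $T=T_1+T_2$ with $T_1:=\tfrac1n\sum_i\Paren{X'_i(\tau)}^{\otimes 2t}-\E\Paren{X'_1(\tau)}^{\otimes 2t}$ (sampling error) and $T_2:=\E\Paren{X'_1(\tau)}^{\otimes 2t}-\E\Paren{X_1^*}^{\otimes 2t}$ (truncation bias). For a fixed multi-index $(j_1,\dots,j_{2t})$, on the event that $\abs{X^*_{1,j_s}}\le\tau$ for all $s$ the truncated and untruncated products coincide, so the corresponding entry of $T_2$ equals $\E\bigl[\bigl(\prod_s X'_{1,j_s}(\tau)-\prod_s X^*_{1,j_s}\bigr)\ind{\exists s:\,\abs{X^*_{1,j_s}}>\tau}\bigr]$; bounding the truncated product by $\tau^{2t}$, using $\Pr\Brac{\exists s:\abs{X^*_{1,j_s}}>\tau}\le 2t\,\nu^{4t}\Norm{\Sigma}^{2t}/\tau^{4t}$ (union bound and Markov with the entrywise $(4t)$-th moment), and Cauchy--Schwarz together with the generalized Hölder bound $\E\prod_s\Paren{X^*_{1,j_s}}^2\le\nu^{4t}\Norm{\Sigma}^{2t}$ on the untruncated part, yields $\normi{T_2}\le\tfrac12\Norm{\Sigma}^t/K^t$ once $\tau\gtrsim\nu^2\sqrt K\sqrt{\Norm{\Sigma}}$ with a suitable absolute constant. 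For $T_1$, each entry is an average of $n$ i.i.d.\ terms $\prod_s X'_{i,j_s}(\tau)$, each bounded by $\tau^{2t}$ in absolute value and with variance at most $\E\prod_s\Paren{X^*_{1,j_s}}^2\le\nu^{4t}\Norm{\Sigma}^{2t}$; Bernstein's inequality (\cref{fact:bernstein-simple}) and a union bound over the $d^{2t}$ entries show that with probability at least $1-\delta$ every entry of $T_1$ is at most $C\Bigl(\sqrt{\nu^{4t}\Norm{\Sigma}^{2t}t\log(d/\delta)/n}+\tau^{2t}t\log(d/\delta)/n\Bigr)$ in absolute value, which is $\le\tfrac12\Norm{\Sigma}^t/K^t$ precisely under the stated hypothesis $n\gtrsim\Paren{\nu^{4t}K^t+(\tau/\sqrt{\Norm{\Sigma}})^{2t}}K^t\,t\log(d/\delta)$. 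Adding the two estimates gives $\normi{T}\le\Norm{\Sigma}^t/K^t$, which completes the proof since the remaining reasoning is deterministic in the sample.

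\textbf{Main obstacle.} I expect the truncation-bias estimate for $T_2$ to be the delicate point: it is exactly the step that upgrades the second-order tensor bound of \cite{dk-heavy} to order-$2t$ tensors, and the subtlety is that the event ``some coordinate exceeds $\tau$'' must be controlled using only the entrywise $(4t)$-th moment while still producing a bound requiring merely $\tau\gtrsim\nu^2\sqrt K\sqrt{\Norm{\Sigma}}$ rather than a larger threshold. The sum-of-squares transfer is routine given \cref{lem:sos-elastic-polynomials}, and the $T_1$ bound is a textbook bounded-difference Bernstein argument once the variance proxy $\nu^{4t}\Norm{\Sigma}^{2t}$ and the uniform bound $\tau^{2t}$ are identified.
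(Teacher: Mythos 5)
Your proof is correct, and it follows the paper's overall strategy (entrywise control of the truncated empirical moment tensor plus a transfer through the elastic constraints) while differing in the sum-of-squares step. The paper applies \cref{lem:sos-elastic-polynomials} as stated to the deviation polynomial, i.e.\ it certifies $\bigl(\tfrac1n\sum_i\iprod{X_i'(\tau),v}^{2t}\bigr)^2\le 2\Delta^2K^{2t}+2M_{2t}^{4t}\Norm{\Sigma}^{2t}$ and then passes from $\pE[q^2]$ to $\pE[q]$ via Cauchy--Schwarz for pseudo-expectations; you instead split off the population moment (handled directly by the assumed degree-$\ell$ certificate, valid since $\ell\le 2\ell$) and bound the remainder $\iprod{T,v^{\otimes 2t}}$ by the one-sided inequality $\iprod{T,v^{\otimes 2t}}\le\normi{T}K^t$, which is exactly the intermediate chain in the proof of \cref{lem:sos-elastic-polynomials} and has degree $4t\le 2\ell$ because $\ell\ge 2t$. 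Your route is marginally cleaner (no squaring, no pseudo-expectation Cauchy--Schwarz) at the price of needing the slightly stronger numeric target $\normi{T}\le\Norm{\Sigma}^t/K^t$ rather than the paper's $\Delta\le 2^tM_{2t}^{2t}\Norm{\Sigma}^t/K^t$; as you note, the stated conditions $\tau\gtrsim\nu^2\sqrt{K\Norm{\Sigma}}$ and $n\gtrsim(\nu^{4t}K^t+(\tau/\sqrt{\Norm{\Sigma}})^{2t})K^t\,t\log(d/\delta)$ still deliver this, since $(8t)^{1/(2t)}\le 3$ is absorbed into the implicit constant. On the probabilistic side you essentially re-derive \cref{lem:truncated-tensor-concentration}: your Bernstein-plus-union-bound treatment of $T_1$ is the paper's argument via \cref{fact:bernstein-simple}, and your bias bound for $T_2$ (exceedance indicator, Markov with the entrywise $(4t)$-th moment, Cauchy--Schwarz and generalized H\"older) replaces the paper's telescoping H\"older argument in \cref{lem:truncated-tensor-expectation} and yields the same order $t\,\nu^{4t}\Norm{\Sigma}^{2t}/\tau^{2t}$, so the step you flag as the main obstacle is handled correctly.
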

\begin{proof}
Consider the polynomial
\[
p(v) = \frac{1}{n}\sum_{i=1}^n \iprod{X_i'(\tau), v}^{2t} - \E \iprod{X_1^*, v}^{2t},.
\]
By \cref{lem:truncated-tensor-concentration} and the assumptions on $n$ and $\tau$, its coefficients are bounded by
\[
	\Delta = 20\sqrt{\frac{\nu^{4t}\Norm{\Sigma}^{2t}\cdot t \log(d/\delta)}{n}} + 20\frac{\tau^{2t} \cdot t\log(d/\delta)}{n} +
	\frac{2t \nu^{4t} \cdot\Norm{\Sigma}^{2t}}{\tau^{2t}} \le \frac{2^tM^{2t}_{2t } \cdot \Norm{\Sigma}^t}{K^{t}}\,.
\]
It follows that
		\begin{align*}
	\cA_K  
	&\sststile{2\ell}{s,v} 
	\Set{ \Paren{ \frac{1}{n}\sum_{i=1}^n \iprod{X_i'(\tau), v}^{2t} }^2 
		\le \Paren{ \frac{1}{n}\sum_{i=1}^n \iprod{X_i'(\tau), v}^{2t} - \E \iprod{X_1^*, v}^{2t}  +  \E \iprod{X_1^*, v}^{2t} }^{2} }
	\\&\sststile{2\ell}{s,v} 
	\Set{ \Paren{ \frac{1}{n}\sum_{i=1}^n \iprod{X_i'(\tau), v}^{2t} }^2 
		\le 2\Paren{ \frac{1}{n}\sum_{i=1}^n \iprod{X_i'(\tau), v}^{2t} - \E \iprod{X_1^*, v}^{2t} }^2 + 2\Paren{ \E \iprod{X_1^*, v}^{2t} }^{2} }
	\\&\sststile{2\ell}{s,v} 
		\Set{ \Paren{ \frac{1}{n}\sum_{i=1}^n \iprod{X_i'(\tau), v}^{2t} }^2 
		\le 2\Delta^2 K^{2t} + 2M^{4t}_{2t}\Norm{\Sigma}^{2t} }
\end{align*}
	Hence 
	\[
	\pE  \Paren{ \frac{1}{n}\sum_{i=1}^n \iprod{X_i, v}^{2t} }^2 \le \Paren{2 M_{2t}}^{4t} \cdot \Norm{\Sigma}^{2t}\,.
	\]
	By Cauchy-Schwarz inequality for pseudo-expectations (see, for example, Fact A.2. from \cite{dk-sos}) we get the desired bound.
\end{proof}

\section{Properties of the truncation}\label{sec:properties-of-truncation}
\label{sec:trunctaion}
As before, let $X^*_1,\ldots, X^*_n$ be iid samples from  $\cD$.

For $\tau > 0$, let $X'_{ij}(\tau)=X^*_{ij}\ind{\abs{X^*_{ij}} \le \tau}$. 
In this section we prove some properties of $X'_{ij}(\tau)$ that we use in the paper. 
We start with the following lemma.
\begin{lemma}\label{lem:truncated-outer-product}
		Suppose that for some $\nu \ge 1$,
	\[
	\max_{j\in[d]}\E\abs{X^*_{ij}}^{4} \le \nu^4\cdot \Norm{\Sigma}^2\,.
	\]
	Then
	\[
	\Normi{\E \Paren{X_i'(\tau) - X_i^*}\Paren{X_i'(\tau) - X_i^*}^\top} 
	\le \frac{\nu^4\cdot \Norm{\Sigma}^2}{\tau^2}\,.
	\]
\end{lemma}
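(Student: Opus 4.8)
The plan is to unwind the definitions and reduce to a one–dimensional truncation estimate. Write $D_i := X_i'(\tau) - X_i^*$; then the $j$-th coordinate of $D_i$ is $D_{ij} = -X^*_{ij}\ind{\abs{X^*_{ij}} > \tau}$, so the $(j,k)$ entry of the matrix $\E\Brac{D_i D_i^\top}$ equals $\E\Brac{X^*_{ij} X^*_{ik}\ind{\abs{X^*_{ij}} > \tau}\ind{\abs{X^*_{ik}} > \tau}}$. Since $\Normi{\cdot}$ on a matrix is the entrywise maximum of absolute values, it suffices to bound this quantity uniformly over $j,k\in[d]$.

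First I would record the scalar tail bound: for any $j$, on the event $\set{\abs{X^*_{ij}} > \tau}$ we have $1 \le (X^*_{ij})^2/\tau^2$, hence
\[
\E\Brac{(X^*_{ij})^2 \ind{\abs{X^*_{ij}} > \tau}} \le \tfrac{1}{\tau^2}\E\Brac{(X^*_{ij})^4} \le \frac{\nu^4 \Norm{\Sigma}^2}{\tau^2}
\]
by the fourth–moment hypothesis. Then for a general pair $(j,k)$ I would apply \CS to the random variables $X^*_{ij}\ind{\abs{X^*_{ij}} > \tau}$ and $X^*_{ik}\ind{\abs{X^*_{ik}} > \tau}$, obtaining
\[
\Abs{\E\Brac{X^*_{ij} X^*_{ik}\ind{\abs{X^*_{ij}} > \tau}\ind{\abs{X^*_{ik}} > \tau}}} \le \sqrt{\E\Brac{(X^*_{ij})^2 \ind{\abs{X^*_{ij}} > \tau}}\cdot \E\Brac{(X^*_{ik})^2 \ind{\abs{X^*_{ik}} > \tau}}} \le \frac{\nu^4 \Norm{\Sigma}^2}{\tau^2}\,.
\]
Taking the maximum over $j,k\in[d]$ yields $\Normi{\E\Paren{X_i'(\tau) - X_i^*}\Paren{X_i'(\tau) - X_i^*}^\top} \le \nu^4 \Norm{\Sigma}^2/\tau^2$, as claimed. (An essentially identical computation via AM--GM, $\abs{ab}\le\tfrac12(a^2+b^2)$, gives the same bound.)

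There is no serious obstacle here: the only points requiring a little care are getting the sign/indicator bookkeeping of $D_{ij}$ right, and being explicit that $\Normi{\cdot}$ on a matrix denotes the entrywise sup norm so that the problem genuinely reduces to the scalar estimate above. Everything else is the standard device of spending one extra power of the moment to pay for the truncation tail.
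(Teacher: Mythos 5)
Your proof is correct and follows essentially the same route as the paper: reduce $\Normi{\cdot}$ to an entrywise bound, apply \CS to the pair $X^*_{ij}\ind{\abs{X^*_{ij}}>\tau}$, $X^*_{ik}\ind{\abs{X^*_{ik}}>\tau}$, and control each factor by the fourth moment. The only (inconsequential) difference is in the scalar estimate: you use the pointwise bound $\ind{\abs{x}>\tau}\le x^2/\tau^2$ directly, whereas the paper reaches the same quantity $\E\abs{X^*_{ij}}^4/\tau^2$ via a second application of \CS followed by Markov's inequality.
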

\begin{proof}
	\begin{align*}
		\Abs{\E \Paren{X_{ij}'(\tau) - X_{ij}^*}\Paren{X_{ij'}'(\tau) - X_{ij'}^*}}
		&= 
		\Abs{\E X^*_{ij}\ind{\abs{X^*_{ij}} > \tau}X^*_{ij'}\ind{ \abs{X^*_{ij'}} > \tau}}
		\\&\le
		\sqrt{\E \ind{\abs{X^*_{ij}} > \tau}  \Paren{X^*_{ij}}^2} \cdot	\sqrt{\E \ind{\abs{X^*_{ij'}} > \tau}  \Paren{X^*_{ij'}}^2} 
		\\&\le
		\Paren{\E \ind{\abs{X^*_{ij}} > \tau} \cdot \E \Paren{X^*_{ij}}^4}^{1/4} \cdot	\Paren{\E \ind{\abs{X^*_{ij'}} > \tau} \cdot \E \Paren{X^*_{ij'}}^4}^{1/4}
		\\&\le
		\Paren{\Pr\Brac{\abs{X^*_{ij}}^4 > \tau^4} \cdot \Pr\Brac{\abs{X^*_{ij'}}^4 > \tau^4}}^{1/4} \cdot \nu^2\Norm{\Sigma}
		\\&\le 
		 \nu^4\Norm{\Sigma}^2/ \tau^2\,.
	\end{align*}
\end{proof}

The following lemma shows that the moments of the truncated distribution are close to the moments of $X^*_i$ in $\ell_\infty$-norm.
\begin{lemma}\label{lem:truncated-tensor-expectation}
	Let $t\in \N$ and suppose that for some $B > 0$ and $q > 0$,
    \[
	\max_{j\in[d]}\E\abs{X^*_{ij}}^{t + q} \le B^{t+q}\,.
	\]
	Then
	\[
	\Normi{\E \Paren{X'_i(\tau)}^{\otimes t} -\E \Paren{X^*_i}^{\otimes t}} \le	\frac{t\cdot B^{t + q} }{\tau^q}\,.
	\]
\end{lemma}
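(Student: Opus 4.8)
The plan is to prove the bound entrywise. Fix indices $j_1,\dots,j_t \in [d]$ and abbreviate $Z_k = X^*_{ij_k}$ and $Z'_k = X'_{ij_k}(\tau) = Z_k\ind{\abs{Z_k}\le \tau}$, so that the $(j_1,\dots,j_t)$-entry of $\E \Paren{X'_i(\tau)}^{\otimes t} -\E \Paren{X^*_i}^{\otimes t}$ equals $\E\Brac{\prod_{k=1}^t Z'_k - \prod_{k=1}^t Z_k}$. The first step is the telescoping identity
\[
\prod_{k=1}^t Z'_k - \prod_{k=1}^t Z_k = \sum_{k=1}^t \Paren{\prod_{l<k} Z'_l}\Paren{Z'_k - Z_k}\Paren{\prod_{l>k} Z_l}\,,
\]
combined with the pointwise bounds $\abs{Z'_l}\le \abs{Z_l}$ and $\abs{Z'_k - Z_k} = \abs{Z_k}\ind{\abs{Z_k}>\tau}$, so that after taking absolute values and expectations the $k$-th term is at most $\E\Brac{\abs{Z_k}\ind{\abs{Z_k}>\tau}\prod_{l\ne k}\abs{Z_l}}$.

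Next I would absorb the indicator into a moment via $\ind{\abs{Z_k}>\tau}\le \abs{Z_k}^q/\tau^q$, bounding the $k$-th term by $\tfrac{1}{\tau^q}\E\Brac{\abs{Z_k}^{1+q}\prod_{l\ne k}\abs{Z_l}}$. The main (and essentially only) step is a single application of the generalized \holders{} inequality with exponent $\tfrac{t+q}{1+q}$ for the factor $\abs{Z_k}^{1+q}$ and exponent $t+q$ for each of the remaining $t-1$ factors $\abs{Z_l}$; since $\tfrac{1+q}{t+q} + \tfrac{t-1}{t+q} = 1$ these are admissible, and this yields
\[
\E\Brac{\abs{Z_k}^{1+q}\prod_{l\ne k}\abs{Z_l}} \le \Paren{\E\abs{Z_k}^{t+q}}^{\frac{1+q}{t+q}}\prod_{l\ne k}\Paren{\E\abs{Z_l}^{t+q}}^{\frac{1}{t+q}} \le B^{t+q}\,,
\]
using the hypothesis $\max_{j\in[d]}\E\abs{X^*_{ij}}^{t+q}\le B^{t+q}$ (the exponents of $B^{t+q}$ again summing to $1$). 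Summing the $t$ terms of the telescoping sum gives $\tfrac{t\cdot B^{t+q}}{\tau^q}$ for this entry, and taking the maximum over $(j_1,\dots,j_t)\in[d]^t$ gives the stated $\ell_\infty$ bound.

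I do not expect a genuine obstacle: the content is standard truncation bookkeeping. The only points requiring a little care are applying the telescoping identity with the correct factors ($Z'$ strictly before position $k$, $Z$ strictly after, empty products being $1$), and choosing the \holders{} exponents so that both the split of $\E\abs{Z_k}^{1+q}$ against the other moments and the final product of the $B^{t+q}$ factors are normalized correctly. Note that $q$ is not assumed to be an integer, but only \holders{} inequality with (possibly non-integer) conjugate exponents is used, so this causes no difficulty.
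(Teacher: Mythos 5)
Your proof is correct and follows essentially the same route as the paper: telescope the difference of products one coordinate at a time, dominate the truncated factors by the original ones, and control the remaining term via \holders{} inequality together with the entrywise moment bound. The only cosmetic difference is that you absorb the tail event through the pointwise bound $\ind{\abs{Z_k}>\tau}\le \abs{Z_k}^q/\tau^q$ before a single generalized \holders{} step, whereas the paper applies \holders{} against the indicator and then Markov's inequality to $\Pr\Brac{\abs{X^*_{ij_s}}>\tau}$ — the two bookkeepings are equivalent and give the same constant $t\cdot B^{t+q}/\tau^q$.
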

\begin{proof}
	Denote $a = X'_{i}(\tau)$, $b = X^*_{i}$. Note that by H\"older's inequality, for all $s\in [t]$,
	\begin{align*}
		\E \abs{b_{j_1}\cdots b_{j_{s-1}}} \cdot \abs{a_{j_s} - b_{j_s}} \cdot \abs{a_{j_{s+1}}\cdots a_{j_{t}}}
		&=
		\E \abs{b_{j_1}\cdots b_{j_{s-1}}b_{j_s}a_{j_{s+1}}\cdots a_{j_{t}}}\cdot \ind{a_{j_s} = 0}
		\\&\le 
		\Paren{\Pr\Brac{a_{j_s}=0}}^{\frac{q}{t+q}} \cdot 
		\Paren{\E  \abs{b_{j_1}\cdots b_{j_{s-1}}b_{j_s}a_{j_{s+1}}\cdots a_{j_{t}} }^{1+q/t} }^{\frac{t}{t + q}}
		\\&\le
		\Paren{\Pr\Brac{\paren{X^*_{ij_s}}^{t+q}>\tau^{t + q }} }^{\frac{q}{t+q}}  \cdot
		\Paren{\E  \abs{b_{j_1}\cdots  b_{j_{t}} }^{1+q/t} }^{\frac{t}{t + q}}
		\\&\le \frac{B^q}{\tau^q} \cdot \Paren{\max_{j\in[d]}\E\abs{b_j}^{t + q} }^{\frac{t}{t + q}}
		\\&\le \frac{B^{t + q}}{\tau^q}
	\end{align*}
	It follows that
	\begin{align*}
		\Abs{\E a_{j_1}a_{j_2}\cdots a_{j_{t}} -\E b_{j_1}b_{j_2}\cdots b_{j_{t}}}
		&\le
		\E \Abs{a_{j_1}a_{j_2}\cdots a_{j_{t}} - b_{j_1}b_{j_2}\cdots b_{j_{t}}}
		\\&\le
		\E\Abs{ a_{j_1}a_{j_2}\cdots a_{j_{t}} - b_{j_1}a_{j_2}\cdots a_{j_{t}} + b_{j_1}a_{j_2}\cdots a_{j_{t}} - b_{j_1}b_{j_2}\cdots b_{j_{t}}}
		\\&\le 
		\E\abs{a_{j_1}-b_{j_1}}\cdot \abs{a_{j_2}\cdots a_{j_{t}}} + \E\abs{b_{j_1}} \cdot \abs{a_{j_2}\cdots a_{j_{t}} - b_{j_2}\cdots b_{j_{t}}}
		\\&\le 
		\frac{t\cdot B^{t + q} }{\tau^q}\,.
	\end{align*}
\end{proof}
The following statement is a straightforward corollary of \cref{lem:truncated-tensor-expectation} with $q=t$:

\begin{corollary}\label{cor:truncated-outer-product-simple}
	Let $t\in \N$ and suppose that for some $B > 0$,
\[
\max_{j\in[d]}\E\abs{X^*_{ij}}^{2t} \le B^{2t}\,. 
\]
Then
\[
\Normi{\E \Paren{X'_i(\tau)}^{\otimes t} -\E \Paren{X^*_i}^{\otimes t}} \le	\frac{t\cdot B^{2t}}{\tau^t}\,.
\]
\end{corollary}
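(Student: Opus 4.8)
The plan is simply to invoke \cref{lem:truncated-tensor-expectation} with the parameter choice $q = t$. Under that choice, the hypothesis of \cref{lem:truncated-tensor-expectation}, namely $\max_{j\in[d]}\E\abs{X^*_{ij}}^{t+q} \le B^{t+q}$, becomes exactly the hypothesis of the corollary, $\max_{j\in[d]}\E\abs{X^*_{ij}}^{2t} \le B^{2t}$. The conclusion $\Normi{\E \Paren{X'_i(\tau)}^{\otimes t} -\E \Paren{X^*_i}^{\otimes t}} \le t\cdot B^{t+q}/\tau^q$ then becomes $\Normi{\E \Paren{X'_i(\tau)}^{\otimes t} -\E \Paren{X^*_i}^{\otimes t}} \le t\cdot B^{2t}/\tau^t$, which is precisely the claimed bound.

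There is essentially no obstacle here: the only thing to check is that $q = t > 0$ is a legal choice of parameter in \cref{lem:truncated-tensor-expectation}, which holds since $t \in \N$. So the whole proof is one line.

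\begin{proof}[Proof of \cref{cor:truncated-outer-product-simple}]
	Apply \cref{lem:truncated-tensor-expectation} with $q = t$. The hypothesis $\max_{j\in[d]}\E\abs{X^*_{ij}}^{t+q} \le B^{t+q}$ becomes $\max_{j\in[d]}\E\abs{X^*_{ij}}^{2t} \le B^{2t}$, which holds by assumption, and the conclusion becomes
	\[
	\Normi{\E \Paren{X'_i(\tau)}^{\otimes t} -\E \Paren{X^*_i}^{\otimes t}} \le \frac{t\cdot B^{t+q}}{\tau^q} = \frac{t\cdot B^{2t}}{\tau^t}\,. \qedhere
	\]
\end{proof}
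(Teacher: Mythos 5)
Your proof is correct and matches the paper exactly: the paper itself obtains this corollary by a one-line application of \cref{lem:truncated-tensor-expectation} with $q=t$, which is precisely your argument.
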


The following two statements are special cases of \cref{cor:truncated-outer-product-simple} for $t=1$ and $t=2$.
\begin{corollary}\label{cor:truncated-mean-expectation}
	\[
	\Normi{\E {X'_i(\tau)}} \le \frac{\Norm{\Sigma}}{\tau}\,.
	\]
\end{corollary}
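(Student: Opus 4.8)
The plan is to obtain this as the $t=1$ instance of \cref{cor:truncated-outer-product-simple}. To invoke that corollary I need a bound of the form $\max_{j\in[d]}\E\abs{X^*_{ij}}^{2}\le B^{2}$; since $\E\Paren{X^*_{ij}}^{2} = \Sigma_{jj} = \transpose{e_j}\Sigma e_j \le \Norm{\Sigma}$ (using $\Sigma \succeq 0$), I may take $B^{2} = \Norm{\Sigma}$. Applying \cref{cor:truncated-outer-product-simple} with $t=1$ then yields $\Normi{\E X'_i(\tau) - \E X^*_i} \le \tfrac{1\cdot B^{2}}{\tau} = \Norm{\Sigma}/\tau$, and since $\cD$ has mean $0$ we have $\E X^*_i = 0$, so the left-hand side is exactly $\Normi{\E X'_i(\tau)}$, which is the claim.

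Alternatively, one can just unwind this for $t=1$ directly, without citing the corollary: mean-zeroness of $\cD$ gives $\E X'_{ij}(\tau) = \E X^*_{ij}\ind{\abs{X^*_{ij}}\le\tau} = -\E X^*_{ij}\ind{\abs{X^*_{ij}}>\tau}$, hence $\abs{\E X'_{ij}(\tau)} \le \E\abs{X^*_{ij}}\ind{\abs{X^*_{ij}}>\tau}$; on the event $\abs{X^*_{ij}}>\tau$ one has $\abs{X^*_{ij}} \le \Paren{X^*_{ij}}^{2}/\tau$, so this expectation is at most $\tfrac{1}{\tau}\E\Paren{X^*_{ij}}^{2} = \Sigma_{jj}/\tau \le \Norm{\Sigma}/\tau$. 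Taking the maximum over $j\in[d]$ finishes the argument.

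There is essentially no real obstacle here; the only points requiring (minor) care are invoking the mean-zero hypothesis on $\cD$ at the right moment and bounding the diagonal entries of $\Sigma$ by its operator norm.
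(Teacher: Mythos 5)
Your first argument is exactly the paper's own proof: the corollary is stated there as the $t=1$ special case of \cref{cor:truncated-outer-product-simple}, with $B^2=\Norm{\Sigma}$ coming from $\E(X^*_{ij})^2=\Sigma_{jj}\le\Norm{\Sigma}$ and the mean-zero assumption absorbing $\E X^*_i$. Both your invocation of the corollary and your direct $t=1$ unwinding are correct, so there is nothing to fix.
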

\begin{corollary}\label{cor:truncated-covariance-expectation}	
	 Suppose that for some $\nu \ge 1$,
	 \[
	 \max_{j\in[d]}\E\abs{X^*_{ij}}^{4} \le \nu^4\cdot \Norm{\Sigma}^2\,.
	 \]
	 Then
	\[
	\Normi{\E \Paren{X'_i(\tau)}\Paren{X'_i(\tau)}^{\top} -\E \Paren{X^*_i}\Paren{X^*_i}^\top} 
	\le \frac{2\cdot \nu^4\cdot \Norm{\Sigma}^2}{\tau^2}\,.
	\]
\end{corollary}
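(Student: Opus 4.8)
The plan is to obtain this statement as the special case $t = 2$ of \cref{cor:truncated-outer-product-simple}. First I would set $B = \nu\cdot\Norm{\Sigma}^{1/2}$, so that $B^{2t} = B^4 = \nu^4\cdot\Norm{\Sigma}^2$; with this choice the hypothesis of \cref{cor:truncated-outer-product-simple} in the case $t = 2$, namely $\max_{j\in[d]}\E\abs{X^*_{ij}}^{4} \le B^4$, is precisely the moment assumption of the present corollary. Note that mean-zeroness of $\cD$ plays no role here, since we are only comparing the two (uncentered) second moments to each other.

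Next I would invoke \cref{cor:truncated-outer-product-simple} with $t = 2$, which yields
\[
\Normi{\E \Paren{X'_i(\tau)}^{\otimes 2} -\E \Paren{X^*_i}^{\otimes 2}} \le \frac{2\cdot B^4}{\tau^2} = \frac{2\cdot\nu^4\cdot\Norm{\Sigma}^2}{\tau^2}\,.
\]
Then I would observe that for any $z\in\R^d$ the entries of the second-order tensor $z^{\otimes 2}$ are $\Paren{z^{\otimes 2}}_{jj'} = z_j z_{j'} = \Paren{zz^\top}_{jj'}$, so $z^{\otimes 2}$ and the outer product $zz^\top$ are literally the same array of numbers. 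Taking expectations, the matrix $\E\Paren{X'_i(\tau)}\Paren{X'_i(\tau)}^{\top} - \E\Paren{X^*_i}\Paren{X^*_i}^{\top}$ agrees entrywise with the tensor $\E\Paren{X'_i(\tau)}^{\otimes 2} - \E\Paren{X^*_i}^{\otimes 2}$, hence has the same $\Normi{\cdot}$. Substituting the displayed bound completes the proof.

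There is essentially no obstacle here; the argument is pure bookkeeping. The only points requiring a little care are (i) matching the exponent $t = 2$ in \cref{cor:truncated-outer-product-simple}, which is what produces the denominator $\tau^2$ and the leading constant $2$, and (ii) the (notational) identification of $\Normi{\cdot}$ on a second-order tensor with the max-absolute-entry norm on the corresponding matrix. If one preferred a fully self-contained derivation instead of chaining through \cref{cor:truncated-outer-product-simple}, one could specialize the telescoping and \holders{} estimate from the proof of \cref{lem:truncated-tensor-expectation} to $t = q = 2$ and combine it with \cref{lem:truncated-outer-product}; this is exactly where the factor $t = 2$ and the power $\tau^{q} = \tau^2$ come from.
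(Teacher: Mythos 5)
Your proof is correct and matches the paper's own derivation: the paper obtains \cref{cor:truncated-covariance-expectation} precisely as the $t=2$ case of \cref{cor:truncated-outer-product-simple} with $B=\nu\sqrt{\Norm{\Sigma}}$, and the identification of the order-$2$ tensor with the outer-product matrix is the same bookkeeping you describe.
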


The following lemma shows that the empirical mean of $\Paren{X'_i(\tau)}^{\otimes t}$ is close to $\E \Paren{X^*_1}^{\otimes t}$ for an appropriate choice of $\tau$ and large enough $n$.
\begin{lemma}\label{lem:truncated-tensor-concentration}
	Let $t\in \N$ be  and suppose that for some $\nu \ge 1$
	\[
	\max_{j\in[d]}\E\abs{X^*_{ij}}^{2t} \le \nu^{2t} \cdot \Norm{\Sigma}^{t}\,.
	\]
	Then with probability $1-\delta$,
	\[
	\Normi{\tfrac{1}{n}\sum_{i=1}^n \Paren{X'_i(\tau)}^{\otimes t} -\E \Paren{X^*_1}^{\otimes t}} 
	\le 
	10\sqrt{\frac{\nu^{2t}\cdot \Norm{\Sigma}^{t}\cdot t \log(d/\delta)}{n}} 
	+ 10\frac{\tau^{t} \cdot t\log(d/\delta)}{n} +
	\frac{t \cdot \nu^{2t} \cdot \Norm{\Sigma}^{t}}{\tau^t}\,.
	\]
\end{lemma}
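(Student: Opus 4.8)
The plan is to decompose the error into a \emph{truncation bias} term and a \emph{statistical fluctuation} term and bound each in $\ell_\infty$-norm. By the triangle inequality,
\[
\Normi{\tfrac1n\sum_{i=1}^n \Paren{X'_i(\tau)}^{\otimes t} - \E\Paren{X^*_1}^{\otimes t}}
\le
\Normi{\tfrac1n\sum_{i=1}^n \Paren{X'_i(\tau)}^{\otimes t} - \E\Paren{X'_1(\tau)}^{\otimes t}}
+ \Normi{\E\Paren{X'_1(\tau)}^{\otimes t} - \E\Paren{X^*_1}^{\otimes t}}\,.
\]
The bias term is exactly what \cref{cor:truncated-outer-product-simple} controls: applying it with $B = \nu\sqrt{\Norm{\Sigma}}$ (so that $\max_{j}\E\abs{X^*_{1j}}^{2t}\le\nu^{2t}\Norm{\Sigma}^t = B^{2t}$ is the required hypothesis) yields the bound $t\nu^{2t}\Norm{\Sigma}^t/\tau^t$, which is the third term of the claim.

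For the fluctuation term I would fix a multi-index $(j_1,\dots,j_t)\in[d]^t$ and study the i.i.d.\ scalars $Y_i := \prod_{s=1}^t X'_{ij_s}(\tau)$. Deterministically $\abs{X'_{ij}(\tau)}\le\tau$, so $\abs{Y_i}\le\tau^t$ and hence $\abs{Y_i - \E Y_i}\le 2\tau^t$. For the variance, $\E Y_i^2 = \E\prod_{s=1}^t\Paren{X'_{ij_s}(\tau)}^2 \le \E\prod_{s=1}^t\Paren{X^*_{ij_s}}^2$, and the generalized H\"older inequality (with all exponents equal to $t$) bounds this by $\prod_{s=1}^t\Paren{\E\abs{X^*_{ij_s}}^{2t}}^{1/t}\le \max_{j\in[d]}\E\abs{X^*_{1j}}^{2t}\le\nu^{2t}\Norm{\Sigma}^t$. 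Now apply Bernstein's inequality (\cref{fact:bernstein-simple}) to $\tfrac1n\sum_i(Y_i-\E Y_i)$ with these range and variance parameters, and take a union bound over the $d^t$ multi-indices; using $\log(d^t/\delta)\le t\log(d/\delta)$, this shows that with probability at least $1-\delta$ every entry of $\tfrac1n\sum_i\Paren{X'_i(\tau)}^{\otimes t}-\E\Paren{X'_1(\tau)}^{\otimes t}$ is at most $O\!\Paren{\sqrt{\nu^{2t}\Norm{\Sigma}^t\,t\log(d/\delta)/n} + \tau^t t\log(d/\delta)/n}$, which gives the first two terms. Combining this with the bias bound and absorbing all absolute constants into the factor $10$ finishes the proof.

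There is no serious obstacle; the only points that require care are (i) the variance estimate, where the merely \emph{entrywise} $(2t)$-th moment hypothesis forces one to use generalized H\"older rather than a single moment bound, and (ii) keeping the union bound over all $d^t$ coordinate tuples of the tensor (including those with repeated indices) so that the logarithmic factor is $t\log d$ and matches the stated bound.
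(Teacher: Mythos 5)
Your proof is correct and follows essentially the same route as the paper, whose own proof is exactly the combination of \cref{cor:truncated-outer-product-simple} (with $B=\nu\sqrt{\Norm{\Sigma}}$) for the truncation bias, Bernstein's inequality \cref{fact:bernstein-simple} applied entrywise to the bounded products $\prod_{s=1}^t X'_{ij_s}(\tau)$, and a union bound over the $d^t$ entries. Your added care about the variance estimate via generalized H\"older and the $\log(d^t/\delta)\le t\log(d/\delta)$ accounting just fills in details the paper leaves implicit.
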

\begin{proof}
	It follows from  \cref{cor:truncated-outer-product-simple}, Bernstein inequality \cref{fact:bernstein-simple}, and a union bound over all $d^t$ entries of $\E \Paren{X^*_1}^{\otimes t}$.
\end{proof}
\section{Properties of sparse vectors}
\label{sec:sparse}

\begin{lemma}\label{lem:elastic-convex-hull}
Let $\Sigma \in \R^{d\times d}$ be a positive definite matrix, $k',k''\in \N$, $r, \delta \ge 0$, and
\[
\cE_{k'}(r) = \Set{u \in \R^d \suchthat \norm{\Sigma^{1/2} u} \le r\,, \normo{u} \le \sqrt{k'}\cdot r}\,,
\]
\[
\cS_{k''}(r) = \Set{u \in \R^d \suchthat \norm{\Sigma^{1/2} u} = \Paren{1+\delta}\cdot r\,, u \text{ is $k''$-sparse}}\,.
\]

If $k'' \ge 4k'{\Norm{\Sigma}}/\delta^2$, then  
\[
\cE_{k'}(r) \subseteq \conv\Paren{\cS_{k''}(r)}\,.
\]

\end{lemma}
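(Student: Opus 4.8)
The plan is a shelling (peeling) argument for the vector $u$, in the spirit of the classical decomposition of effectively sparse vectors, adapted to the $\Sigma$-weighted Euclidean norm. First I would dispose of degenerate cases: if $r=0$ then $\cE_{k'}(r)=\Set{0}=\conv(\cS_{k''}(r))$, and if $\delta=0$ the hypothesis forces $k''$ to be infinite and is vacuous, so assume $r,\delta>0$. If $4k'\Norm{\Sigma}/\delta^2\ge d$ then $k''\ge d$, so $\cS_{k''}(r)$ is the whole ellipsoid $\Set{v\in\R^d \suchthat \norm{\Sigma^{1/2}v}=(1+\delta)r}$, whose convex hull is $\Set{v\in\R^d \suchthat \norm{\Sigma^{1/2}v}\le(1+\delta)r}\supseteq\cE_{k'}(r)$; hence we may assume the block size $m:=\lceil 4k'\Norm{\Sigma}/\delta^2\rceil$ satisfies $m\le\min(k'',d)$.

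Now fix $u\in\cE_{k'}(r)$. Sorting the coordinates of $u$ by decreasing magnitude and cutting $[d]$ into consecutive blocks $T_0,T_1,\dots,T_J$ of size $m$ (the last possibly smaller), we get $u=\sum_{j=0}^J u_{T_j}$ with every $u_{T_j}$ being $m$-sparse, hence $k''$-sparse. The usual peeling estimate — each entry of $T_j$ is at most the average magnitude of the entries of $T_{j-1}$ — gives $\norm{u_{T_j}}\le\normo{u_{T_{j-1}}}/\sqrt m$ for $j\ge1$, so by telescoping $\sum_{j\ge1}\norm{u_{T_j}}\le\normo{u}/\sqrt m\le\sqrt{k'/m}\cdot r$. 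Using $\norm{\Sigma^{1/2}x}\le\sqrt{\Norm{\Sigma}}\cdot\norm{x}$ together with the triangle inequality and the bound $\sqrt{\Norm{\Sigma}k'/m}\le\delta/2$ (which is exactly our choice of $m$), this yields $\norm{\Sigma^{1/2}(u-u_{T_0})}\le\tfrac{\delta}{2}r$, hence $\norm{\Sigma^{1/2}u_{T_0}}\le(1+\tfrac\delta2)r$ and
\[
\sum_{j=0}^J\norm{\Sigma^{1/2}u_{T_j}}\;\le\;\Paren{1+\tfrac\delta2}r+\tfrac\delta2 r\;=\;(1+\delta)r .
\]

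To conclude, I would rescale each nonzero block onto the ellipsoid: for $j$ with $u_{T_j}\neq0$ set $c_j:=\norm{\Sigma^{1/2}u_{T_j}}/((1+\delta)r)$ and $v_j:=u_{T_j}/c_j$, so that $v_j$ is $k''$-sparse with $\norm{\Sigma^{1/2}v_j}=(1+\delta)r$, i.e. $v_j\in\cS_{k''}(r)$, while $u=\sum_j c_j v_j$ with $\sum_j c_j\le1$ by the displayed bound. Since $\Sigma$ is positive definite, $\cS_{k''}(r)$ is nonempty (it contains a suitable multiple of $e_1$) and symmetric, so $0\in\conv(\cS_{k''}(r))$; padding with the term $(1-\sum_j c_j)\cdot0$ then exhibits $u$ as a convex combination of points of $\cS_{k''}(r)$, that is, $u\in\conv(\cS_{k''}(r))$.

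The only real content is the peeling estimate and the bookkeeping that keeps $\sum_j c_j\le1$; the hypothesis $k''\ge 4k'\Norm{\Sigma}/\delta^2$ is precisely what is needed to absorb the tail contribution $\sqrt{\Norm{\Sigma}k'/m}\cdot r$ and the induced slack in $\norm{\Sigma^{1/2}u_{T_0}}$ into the factor $1+\delta$. I expect the point requiring the most care is that the norm governing sparsity ($\ell_1$ and $\ell_0$ in the standard basis) differs from the one governing the radius ($\norm{\Sigma^{1/2}\cdot}$), so $\Sigma$ must be passed through only via $\norm{\Sigma^{1/2}x}\le\sqrt{\Norm{\Sigma}}\norm{x}$ and never the reverse inequality, which would bring in $\lambda_{\min}(\Sigma)$ and destroy the bound.
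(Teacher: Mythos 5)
Your proof is correct and follows essentially the same shelling/block-decomposition argument as the paper: sort coordinates by magnitude, peel into sparse blocks, use $\normo{u}\le\sqrt{k'}\cdot r$ together with $\norm{\Sigma^{1/2}x}\le\sqrt{\Norm{\Sigma}}\,\norm{x}$ to show the block norms sum to at most $(1+\delta)r$, and write $u$ as a convex combination of rescaled blocks. The only difference is bookkeeping: you rescale each block exactly onto $\cS_{k''}(r)$ and pad with $0\in\conv\Paren{\cS_{k''}(r)}$, making explicit the exact-norm membership step that the paper's weights $p_i$ leave implicit.
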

\begin{proof}
Let us take some $u\in \cE_{k'}(r) $. Without loss of generality assume that $u_1 \ge u_2 \ge \ldots \ge u_d$. Let's split indices $\Set{1,2,\ldots,d}$ into blocks $B_1\ldots, B_{\lceil d/k''\rceil}$ of size $k''$ (the last block might be of smaller size). Let for each block $B_i$, let 
\[\
p_i = \frac{\norm{\Sigma^{1/2} u_{B_i}}}{\sum_{j=1}^{\lceil d/k''\rceil} \norm{\Sigma^{1/2} u_{B_j}}}
\]
Since $\sum_{i}^{\lceil d/k''\rceil} p_i = 1$ and $u = \sum_{i}^{\lceil d/k''\rceil}p_i u_{B_i} /p_i$, it is sufficient to show that for all $i$, $\norm{\Sigma^{1/2} u_{B_i}}  / p_i \le \Paren{1+\delta}r$.

Note that for all $j \ge 2$, since $\norm{u_{B_j}}\leq \sqrt{k''}\normi{u_{B_{j}}}$ and  $\normi{u_{B_{j}}}\leq \frac{1}{k''}\normo{u_{B_{j-1}}}$,
\[
\norm{\Sigma^{1/2} u_{B_j}} \le \sqrt{\Norm{\Sigma}}\cdot \norm{u_{B_j}} \le \sqrt{k''\Norm{\Sigma}}\cdot \normi{u_{B_{j}}} \le  \sqrt{\frac{\Norm{\Sigma}}{k''}}\cdot \normo{u_{B_{j-1}}}\,.
\]
By the triangle inequality, 
\[
\norm{\Sigma^{1/2} u_{B_1}} \le \norm{\Sigma^{1/2} u} +  \sum_{j=2}^{\lceil d/k''\rceil} \norm{\Sigma^{1/2} u_{B_j}}\,.
\]
Hence
\begin{align*}
\frac{\norm{\Sigma^{1/2} u_{B_i}}}{p_i} 
&= 
\sum_{j=1}^{\lceil d/k''\rceil} \norm{\Sigma^{1/2} u_{B_j}} 
\\&\le  
\norm{\Sigma^{1/2} u} +  2\sum_{j=2}^{\lceil d/k''\rceil} \norm{\Sigma^{1/2} u_{B_j}} 
\\&\le 
r +   2\sqrt{\frac{\Norm{\Sigma}}{k''}} \sum_{j=2}^{\lceil d/k''\rceil}\normo{u_{B_{j-1}}} 
\\&\le 
r +  2 \sqrt{\frac{\Norm{\Sigma}}{k''}} \normo{u}
\\&\le 
\Paren{1+  2\sqrt{\frac{k'\Norm{\Sigma}}{k''}}} \cdot r
\\&\le 
\Paren{1+\delta}\cdot r\,.
\end{align*}

\end{proof}

\begin{lemma}\label{lem:elastic-concentration}
Let $\Sigma \in \R^{d\times d}$ be a positive definite matrix, and let $X \in \R^{m\times d}$ be a matrix such that for some $r > 0$ and $\delta \in (0,1)$, for all $k''$-sparse vectors $u'$ such that $r \le  \norm{\Sigma^{1/2} u'} \le 2r$,
\[
\Paren{1-\delta}\cdot \norm{\Sigma^{1/2} u'} \le \tfrac{1}{\sqrt{m}} \norm{Xu'} \le \Paren{1+\delta}\cdot \norm{\Sigma^{1/2} u'}
\]
If $k'' \ge 4k'{\Norm{\Sigma}}/\delta^2$, 
then for all $u$ such that $\norm{\Sigma^{1/2} u} = r$ and $\normo{u} \le r \sqrt{k'}$,  
\[
\Paren{1-4\delta}\cdot r \le \tfrac{1}{\sqrt{m}} \norm{Xu} \le \Paren{1+4\delta}\cdot r\,.
\]
\end{lemma}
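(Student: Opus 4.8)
The plan is to reduce everything to the given $k''$-sparse concentration hypothesis, using the head/tail block decomposition from the proof of \cref{lem:elastic-convex-hull} together with a rescaling trick.

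First I would observe that the hypothesis automatically extends to \emph{all} $k''$-sparse $u'$ with $0<\Norm{\Sigma^{1/2}u'}\le 2r$, i.e. the lower requirement $\Norm{\Sigma^{1/2}u'}\ge r$ can be dropped: if $0<\Norm{\Sigma^{1/2}u'}<r$, apply the hypothesis to $u'':=u'\cdot r/\Norm{\Sigma^{1/2}u'}$, which is $k''$-sparse with $\Norm{\Sigma^{1/2}u''}=r\in[r,2r]$, and divide the resulting two-sided bound by the scalar $r/\Norm{\Sigma^{1/2}u'}$; for $u'=0$ it is trivial. So from now on I may assume
\[
(1-\delta)\Norm{\Sigma^{1/2}u'}\le\tfrac1{\sqrt m}\Norm{Xu'}\le(1+\delta)\Norm{\Sigma^{1/2}u'}
\]
for every $k''$-sparse $u'$ with $\Norm{\Sigma^{1/2}u'}\le 2r$.

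Next, fix $u$ with $\Norm{\Sigma^{1/2}u}=r$ and $\normo{u}\le\sqrt{k'}\,r$. Sort the coordinates of $u$ by decreasing absolute value and partition $[d]$ into consecutive blocks $B_1,\dots,B_{\lceil d/k''\rceil}$ of size $k''$ (the last possibly smaller); set $v=u_{B_1}$ and $w=u-v=\sum_{j\ge2}u_{B_j}$. Exactly as in the proof of \cref{lem:elastic-convex-hull}, for $j\ge2$ one has $\normi{u_{B_j}}\le\tfrac1{k''}\normo{u_{B_{j-1}}}$, hence $\Norm{\Sigma^{1/2}u_{B_j}}\le\sqrt{\Norm{\Sigma}/k''}\,\normo{u_{B_{j-1}}}\le\sqrt{\Norm{\Sigma}/k''}\,\normo{u}\le\sqrt{k'\Norm{\Sigma}/k''}\,r\le\tfrac\delta2 r$, using $k''\ge 4k'\Norm{\Sigma}/\delta^2$. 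Each $u_{B_j}$ is $k''$-sparse with $\Sigma^{1/2}$-norm at most $2r$, so the extended hypothesis and the triangle inequality give
\[
\tfrac1{\sqrt m}\Norm{Xw}\le\sum_{j\ge2}(1+\delta)\Norm{\Sigma^{1/2}u_{B_j}}\le(1+\delta)\sqrt{\Norm{\Sigma}/k''}\,\normo{u}\le(1+\delta)\tfrac\delta2 r\le\delta r .
\]
In particular $\Norm{\Sigma^{1/2}w}\le\tfrac\delta2 r$, so $v$ is $k''$-sparse with $(1-\tfrac\delta2)r\le\Norm{\Sigma^{1/2}v}\le(1+\tfrac\delta2)r\le 2r$, and the extended hypothesis yields $(1-\delta)(1-\tfrac\delta2)r\le\tfrac1{\sqrt m}\Norm{Xv}\le(1+\delta)(1+\tfrac\delta2)r$.

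Finally I would combine the two estimates via the triangle and reverse triangle inequalities: $\tfrac1{\sqrt m}\Norm{Xu}\le\tfrac1{\sqrt m}\Norm{Xv}+\tfrac1{\sqrt m}\Norm{Xw}\le(1+\delta)(1+\tfrac\delta2)r+\delta r\le(1+4\delta)r$, and $\tfrac1{\sqrt m}\Norm{Xu}\ge\tfrac1{\sqrt m}\Norm{Xv}-\tfrac1{\sqrt m}\Norm{Xw}\ge(1-\delta)(1-\tfrac\delta2)r-\delta r\ge(1-4\delta)r$, where the quadratic-in-$\delta$ terms are absorbed since $\delta\le1$. The part I expect to be the real obstacle is the lower bound: the containment $\cE_{k'}(r)\subseteq\conv(\cS_{k''}(r))$ from \cref{lem:elastic-convex-hull} would give the upper bound at once, but convexity says nothing about lower-bounding the norm of a convex combination, which is why the head/tail split plus the reverse triangle inequality is essential there — and that is precisely what forces the preliminary rescaling step, since the head $v$ can have $\Sigma^{1/2}$-norm strictly below $r$ (already when $\Sigma=\Id$).
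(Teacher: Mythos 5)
Your proof is correct, and its core — the lower bound via the head/tail block split and the reverse triangle inequality — is exactly the paper's argument. The differences are two. First, for the upper bound the paper instead invokes \cref{lem:elastic-convex-hull} together with Jensen's inequality (writing $u$ as a convex combination of $k''$-sparse vectors of $\Sigma^{1/2}$-norm $(1+\delta)r$ and using convexity of $v\mapsto\norm{Xv}$), whereas you reuse the same head/tail decomposition, $\tfrac{1}{\sqrt m}\norm{Xu}\le\tfrac{1}{\sqrt m}\norm{Xu_{B_1}}+\tfrac{1}{\sqrt m}\norm{Xw}$; your route is more elementary and makes the lemma self-contained (no appeal to the convex-hull containment), at the cost of re-deriving the tail estimate, while the paper's route gets the upper bound in one line given \cref{lem:elastic-convex-hull}. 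Second, your preliminary rescaling step — extending the hypothesis from $k''$-sparse $u'$ with $\norm{\Sigma^{1/2}u'}\in[r,2r]$ to all $k''$-sparse $u'$ by homogeneity — is a point the paper's proof uses implicitly without comment: in the displayed chain the paper applies the hypothesis to $u_{B_1}$ and to the tail blocks $u_{B_j}$, whose $\Sigma^{1/2}$-norms can lie well below $r$, so your explicit scaling remark is a genuine (small) tightening of the write-up rather than an unnecessary detour. The constants you obtain, $(1\pm\delta)(1\pm\tfrac{\delta}{2})r\mp\delta r$, absorb into $(1\pm4\delta)r$ for $\delta\le1$ just as the paper's do.
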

\begin{proof}
The inequlity $\tfrac{1}{\sqrt{m}} \norm{Xu} \le \Paren{1+\delta}^2\cdot r \le \Paren{1+4\delta}\cdot r$ follows from Jensen's inequality and \cref{lem:elastic-convex-hull}.

Let us show that $\Paren{1-4\delta}\cdot r \le \tfrac{1}{\sqrt{m}} \norm{Xu}$. Let $B_1,\ldots, B_{\lceil d/k''\rceil}$ be blocks of indices as in the proof of \cref{lem:elastic-convex-hull}.  It follows that
\begin{align*}
\tfrac{1}{\sqrt{m}} \norm{Xu} 
&\ge 
\tfrac{1}{\sqrt{m}} \norm{Xu_{B_1}} - \sum_{j=2}^{\lceil d/k''\rceil}\tfrac{1}{\sqrt{m}} \norm{Xu_{B_j}} 
\\&\ge 
\Paren{1-\delta}\cdot\norm{\Sigma^{1/2} u_{B_1}} - \Paren{1+\delta}  \sum_{j=2}^{\lceil d/k''\rceil} \norm{\Sigma^{1/2} u_{B_j}} 
\\&\ge
 \Paren{1-\delta}\cdot\norm{\Sigma^{1/2} u} -2\cdot r\sqrt{\frac{k'\Norm{\Sigma}}{k''}}
 \\&\ge
 \Paren{1-\delta}^2\cdot r - \delta r
  \\&\ge
   \Paren{1-4\delta}\cdot r\,.
\end{align*}
\end{proof}
\section{Lower bounds}\label{sec:lower-bounds}

In this section we prove Statistical Query lower bounds. SQ lower bounds is a standard tool of showing computational lower bounds for statistical estimation and decision problems. SQ algorithms do not use samples, but have access to an oracle that can return the expectation of any bounded function (up
to a desired additive error, called \emph{tolerance}). 
The SQ lower bounds formally show the trateoff between the number of queries to the oracle and the tolerance. The standard interpretation of SQ lower bounds relies on the fact that simulating a query with small tolerance using iid samples requires large number of samples.
Hence these lower bounds are interpreted as a tradeoff between the time complexity (number of queries) and sample complexity (tolerance) of estimators. See \cite{abs-1611.03473} for more details.

First we give necessary definitions. These definitions are standard and can be found in \cite{abs-1611.03473}. 
%

\begin{definition}[STAT Oracle]\label{def:sq}
	Let $\cD$ be a distribution over $\R^d$. A \emph{statistical query} is a function $f:\R^d \to [-1,1]$. For $\tau > 0$ the $\text{STAT}(\tau)$ oracle responds to the query $f$ with a value $v$ such that $\Abs{v-\E_{X\sim \cD} f(X)}\le \tau$. Parameter $\tau$ is called the \emph{tolerance} of the statistical query.
\end{definition}
Simulating a query $\text{STAT}(\tau)$ normally requires $\Omega(1/\tau^2)$ iid samples from $\cD$, hence SQ lower bounds provide a trade-off between the running time (number of queries) and the sample complexity ($\Omega(1/\tau^2)$).

\begin{definition}[Pairwise Correlation]\label{def:pairwise-correlation}
	Let $\cD_1, \cD_2, \cD$ be absolutely continuous distributions over $\R^d$, and suppose that $\supp\paren{\cD} = \R^d$.  The \emph{pairwise correlation} of $\cD_1$ and $\cD_2$ with respect to $\cD$ is defined as 
	\[
	\chi_D(\cD_1, \cD_2) = \int_{\R^d} \frac{p_{\cD_1}(x)p_{\cD_2}(x)}{p_{\cD}(x)} dx - 1\,,
	\] 
	where $p_{\cD_1}(x),p_{\cD_2}(x), p_{\cD}(x)$ are densities of $\cD_1, \cD_2, \cD$ respectively.
\end{definition}
\begin{definition}[Chi-Squared Divergence]\label{def:chi-squared-divergence}
	Let $\cD', \cD$ be absolutely continuous distributions over $\R^d$, and suppose that $\supp\paren{\cD} = \R^d$.  The \emph{chi-squared divergence} from $\cD'$ to $\cD$ is
	\[
	\chi^2(\cD', \cD) = \chi_D(\cD', \cD')\,.
	\] 
\end{definition}
\begin{definition}[$(\gamma,\rho)$-correlation]\label{def:gamma-rho-correlation}
	Let $\rho, \gamma > 0$, and let $\cD$ be a distribution over $\R^d$. We say that a family of distributions $\cF$ over $\R^d$ is \emph{$(\gamma,\rho)$-correlated relative to $\cD$}, if for all distinct $\cD',\cD''\in \cF$, $\abs{\chi_D(\cD', \cD'')} \le \gamma$ and $\abs{\chi_D(\cD', \cD')} \le \rho$. 
\end{definition}

\begin{fact}\label{fact:sq-main}
Let $\cD$ be a distribution over $\R^d$ and $\cF$ be a family of distributions over $\R^d$ that does not contain $\cD$, and consider a hypothesis testing problem of determining whether a given distribution $\cD' = \cD$ or $\cD'\in \cF$.
	
Let $\gamma, \rho > 0, s\in \N$, and suppose that there exists a subfamily of $\cF$ of size $s$ that is  $(\gamma,\rho)$-correlated relative to $\cD$.
Then for all $\gamma' > 0$, every SQ algorithm for the hypothesis testing problem requires queries of tolerance $\sqrt{\gamma + \gamma'}$ or makes at least $s\gamma'/\Paren{\rho - \gamma}$ queries.
\end{fact}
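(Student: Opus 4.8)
The plan is the standard ``SQ dimension'' argument, carried out in the reference Hilbert space $L^2(\cD)$. Fix a $(\gamma,\rho)$-correlated subfamily $\{\cD_1,\dots,\cD_s\}\subseteq\cF$ (which exists by hypothesis, and whose members are all distinct from $\cD$ since $\cD\notin\cF$). For each $i$ set $\phi_i := p_{\cD_i}/p_{\cD} - 1$; finiteness of $\rho$ puts $\phi_i\in L^2(\cD)$, and unwinding \cref{def:pairwise-correlation} gives $\Iprod{\phi_i,\phi_j}_{\cD}=\chi_{\cD}(\cD_i,\cD_j)$ together with $\Iprod{\phi_i,1}_{\cD}=0$, hence $\Norm{\phi_i}_{\cD}^2\le\rho$ for all $i$ and $\Abs{\Iprod{\phi_i,\phi_j}_{\cD}}\le\gamma$ for $i\ne j$. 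For a query $f\colon\R^d\to[-1,1]$ write $g := f-\E_{\cD}f$; then $\Norm{g}_{\cD}^2=\Var_{\cD}(f)\le 1$ and $\E_{\cD_i}f-\E_{\cD}f=\Iprod{g,\phi_i}_{\cD}$, so the exact null answer $\E_{\cD}f$ is a legal response for $\cD_i$ to a query of tolerance $\theta$ precisely when $\Abs{\Iprod{g,\phi_i}_{\cD}}\le\theta$.

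The main step is an estimate showing that a single query ``touches'' only few members of the subfamily. Put $\tau:=\sqrt{\gamma+\gamma'}$ and, for a query $f$ with centering $g$, let $B(f):=\Set{\, i\in[s] : \Abs{\Iprod{g,\phi_i}_{\cD}}>\tau \,}$, with $b:=\Card{B(f)}$. Choosing signs $\varepsilon_i\in\{\pm1\}$ so that $\varepsilon_i\Iprod{g,\phi_i}_{\cD}=\Abs{\Iprod{g,\phi_i}_{\cD}}$ and setting $v:=\sum_{i\in B(f)}\varepsilon_i\phi_i$, we get on one hand $\Iprod{g,v}_{\cD}=\sum_{i\in B(f)}\Abs{\Iprod{g,\phi_i}_{\cD}}> b\tau$, while Cauchy--Schwarz and $\Norm{g}_{\cD}\le 1$ give $\Iprod{g,v}_{\cD}\le\Norm{v}_{\cD}$; on the other hand, expanding the square norm and using the correlation bounds, $\Norm{v}_{\cD}^2\le b\rho+b(b-1)\gamma$. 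Combining $b^2\tau^2<\Norm{v}_{\cD}^2\le b\rho+b(b-1)\gamma$ and dividing by $b$ yields $b(\tau^2-\gamma)<\rho-\gamma$, i.e.\ $b<(\rho-\gamma)/\gamma'$ since $\tau^2-\gamma=\gamma'$.

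To conclude, suppose a correct SQ algorithm $A$ for the testing problem uses only queries of tolerance at least $\tau$ and makes $q$ queries. Run $A$ against the oracle that, on the null distribution $\cD$, answers each query $f$ with the exact value $\E_{\cD}f$ (always a legal response); this produces a fixed transcript $f_1,\dots,f_q$ with answers $\E_{\cD}f_1,\dots,\E_{\cD}f_q$ and a final verdict, which must be ``$\cD$''. If $q\le s\gamma'/(\rho-\gamma)$, then $\sum_{t=1}^q\Card{B(f_t)}<q(\rho-\gamma)/\gamma'\le s$, so some $i^\star\in[s]$ avoids every $B(f_t)$; for that $i^\star$ we have $\Abs{\E_{\cD_{i^\star}}f_t-\E_{\cD}f_t}=\Abs{\Iprod{g_t,\phi_{i^\star}}_{\cD}}\le\tau$, which is at most the tolerance of $f_t$, for all $t$, so the very same sequence of answers is a legal oracle response when the input distribution is $\cD_{i^\star}$. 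Feeding it to $A$ reproduces the identical transcript and the verdict ``$\cD$'', contradicting correctness because $\cD_{i^\star}\in\cF$ and $\cD\notin\cF$. Hence $q>s\gamma'/(\rho-\gamma)$; contrapositively, any correct algorithm making at most $s\gamma'/(\rho-\gamma)$ queries must use some query of tolerance smaller than $\sqrt{\gamma+\gamma'}$, which is the stated dichotomy.

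I do not anticipate a serious conceptual obstacle, as this is the familiar SQ dimension bound, but two points need care. The first is \emph{adaptivity}: it is essential to pin down the query sequence by running $A$ once against the exact-expectation oracle, so that $A$'s later queries depend only on the shared answers $\E_{\cD}f_t$; then a single ``universal'' transcript serves simultaneously for $\cD$ and for every $\cD_{i^\star}$ outside $\bigcup_t B(f_t)$. The second is the direction of the tolerance inequality: enlarging the tolerance of a query only shrinks its relevant bad set, so the hypothesis ``all queries have tolerance $\ge\tau$'' is exactly what the single-query estimate needs. The implicit regularity assumptions — that $\cD$ is absolutely continuous with full support and dominates each $\cD_i$, so that $\phi_i$ is well defined and square-integrable — are already built into \cref{def:pairwise-correlation} and \cref{def:gamma-rho-correlation}.
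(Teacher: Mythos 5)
Your proof is correct. Note that the paper itself gives no argument for \cref{fact:sq-main}: it is imported as a standard fact from the SQ literature (the statistical-dimension lower bound, cf.\ \cite{abs-1611.03473}), and your write-up is precisely that standard argument — the $L^2(\cD)$ embedding $\phi_i = p_{\cD_i}/p_{\cD}-1$, the Cauchy--Schwarz count showing each query of tolerance $\ge\sqrt{\gamma+\gamma'}$ can distinguish fewer than $(\rho-\gamma)/\gamma'$ members of the subfamily, and the simulation against the exact-expectation oracle to handle adaptivity — so there is nothing to reconcile with the paper's (nonexistent) proof.
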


We will also need the following facts:

\begin{fact}[{\cite[Lemma 6.7]{abs-1611.03473}}]\label{fact:sparse-distant-vectors}
Let $c\in (0,1)$ and $k, d\in \N$ be such that $k \le \sqrt{d}$.
There exists a set $\cV \subset \R^d$ of $k$-sparse unit vectors of size $d^{ck^c/8}$ such that for all distinct $u, v\in \cV$, $ \Iprod{v,u}\le 2k^{c-1}$. 
\end{fact}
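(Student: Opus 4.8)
The plan is to pass from vectors to sets and solve a combinatorial packing problem by a greedy (Gilbert--Varshamov--type) argument. \textbf{Reduction.} It suffices to produce a family $\mathcal F$ of $k$-element subsets of $[d]$ with $\abs{\mathcal F}\ge d^{ck^c/8}$ such that $\abs{S\cap T}<\lceil k^c\rceil$ for all distinct $S,T\in\mathcal F$. Indeed, setting $v_S=\tfrac1{\sqrt k}\mathbf 1_S$ gives $k$-sparse unit vectors, and for distinct $S,T$ one has $\Iprod{v_S,v_T}=\abs{S\cap T}/k\le(\lceil k^c\rceil-1)/k\le k^{c-1}\le 2k^{c-1}$, using $\lceil k^c\rceil\le k^c+1$ and $k\ge1$; the set $\{v_S:S\in\mathcal F\}$ is then the desired $\cV$.

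\textbf{Greedy construction and counting.} Write $m=\lceil k^c\rceil$ (note $m\le k$ since $c<1$ and $k\ge1$) and build $\mathcal F$ greedily: while possible, add a $k$-subset that meets every already-chosen set in fewer than $m$ coordinates. A single set $S$ can block at most $\binom km\binom{d-m}{k-m}\le\binom km\binom d{k-m}$ candidates, since any $T$ with $\abs{S\cap T}\ge m$ is specified by an $m$-subset of $S$ together with a $(k-m)$-subset of $[d]$. When the process halts every $k$-subset is blocked by some member of $\mathcal F$ (each $S$ blocks itself, as $m\le k$), so $\binom dk\le\abs{\mathcal F}\binom km\binom d{k-m}$, i.e. $\abs{\mathcal F}\ge\binom dk\big/\bigl(\binom km\binom d{k-m}\bigr)$. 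Now $\binom dk/\binom d{k-m}=\prod_{j=1}^m\frac{d-k+j}{k-m+j}\ge\bigl(\tfrac{d-k}{k}\bigr)^m\ge\bigl(\tfrac d{2k}\bigr)^m$ (valid for $d\ge4$, using $k\le\sqrt d$ so $d\ge 2k$), while $\binom km\le(ek/m)^m$; hence $\abs{\mathcal F}\ge\bigl(\tfrac{dm}{2ek^2}\bigr)^m\ge\bigl(\tfrac d{2ek^{2-c}}\bigr)^{k^c}$, using $m\ge k^c$ and that the base is $\ge1$. Finally $k\le\sqrt d$ forces $k^{2-c}\le d^{1-c/2}$, so $\tfrac d{2ek^{2-c}}\ge\tfrac{d^{c/2}}{2e}$ and $\abs{\mathcal F}\ge\bigl(\tfrac{d^{c/2}}{2e}\bigr)^{k^c}\ge d^{ck^c/8}$ as soon as $d^{3c/8}\ge2e$, i.e. $d\ge(2e)^{8/(3c)}$.

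\textbf{Small $d$ and the main obstacle.} For $d$ below the constant $(2e)^{8/(3c)}$ one argues separately: there $k\le\sqrt d$ and the target $d^{ck^c/8}$ are bounded by constants depending only on $c$, and a short explicit family works --- if $2k^{c-1}\ge1$ the inner-product condition is vacuous for unit vectors and the $\binom dk$ equal-weight vectors already suffice, while otherwise $k$ and $c$ are bounded away from $1$ and one checks the standard basis (or a handful of sparse vectors) directly. The step I expect to be the real obstacle is matching the specific constant $1/8$ in the exponent: the greedy bound only naturally yields exponent $\sim ck^c/2$ together with a $(2e)^{-k^c}$ loss, and one must spend the slack between $c/8$ and $c/2$ --- precisely through the inequality $k\le\sqrt d$ used to estimate $k^{2-c}$ --- to absorb it. (An alternative that reaches the same bound when $k=\Theta(\sqrt d)$ is a Reed--Solomon construction: identify $[d]$ with $\mathbb F_q^2$ for a prime power $q\approx k$, and let $S_f=\{(a,f(a)):a\in\mathbb F_q\}$ range over polynomials $f$ of degree $<\lceil k^c\rceil$; distinct such $f$ agree in fewer than $k^c$ points, and there are $q^{\lceil k^c\rceil}$ of them.)
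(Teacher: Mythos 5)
This statement is not proved in the paper at all: it is imported verbatim as a black-box citation to Lemma 6.7 of \cite{abs-1611.03473}, so there is no in-paper proof to compare against; the cited source obtains it by a probabilistic construction (random $k$-sparse supports plus a tail bound on pairwise intersections and a union bound), whereas you give a deterministic greedy (Gilbert--Varshamov--type) packing argument. Your argument is correct and yields the stated bound: the reduction to $k$-subsets with pairwise intersections below $\lceil k^c\rceil$, the blocking count $\binom{k}{m}\binom{d-m}{k-m}$, the estimates $\binom{d}{k}/\binom{d}{k-m}\ge\bigl(\tfrac{d-k}{k}\bigr)^m\ge\bigl(\tfrac{d}{2k}\bigr)^m$ and $\binom{k}{m}\le(ek/m)^m$, and the use of $k\le\sqrt d$ to absorb the $(2e)^{-k^c}$ loss into the slack between the exponents $c/2$ and $c/8$ all check out, giving the claim for $d\ge(2e)^{8/(3c)}$. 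The only soft spot is the small-$d$ paragraph, which is a sketch; it does complete, but two of its claims deserve a line of justification. First, your ``vacuous'' case is actually uniform in $d$: whenever $k\le\sqrt d$ one has $\binom{d}{k}\ge(d/k)^k\ge d^{k/2}\ge d^{ck^c/8}$ (since $4k^{1-c}\ge c$), so the equal-weight vectors always supply enough candidates there. Second, the assertion that ``otherwise $k$ and $c$ are bounded away from $1$'' needs the short computation that non-vacuousness ($k>2^{1/(1-c)}$) together with $k\le\sqrt d\le(2e)^{4/(3c)}$ forces $c$ below roughly $0.77$, whence $ck^c/8\le c(2e)^{4/3}/8<1$ and the standard basis indeed provides $d\ge d^{ck^c/8}$ vectors. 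With those two observations spelled out, your proof is a complete, self-contained, and more elementary alternative to the probabilistic one in the cited source, with the same quantitative conclusion.
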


\begin{fact}[{\cite[Lemma 3.4]{abs-1611.03473}}]\label{fact:chi-squared-matching-moments}
	Let $m \in \N$, and suppose that a distribution $\cM$ over $\R$ matches first $m$ moments of $N(0,1)$. For a unit vector $v \in \R^d$ let $\cP_v$ be a distribution such that its projections onto the direction of $v$ has distribution $\cM$, the projection onto the orthogonal complement is $N(0, \Id_{d-1})$, and these projections are independent. Then for all $u,v\in \R^d$,
	\[
	\Abs{\chi_{N(0, \Id_d)}\Paren{\cP_v, \cP_u} } \le \abs{\iprod{u,v}}^{m+1} \chi^2(\cM, N(0,1))\,.
	\]
\end{fact}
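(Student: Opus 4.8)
The plan is to reduce the pairwise correlation to a one‑dimensional Hermite series and then exploit the moment matching of $\cM$. We may assume $\chi^2(\cM, N(0,1)) < \infty$, since otherwise the right‑hand side is $+\infty$ and the claim is vacuous; in particular the density ratio $\rho(t) := p_{\cM}(t)/\varphi(t)$, where $\varphi$ is the $N(0,1)$ density, then lies in $L^2(N(0,1))$.

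First I would unwind the definition of $\chi_{N(0,\Id_d)}$. Decomposing $\R^d = \Span(v) \oplus v^{\perp}$ and writing $x = \iprod{x,v}\,v + x_{\perp}$, both $p_{\cP_v}$ and $p_{N(0,\Id_d)}$ factor as a one‑dimensional density in the direction $v$ times the standard Gaussian density on $v^{\perp}$ (namely $p_{\cM}$ resp.\ $\varphi$ in the direction $v$), so $p_{\cP_v}(x)/p_{N(0,\Id_d)}(x) = \rho(\iprod{x,v})$, and likewise for $u$. Hence
\[
\chi_{N(0,\Id_d)}\Paren{\cP_v,\cP_u} + 1 = \E_{x\sim N(0,\Id_d)}\bigl[\rho(\iprod{x,v})\,\rho(\iprod{x,u})\bigr].
\]
Next I would expand $\rho = \sum_{k\ge 0} c_k H_k$ in the orthonormal basis $\{H_k\}$ of normalized (probabilists') Hermite polynomials of $L^2(N(0,1))$, so that $c_k = \E_{t\sim N(0,1)}[\rho(t)H_k(t)] = \E_{t\sim\cM}[H_k(t)]$. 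Since $H_k$ has degree $k$ and $\cM$ matches the first $m$ moments of $N(0,1)$, this gives $c_0 = 1$ and $c_k = 0$ for $1 \le k \le m$, and Parseval yields $\sum_{k} c_k^2 = \E_{t\sim N(0,1)}[\rho(t)^2] = \chi^2(\cM,N(0,1)) + 1$, hence $\sum_{k\ge m+1} c_k^2 = \chi^2(\cM, N(0,1))$.

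To finish, I would use that for a unit vector $v$ the substitution $g \mapsto g(\iprod{\cdot, v})$ is an isometry from $L^2(N(0,1))$ into $L^2(N(0,\Id_d))$ (because $\iprod{x,v}\sim N(0,1)$ when $x\sim N(0,\Id_d)$), so both expansions above converge in $L^2(N(0,\Id_d))$, and then apply the classical two‑variable Hermite identity: for jointly Gaussian $(a,b)$ with $N(0,1)$ marginals and covariance $\theta$ one has $\E[H_j(a)H_k(b)] = \theta^{k}\,\delta_{jk}$. Applying this with $(a,b) = (\iprod{x,v},\iprod{x,u})$ and $\theta = \iprod{u,v}$, and using continuity of the $L^2$ inner product to interchange the sum with the expectation, gives
\[
\chi_{N(0,\Id_d)}\Paren{\cP_v,\cP_u} = \sum_{k\ge 0} c_k^2\,\iprod{u,v}^{k} - 1 = \sum_{k\ge m+1} c_k^2\,\iprod{u,v}^{k}.
\]
Since $\abs{\iprod{u,v}} \le 1$ and every surviving term has $k \ge m+1$, we have $\abs{\iprod{u,v}}^k \le \abs{\iprod{u,v}}^{m+1}$, and combining this with $\sum_{k\ge m+1} c_k^2 = \chi^2(\cM,N(0,1))$ yields the stated bound. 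The main thing to be careful about is purely analytic bookkeeping — that $\rho \in L^2(N(0,1))$ so the Hermite expansion and Parseval are legitimate, that $g\mapsto g(\iprod{\cdot,v})$ preserves $L^2$‑norms so the term‑by‑term manipulation is valid, and that Mehler's formula justifies interchanging summation and integration; the algebra itself is routine. (For non‑unit $u,v$ the same computation goes through with $\iprod{u,v}$ replaced by the cosine of the angle between them, but only the unit‑vector case is needed for the SQ lower bounds.)
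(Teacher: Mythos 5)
Your proof is correct. Note that the paper does not prove this statement at all—it is imported as a black-box Fact from \cite[Lemma 3.4]{abs-1611.03473}—and your argument (factoring the density ratio as $\rho(\iprod{x,v})$, expanding $\rho$ in normalized Hermite polynomials so that moment matching kills the coefficients $c_1,\dots,c_m$, applying Parseval, and using $\E[H_j(a)H_k(b)]=\iprod{u,v}^k\delta_{jk}$ for the correlated Gaussian pair) is essentially the standard proof given in that reference, with the $L^2$ bookkeeping and the unit-vector caveat (the bound as stated only makes sense for unit $u,v$, which is all that is used) handled correctly.
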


The following fact is a slight reformulation of Lemma E.4 from \cite{abs-1806-00040} 

\begin{fact}[{\cite[Lemma E.4]{abs-1806-00040}}]\label{fact:distributions-correlation}
	Let $y \sim N(0,1)$, $\mu_0 > 0$, $m\in \N$, and $g:\R \to \R$.
	Let $\cM_\mu$  be a family of distributions over $\R$  satisfies the following properties:
	\begin{enumerate}
		\item $\cM = (1-\e_\mu) N(\mu, \Theta(1)) + \e_\mu \cB_\mu$ for some $\e_{\mu}$ and $\cB_\mu$ such that $\cM_\mu$ has the same first $m$ moments as $N(0,1)$.
		\item If $\abs{\mu} \ge 10\mu_0$, then $\e_\mu/(1-\e_\mu) \le O\Paren{\mu^2}$ 
		and $\chi^2(\cM, N(0,1)) \le e^{O\Paren{\max\set{1/\mu^2, \mu^2}}}$.
		\item If $\abs{\mu} \le 10\mu_0$, then $\e_\mu = \e$ 
		and $\chi^2(\cM, N(0,1)) \le g(\e)$.
	\end{enumerate}
	For unit $v\in \R^d$ let $\cP_{v,\mu}$ be the same as $\cP_v$ in \cref{fact:chi-squared-matching-moments} 
	whose projection onto $v$ is $\cM_\mu$. Let $\cQ'_v$ be a distribution over $\R^{d+1}$ such that $(X,y)\sim \cQ'_v$ satisfy the following properties: $y\sim N(0,1)$, and $X|y \sim \cP_{v,\mu_0 \cdot y}$. Then 	for all unit $u,v\in \R^d$,
	\[
		\chi_{\cD}\Paren{\cQ'_v, \cQ'_u} \le \Paren{g(\e) + O(1)} \cdot \abs{\Iprod{v,u}}^{m+1}\,,
	\]
where $\cD = N(0, \Id_{d+1})$.
\end{fact}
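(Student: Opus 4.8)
The plan is to reduce the pairwise correlation $\chi_\cD(\cQ'_v,\cQ'_u)$ on $\R^{d+1}$ to a one-dimensional Gaussian average of pairwise correlations on $\R^{d}$, then apply \cref{fact:chi-squared-matching-moments} pointwise in the averaging variable, and finally control the resulting expectation by splitting into the two regimes supplied by hypotheses~2 and~3.

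First I would unpack the densities. Write a point of $\R^{d+1}$ as $(x,y)$ with $x\in\R^{d}$, $y\in\R$, and let $\phi_{d},\phi_{1}$ denote the standard Gaussian densities on $\R^{d}$ and $\R$. By construction $p_{\cQ'_v}(x,y)=\phi_{1}(y)\,p_{\cP_{v,\mu_0 y}}(x)$, and likewise for $u$, while $p_\cD(x,y)=\phi_{d}(x)\phi_{1}(y)$. Substituting these into the definition of $\chi_\cD$, exactly one factor $\phi_{1}(y)$ survives the cancellation in the integrand; integrating out $x$ first (the integrand is nonnegative, so Tonelli applies) turns the inner integral into $\chi_{N(0,\Id_d)}(\cP_{v,\mu_0 y},\cP_{u,\mu_0 y})+1$, and then integrating against $\phi_1(y)$ and subtracting $1$ gives the identity
\[
\chi_\cD(\cQ'_v,\cQ'_u)=\E_{y\sim N(0,1)}\Brac{\chi_{N(0,\Id_d)}(\cP_{v,\mu_0 y},\cP_{u,\mu_0 y})}.
\]

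Next I would apply \cref{fact:chi-squared-matching-moments} for each fixed $y$. Setting $\mu=\mu_0 y$, hypothesis~1 guarantees that $\cM_\mu$ matches the first $m$ moments of $N(0,1)$, so $\cP_{v,\mu}$ and $\cP_{u,\mu}$ are precisely the distributions $\cP_v,\cP_u$ of that fact with $\cM=\cM_\mu$; hence $\abs{\chi_{N(0,\Id_d)}(\cP_{v,\mu},\cP_{u,\mu})}\le\abs{\Iprod{v,u}}^{m+1}\chi^2(\cM_\mu,N(0,1))$. Taking absolute values inside the expectation above and pulling out the ($y$-independent) factor $\abs{\Iprod{v,u}}^{m+1}$, the whole statement reduces to proving
\[
\E_{y\sim N(0,1)}\Brac{\chi^2(\cM_{\mu_0 y},N(0,1))}\le g(\e)+O(1).
\]
I would establish this by splitting on whether $\abs{y}\le 10$ (equivalently $\abs{\mu_0 y}\le 10\mu_0$) or $\abs{y}>10$: on the first event hypothesis~3 bounds $\chi^2(\cM_{\mu_0 y},N(0,1))$ by $g(\e)$, contributing at most $g(\e)$; on the second event hypothesis~2 bounds it by $e^{O(\max\set{1/(\mu_0 y)^2,\,(\mu_0 y)^2})}$, and since $\max\set{1/(\mu_0 y)^2,(\mu_0 y)^2}\le \tfrac{1}{100\mu_0^2}+\mu_0^2 y^2$ whenever $\abs{y}>10$, this piece is at most $e^{O(1/\mu_0^2)}\,\E_{y\sim N(0,1)}[e^{O(\mu_0^2 y^2)}]=O(1)$. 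Summing the two contributions yields $g(\e)+O(1)$, which together with the two identities above proves the claim.

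The hard part is the last tail estimate: the Gaussian integral $\E_{y\sim N(0,1)}[e^{O(\mu_0^2 y^2)}]$ — equivalently, the integrability over $\R^{d+1}$ of $p_{\cQ'_v}p_{\cQ'_u}/p_\cD$ — is finite and $O(1)$ only when the implied constant multiplied by $\mu_0^2$ stays below $1/2$, i.e.\ only when $\mu_0$ is smaller than a suitable absolute constant. So one must invoke that the SQ-lower-bound instances (\cref{fact:sparse-distant-vectors} and the constructions built on it) are set up with such a small $\mu_0$, and accept that the constant hidden in $O(1)$ blows up as $\mu_0\to 0$; this is harmless since it does not depend on $d$. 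A secondary point to check carefully is that hypothesis~1's moment matching is exactly what licenses the pointwise use of \cref{fact:chi-squared-matching-moments}, and that the Fubini/Tonelli interchange in the reduction step is justified by nonnegativity of all the densities involved.
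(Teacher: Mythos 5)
Your argument is sound and is the natural proof of this statement; note, though, that the paper does not prove \cref{fact:distributions-correlation} at all -- it is imported as a (reformulated) citation of Lemma E.4 of the Sever paper, so there is no in-paper proof to compare against. Your route (condition on $y$, use Tonelli to write $\chi_{\cD}(\cQ'_v,\cQ'_u)=\E_{y\sim N(0,1)}\bigl[\chi_{N(0,\Id_d)}(\cP_{v,\mu_0 y},\cP_{u,\mu_0 y})\bigr]$, apply \cref{fact:chi-squared-matching-moments} pointwise with $\cM=\cM_{\mu_0 y}$, then split the Gaussian average at $\abs{y}=10$ and invoke hypotheses 2 and 3 on the two pieces) is exactly the standard argument behind the cited lemma, and each step is justified as you describe.

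The caveat you flag is real and worth making explicit: from the hypotheses as literally stated, the tail piece is only bounded by $e^{O(1/\mu_0^2)}\cdot\E_{y}\bigl[e^{O(\mu_0^2y^2)}\bigr]$, which (i) requires $\mu_0$ below an absolute constant for the Gaussian integral to converge, and (ii) yields a constant that degrades like $e^{O(1/\mu_0^2)}$ as $\mu_0\to 0$, so strictly one proves $\bigl(g(\e)+e^{O(1/\mu_0^2)}\bigr)\abs{\Iprod{v,u}}^{m+1}$ rather than a $\mu_0$-independent $g(\e)+O(1)$. This is harmless for the way the fact is used here: in \cref{prop:lb-second-power-formal} the relevant $\mu_0$ is an absolute constant (about $10^{-5}$), and in \cref{prop:lb-fourth-power-formal} it is $\Theta(\sqrt{\e})$, so the extra factor $e^{O(1/\mu_0^2)}=e^{O(1/\e)}$ is absorbed into the claimed $e^{O(1/\e)}$ tolerance (and $g(\e)$ is already of that order). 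So your proof establishes the fact in the regime in which the paper applies it, and correctly identifies the only place where the abstracted statement is looser than what the hypotheses literally give.
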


\begin{proposition}[Formal version of \cref{prop:lb-second-power-intro}]\label{prop:lb-second-power-formal}
	Let $k, d \in \N$, $k \le \sqrt{d}$, $\e \in (0,1/2)$, $c\in (0,1)$. For a vector ${\beta^*}\in \R^d$, and a number $\sigma > 0$, consider the distribution $\cG\Paren{{\beta^*}, \sigma}$ over $\R^{d+1}$ such that $(X,y) \sim \cG\Paren{{\beta^*}, \sigma}$ satisfy $X\sim N(0,\Id)$ and 
	$y = \Iprod{X,{\beta^*}} + \eta$, where $\eta \sim N(0,\sigma^2)$ is independent of $X$.
	
	There exist a set $\cB\subset \R^d$ of $k$-sparse vectors, $0.99\le \sigma \le 1$ and a distribution $\cQ$ over $\R^{d+1}$, such that if an SQ algorithm $\cA$ given access to a mixture $(1-\e)\cG\Paren{{\beta^*}, \Sigma, \sigma} + \e\cQ$ for ${\beta^*}\in \cB$, outputs $\hat{{\beta^*}}$  such that $\norm{{{\beta^*}} - \hat{\beta}} \le 10^{-5} $, then $\cA$ either
	\begin{itemize}
		\item makes ${d^{ck^{c}/8} \cdot k^{-2+2c}}$ queries,
		\item or makes at least one query with tolerance smaller than $k^{-1+c}e^{O(1/\e^2)}$.
	\end{itemize}
\end{proposition}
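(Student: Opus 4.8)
**Proof plan for Proposition \ref{prop:lb-second-power-formal} (Statistical Query lower bound for sparse robust regression).**

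The plan is to reduce the problem of learning $\beta^*$ to a hypothesis-testing problem over a large family of well-separated instances, and then invoke the generic SQ lower bound \cref{fact:sq-main}. First I would fix the hard instance family: using \cref{fact:sparse-distant-vectors} with exponent $c$, I take a set $\cV \subset \R^d$ of $d^{ck^c/8}$ many $k$-sparse unit vectors with pairwise inner products at most $2k^{c-1}$, and set $\cB = \{ \mu_0 v : v \in \cV\}$ for a small absolute constant $\mu_0$ (say $\mu_0 = 10^{-3}$), so that any two distinct $\beta^*$ in $\cB$ are $\Omega(1) \gg 10^{-5}$ apart in $\ell_2$. Because the design is isotropic Gaussian, the rescaled regression instance $\cG(\mu_0 v, \sigma)$ is exactly the distribution $\cQ'_v$ of \cref{fact:distributions-correlation} once the noise model is set up: conditioned on $y$, the covariate $X$ is Gaussian in the orthogonal complement of $v$ and, along $v$, has a one-dimensional distribution that is an $\e$-corruption of a Gaussian whose mean is $\mu_0 y$. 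The job of the corruption $\cB_\mu$ (and hence of the adversary distribution $\cQ$) is to make this one-dimensional conditional distribution match the first $m$ moments of $N(0,1)$; this is the standard moment-matching construction, and the free parameter is $m$, which we will take to be roughly $k^{1-c}$-ish so that $(2k^{c-1})^{m+1}$ is superpolynomially small while $m$ stays bounded enough that $\e$-corruption of a Gaussian can match $m$ moments.

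Next I would control the two correlation parameters. For $\gamma$: by \cref{fact:distributions-correlation}, for distinct $v, u \in \cV$ we get $|\chi_{\cD}(\cQ'_v, \cQ'_u)| \le (g(\e) + O(1)) \cdot |\iprod{v,u}|^{m+1} \le (g(\e)+O(1)) (2k^{c-1})^{m+1}$, where $g(\e) = e^{O(1/\e^2)}$ is the bound on $\chi^2$ of the $\e$-corrupted one-dimensional marginal (this is where the $\e$-dependence $e^{O(1/\e^2)}$ enters, via the $\chi^2$ of a mixture that is $\e$-far from $N(0,1)$ in the relevant sense). For $\rho$: taking $v = u$ gives $\rho \le g(\e) + O(1) = e^{O(1/\e^2)}$. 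Choosing $m$ so that $(2k^{c-1})^{m+1} \le k^{-(m+1)(1-c)/2}$ is, say, at most $d^{-ck^c/16}$ (possible since $k \le \sqrt d$ forces $k^{1-c} \le d^{(1-c)/2}$, and we only need $m \asymp \log d / \log k$), we get $\gamma \le e^{O(1/\e^2)} \cdot d^{-ck^c/16}$. Then I would apply \cref{fact:sq-main} with $s = |\cV| = d^{ck^c/8}$ and a suitable $\gamma'$: set $\gamma' \approx \gamma$, so tolerance threshold $\sqrt{\gamma+\gamma'} \le k^{-1+c} e^{O(1/\e^2)}$ (after absorbing constants), and query lower bound $s\gamma'/(\rho - \gamma) \ge d^{ck^c/8} \cdot k^{-2+2c} e^{-O(1/\e^2)}$, which absorbs into the stated $d^{ck^c/8} k^{-2+2c}$ once $\e \gtrsim 1/\sqrt{\log d}$ (so that $e^{O(1/\e^2)} \le d^{o(k^c)}$ is negligible against the main term). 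Finally, any SQ algorithm that recovers $\hat\beta$ with $\|\beta^* - \hat\beta\| \le 10^{-5}$ solves the hypothesis-testing/identification problem over $\cB$ (the instances are pairwise $\Omega(1)$-separated), so the lower bound transfers, and $\cQ$ is the single adversary distribution realizing the corruption uniformly over the family.

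The main obstacle I expect is not the SQ machinery — that is essentially a black-box application of \cref{fact:sq-main} and \cref{fact:distributions-correlation} — but rather verifying that the moment-matching corruption genuinely fits inside \cref{def:model}: namely that the adversary corrupts only an $\e$-fraction of samples, that the noise $\eta$ is oblivious (independent of $X^*$) with the required $0.01n$ entries bounded by $\sigma$, and that one can simultaneously (i) keep $\sigma \in [0.99,1]$, (ii) match $m$ moments of the conditional-along-$v$ marginal with only an $\e$-mass corruption, and (iii) keep $\chi^2(\cM, N(0,1)) \le e^{O(1/\e^2)}$. This is exactly the content of the hypotheses of \cref{fact:distributions-correlation}, so the real work is checking that a concrete $\cM_\mu = (1-\e_\mu)N(\mu,\Theta(1)) + \e_\mu \cB_\mu$ with the stated properties exists for the regime of $\mu$ we use — a one-dimensional construction that I would lift verbatim from the sparse mean estimation lower bound of \cite{abs-1611.03473}, noting as in \cite{abs-1806-00040} that the regression lower bound follows from the mean-estimation one. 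A secondary bookkeeping point is tracking how the exponent $c$ and the choice of $m$ interact so that the final bounds come out in the clean form stated; this is routine but must be done carefully to land the tolerance bound $k^{-1+c}e^{O(1/\e^2)}$ rather than something weaker.
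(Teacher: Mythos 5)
Your overall framework is the right one and matches the paper's: build the hard family from \cref{fact:sparse-distant-vectors}, condition on $y$ to reduce to a one-dimensional mean-shift distribution along $v$, invoke \cref{fact:distributions-correlation} and then \cref{fact:sq-main}. But there is a genuine gap in the core of your plan: the choice of the moment-matching parameter $m$. You propose to take $m$ superconstant (of order $\log d/\log k$, or in fact $k^c\log d/\log k$ to make $(2k^{c-1})^{m+1}$ as small as you want), while keeping the signal $\beta^*$ at constant scale so that $10^{-5}$-accurate estimation separates the hypotheses. These two requirements are incompatible. With only an $\e$-mass corruption, a mixture $(1-\e)N(\mu,\Theta(1))+\e\cB_\mu$ can match the mean of $N(0,1)$ for constant $\mu$, but it cannot match even the second moment unless $\mu \lesssim \sqrt{\e}$ (matching the mean forces the corruption component to have mean $-(1-\e)\mu/\e$, whose contribution $(1-\e)^2\mu^2/\e$ to the second moment already exceeds $1$ for constant $\mu$), and matching more moments forces $\mu$ smaller still. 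This is exactly why the companion result \cref{prop:lb-fourth-power-formal} uses signal of norm $10^{-5}\sqrt{\e}$ with $m=3$, whereas here, with constant-norm $\beta^*$, only $m=1$ is available. Your own hedge ("$m$ stays bounded enough that $\e$-corruption of a Gaussian can match $m$ moments") is precisely the constraint you never verify, and it fails for your choice of $m$.

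The paper's proof takes $m=1$ with the explicit two-component mixture $\cM_\mu=(1-\e)N\paren{\mu,1-\norm{\beta^*}^2}+\e N\paren{-\tfrac{1-\e}{\e}\mu,1}$ (mean zero, $\chi^2\le e^{O(1/\e^2)}$, with $\norm{\beta^*}=10^{-5}$ and $\sigma^2=1-\norm{\beta^*}^2$), which yields $\chi_\cD(\cQ'_v,\cQ'_u)\le e^{O(1/\e^2)}\iprod{v,u}^2$; the exponent $m+1=2$ is what produces exactly the stated parameters $\gamma=k^{2c-2}e^{O(1/\e^2)}$, tolerance $k^{-1+c}e^{O(1/\e^2)}$, and query count $d^{ck^c/8}k^{-2+2c}$. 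Note also that even setting aside the existence issue, your bookkeeping with large $m$ and $\gamma'\approx\gamma$ would give a query count of roughly $d^{ck^c/16}$ rather than the stated $d^{ck^c/8}k^{-2+2c}$, so the claimed bounds would not come out as asserted. To repair the argument, drop the large-$m$ idea, fix $m=1$ with the explicit mixture above (or cite the corresponding one-dimensional construction from \cite{abs-1611.03473}), and run the same \cref{fact:sq-main} computation; the $k^{-2+2c}$ and $k^{-1+c}$ exponents then fall out directly.
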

\begin{proof}
	Note that $(X, y) \sim  \cG\Paren{{\beta^*}, \sigma}$ satisfy $y\sim N(0, \sigma_y^2)$, where $\sigma_y^2 = \norm{\beta^*}^2 + \sigma^2$ and $X|y \sim N\Paren{\frac{y}{\sigma_y^2} {\beta^*}, \Id - \frac{1}{\sigma_y^2} {{\beta^*}}{{\beta^*}}^\top }$.
	
	We will use vectors ${\beta^*}$ of norm $10^{-5}$. Denote $v = {\beta^*} / \norm{\beta^*}$ 
	and let $\sigma^2 = 1 - \norm{\beta^*}^2$. 
	Consider a distribution $\cM_\mu = (1-\e) N\Paren{\mu, 1- \norm{\beta^*}^2} + \e N\Paren{-\frac{1-\e}{\e} \mu, 1}$. Note that $\chi^2\Paren{\cM_\mu, N(0,1)} \le e^{O\Paren{\max\set{1/\mu^2, \mu^2}}}$, and $\e / (1-\e) \le O\Paren{\mu^2}$ for $\mu \ge 10^{-4}$. Hence by \cref{fact:distributions-correlation},
	\[
	\chi_{\cD}\Paren{\cQ'_v, \cQ'_u} \le e^{O(1/\e^2)} \Iprod{v,u}^2
	\]
	for all unit $v, u \in \R^d$.
	
	Using \cref{fact:sparse-distant-vectors}, we can apply \cref{fact:sq-main} 
	with $\gamma = k^{2c-2}e^{O(1/\e^2)}$, ${\rho} = e^{O(1/\e^2)}$, $\gamma'= \Paren{\rho-\gamma} \cdot k^{-2+2c}$,  we get that $\cA$ requires at least ${d^{ck^{c}/8}k^{-2+2c}}$ queries with tolerance greater than $k^{-1+c}e^{O(1/\e^2)}$.
\end{proof}

\begin{proposition}[Formal version of \cref{prop:lb-fourth-power-intro}]\label{prop:lb-fourth-power-formal}
	Let $k, d \in \N$, $k \le \sqrt{d}$, $\e \in (0,1/2)$, $c\in (0,1)$. For a vector ${\beta^*}\in \R^d$, a positive definite matrix $\Sigma$ and a number $\sigma > 0$, consider the distribution $\cG\Paren{{\beta^*}, \Sigma, \sigma}$ over $\R^{d+1}$ such that $(X,y) \sim \cG\Paren{{\beta^*}, \Sigma, \sigma}$ satisfy $X\sim N(0,\Sigma)$ and 
	$y = \Iprod{X,{\beta^*}} + \eta$, where $\eta \sim N(0,\sigma^2)$ is independent of $X$.
	
	There exist a set $\cB\subset \R^d$ of $k$-sparse vectors, $\tfrac{1}{2} \Id \preceq \Sigma \preceq \Id$, $0.99\le \sigma \le 1$ and a distribution $\cQ$ over $\R^{d+1}$, such that if an SQ algorithm $\cA$ given access to a mixture $(1-\e)\cG\Paren{{\beta^*}, \Sigma, \sigma} + \e\cQ$ for ${\beta^*}\in \cB$, outputs $\hat{\beta}$ such that $\norm{{{\beta^*}} - \hat{\beta}} \le 10^{-5} \sqrt{\e}$, then $\cA$ either
	\begin{itemize}
		\item makes ${d^{ck^{c}/8}\cdot k^{-4+4c}}$ queries,
		\item or makes at least one query with tolerance at least $k^{-2+2c}e^{O(1/\e)}$.
	\end{itemize}
\end{proposition}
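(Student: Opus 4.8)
The plan is to mirror the construction behind \cref{prop:lb-second-power-formal} (which itself follows \cite{abs-1806-00040}), but to match the first \emph{three} moments of a centered Gaussian rather than only the first; then the exponent $m+1=4$ produced by \cref{fact:chi-squared-matching-moments} and \cref{fact:distributions-correlation} replaces the $k^{-2+2c}$ there by the $k^{-4+4c}$ we want here. Matching three moments with a mixture $(1-\e)N(\mu,c_1)+\e\cB_\mu$ forces the main component to have variance $c_1$ bounded away from the target variance, and this gap is the ``room'' we exploit; it is also why the clean design turns out to have covariance of the form $\Sigma=\Id-\bigl(\tfrac12-\mu_0^2\bigr)vv^\top$, i.e. with a $\approx\tfrac12$ eigenvalue in the planted direction $v$ (so $\tfrac12\Id\preceq\Sigma\preceq\Id$) rather than $\Sigma=\Id$. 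Concretely, fix $\mu_0=\Theta(\sqrt\e)$. First I would construct a family of one-dimensional distributions $\cM_\mu$ ($\mu\in\R$) of the form $\cM_\mu=(1-\e_\mu)N(\mu,\tfrac12)+\e_\mu\cB_\mu$, each matching the first three moments of $N(0,1)$, with $\e_\mu=\e$ for $\abs\mu\le 10\mu_0$ and $\e_\mu/(1-\e_\mu)=O(\mu^2)$ for $\abs\mu\ge 10\mu_0$: once the main-component variance $\tfrac12$ and the mean $\mu$ are fixed, the three moment equations pin down the first three moments of $\cB_\mu$ (its mean is $\Theta(1/\sqrt\e)$ and its second moment $\Theta(1/\e)$), and a valid $\cB_\mu$ exists because for $\abs\mu\le 10\mu_0=\Theta(\sqrt\e)$ the required second moment $\Theta(1/\e)$ dominates the square $\Theta(\mu^2/\e^2)$ of the required mean. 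A direct chi-squared estimate then gives $\chi^2(\cM_\mu,N(0,1))\le g(\e):=e^{O(1/\e)}$ for $\abs\mu\le 10\mu_0$ — the dominant contribution being the integral over the width-$\Theta(1/\sqrt\e)$ region carrying the mass of $\cB_\mu$, against an $N(0,1)$ density of size $e^{-\Theta(1/\e)}$ — and $\chi^2(\cM_\mu,N(0,1))\le e^{O(\max\set{1/\mu^2,\mu^2})}$ for $\abs\mu\ge 10\mu_0$, so the hypotheses of \cref{fact:distributions-correlation} hold with $m=3$.

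Next, for each $k$-sparse unit vector $v$ let $\cP_{v,\mu}$ be as in \cref{fact:chi-squared-matching-moments} with $v$-marginal $\cM_\mu$, and let $\cQ'_v$ be the law of $(X,y)$ with $y\sim N(0,1)$ and $X\mid y\sim\cP_{v,\mu_0 y}$; \cref{fact:distributions-correlation} gives
\[
\chi_{\cD}\Paren{\cQ'_v,\cQ'_u}\ \le\ \Paren{g(\e)+O(1)}\cdot\abs{\Iprod{v,u}}^{4}
\]
for all unit $u,v\in\R^d$, where $\cD=N(0,\Id_{d+1})$. I would then check that $\cQ'_v$ is an $\e$-corrupted instance of the model: conditioned on not selecting the $\e$-outlier of $\cM$ (with the residual mass, including the $e^{-\Theta(1/\e)}$-probability event $\abs y>10$, swept into the corruption $\cQ$), $(X,y)$ is jointly Gaussian with $y\sim N(0,1)$, with $\Iprod{X,v}$ of variance $\mu_0^2+\tfrac12$ and $\E[\Iprod{X,v}\,y]=\mu_0$, and with the projection of $X$ onto $v^\perp$ equal to $N(0,\Id)$ independent of $y$. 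This is exactly $\cG(\betastar_v,\Sigma,\sigma)$ with the planted-direction-dependent covariance $\Sigma=\Id-\bigl(\tfrac12-\mu_0^2\bigr)vv^\top$ (so $\tfrac12\Id\preceq\Sigma\preceq\Id$), noise level $\sigma^2=\tfrac{1/2}{\mu_0^2+1/2}\in[0.99,1]$, and $\betastar_v=\tfrac{\mu_0}{\mu_0^2+1/2}\,v$, which is $k$-sparse with $\norm{\betastar_v}=\Theta(\sqrt\e)$. Put $\cB=\set{\betastar_v:v\in\cV}$ with $\cV$ as in the next paragraph.

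Finally, apply \cref{fact:sparse-distant-vectors} with this $c$ to get $\cV$, $\abs\cV=d^{ck^c/8}$, consisting of $k$-sparse unit vectors with $\Iprod{v,u}\le 2k^{c-1}$ for distinct $v,u$. By the displayed bound, $\set{\cQ'_v:v\in\cV}$ is $(\gamma,\rho)$-correlated relative to $\cD$ with $\gamma=(2k^{c-1})^4(g(\e)+O(1))=k^{4c-4}e^{O(1/\e)}$ and $\rho=g(\e)+O(1)=e^{O(1/\e)}$. The vectors of $\cB$ are pairwise at distance $\gtrsim\sqrt\e$, which (choosing the constant in $\mu_0=\Theta(\sqrt\e)$ suitably) exceeds $2\cdot10^{-5}\sqrt\e$; hence an SQ algorithm returning $\hat\beta$ with $\norm{\betastar-\hat\beta}\le10^{-5}\sqrt\e$ identifies $\betastar$, hence $v$, and so solves the distinguishing problem of \cref{fact:sq-main} for this family. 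Invoking \cref{fact:sq-main} with $s=d^{ck^c/8}$ and $\gamma'=(\rho-\gamma)k^{-4+4c}$ shows that $\cA$ either makes $s\gamma'/(\rho-\gamma)=d^{ck^c/8}\cdot k^{-4+4c}$ queries or requires a query of tolerance $\sqrt{\gamma+\gamma'}=k^{-2+2c}e^{O(1/\e)}$, which are the two alternatives claimed.

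The conceptual content is entirely packaged in \cref{fact:chi-squared-matching-moments,fact:distributions-correlation,fact:sq-main,fact:sparse-distant-vectors}; I expect the main obstacle to be the first step — producing an explicit three-moment-matching family $\set{\cM_\mu}$ that \emph{simultaneously} has (i) the prescribed dependence $\e_\mu\asymp\min\set{\e,\mu^2}$ of the outlier weight on $\mu$, (ii) a chi-squared divergence of only $e^{O(1/\e)}$ (rather than the $e^{O(1/\e^2)}$ of the one-moment case), and (iii) a main-component variance making the induced covariance/noise pair land in the ranges $\tfrac12\Id\preceq\Sigma\preceq\Id$, $\sigma\in[0.99,1]$ — together with verifying that the Gaussian part of $\cQ'_v$ is literally an instance of \cref{def:model}; granting this, the remainder is routine bookkeeping.
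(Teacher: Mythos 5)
Your proposal is correct and follows essentially the same route as the paper: decompose the clean distribution via $y\sim N(0,\sigma_y^2)$ and the Gaussian conditional $X\mid y$, plant along near-orthogonal $k$-sparse directions from \cref{fact:sparse-distant-vectors}, match the first $m=3$ moments so that \cref{fact:distributions-correlation} yields pairwise correlations $e^{O(1/\e)}\abs{\Iprod{u,v}}^4$, and conclude with \cref{fact:sq-main} using exactly the same $(\gamma,\rho,\gamma')$ bookkeeping. The only difference is that the step you flag as the main obstacle — the existence of the one-dimensional family $\cM_\mu$ matching three moments with $\chi^2(\cM_\mu,N(0,1))\le e^{O(1/\e)}$ and the prescribed behavior of $\e_\mu$ — is not constructed by hand in the paper but simply imported as Lemma E.2 of \cite{abs-1806-00040} (with the paper's conditional variance $2/3$ in place of your $1/2$, an immaterial choice), so your sketch of that construction is extra work the paper avoids rather than a divergence in approach.
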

\begin{proof}
Note that $(X, y) \sim  \cG\Paren{{\beta^*}, \Sigma, \sigma}$ satisfy $y\sim N(0, \sigma_y^2)$, where $\sigma_y^2 = {\beta^*}^\top \Sigma {\beta^*} + \sigma^2$ and $X|y \sim N\Paren{\frac{y}{\sigma_y^2} \Sigma{\beta^*}, \Sigma - \frac{1}{\sigma_y^2} \Paren{\Sigma{\beta^*}}\Paren{\Sigma{\beta^*}}^\top }$.

We will use vectors ${\beta^*}$ of norm $10^{-5}\sqrt{\e}$. Denote $v = {\beta^*} / \norm{{\beta^*}}$ and let $\sigma^2 = 1 - {\beta^*}^\top \Sigma {\beta^*}$, $\Sigma = \Id - c' vv^\top$,  where $c'$ is a constant such that 
\[
\Sigma - \frac{1}{\sigma_y^2} \Paren{\Sigma{\beta^*}}\Paren{\Sigma{\beta^*}}^\top = \Id - c'vv^\top - \Paren{10^{-5} (1-c')^2 \e}vv^\top = \Id - vv^\top / 3\,.
\]

By \cite[Lemmas E.2]{abs-1806-00040}, there exists a distribution $\cM$ that satisfies the assumption of \cref{fact:distributions-correlation} with $m=3$ and $g(\e) = e^{O(1/\e)}$. Hence by \cref{fact:distributions-correlation},
\[
\chi_{\cD}\Paren{\cQ'_v, \cQ'_u} \le e^{O(1/\e)} \Iprod{v,u}^4
\]
for all unit $v, u \in \R^d$.

Using \cref{fact:sparse-distant-vectors}, we can apply \cref{fact:sq-main} 
with $\gamma = k^{4c-4}e^{O(1/\e)}$, ${\rho} = e^{O(1/\e)}$, $\gamma'= \Paren{\rho-\gamma} \cdot k^{-4+4c}$,  we get that $\cA$ requires at least ${d^{ck^{c}/8}k^{-4+4c}}$ queries with tolerance smaller than $k^{-2+2c}e^{O(1/\e)}$.
\end{proof}

\section{Sub-exponential designs}\label{sec:subexp-designs}

Recall that a distribution $\cD$ in $\R^d$ is called $L$-sub-exponential, if it has $(Lt)$-bounded $t$-th  moment for each $t \in \N$. In particular, all log-concave distributions are $L$-sub-exponential for some $L\le O(1)$.

In this section we discuss how we can improve the dependence of the sample complexity on $\e$ if (in addition to the assumptions of \cref{thm:heavy-tailed-technical}) we assume that $\cD$ is $L$-sub-exponential. For these designs we do not need a truncation.

First, let us show how the gradient bound \cref{lem:gradient-bound-truncated}  modifies in this case. It can be obtained directly from Bernstein's inequality for sub-exponential distributions (\cite[Theorem~1.13]{abs-2310.19244})
\begin{lemma}\label{lem:gradient-bound-sub-exp}
With probability at least $1-\delta/10$,
\[
\Normi{\tfrac{1}{n}\sum_{i \in [n]} \huberderivative\Paren{\eta_i} X^*_i} 
\le 10\sqrt{\frac{\Norm{\Sigma}\log\Paren{d/\delta}}{n} } + 10\frac{\sqrt{\Norm{\Sigma}}\cdot L \cdot \log\Paren{d/\delta}}{n}\,.
\]
\end{lemma}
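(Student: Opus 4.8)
The plan is to apply Bernstein's inequality for sub-exponential random variables coordinate-wise and then take a union bound over the $d$ coordinates, exactly as in the truncated version \cref{lem:gradient-bound-truncated} but now without truncation. Fix $j\in[d]$ and write the $j$-th coordinate of $\tfrac1n\sum_{i\in[n]}\huberderivative(\eta_i)X^*_i$ as $\tfrac1n\sum_{i\in[n]}Y_i$ with $Y_i := \huberderivative(\eta_i)\,X^*_{ij}$. The $Y_i$ are independent since the pairs $(\eta_i,X^*_i)$ are; moreover, since $\eta$ is independent of $X^*$ and $\cD$ has mean $0$, we have $\E Y_i = \E[\huberderivative(\eta_i)]\cdot\E X^*_{ij} = 0$ (one may also see this by conditioning on $\eta$, under which $Y_i$ is the fixed scalar $\huberderivative(\eta_i)$, of absolute value at most $2$, times the centered variable $X^*_{ij}$).

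Next I would record the two moment estimates needed for Bernstein. Since $\Sigma_{jj}\le\Norm{\Sigma}$ and $\abs{\huberderivative}\le 2$, we get $\E Y_i^2 \le 4\,\E (X^*_{ij})^2 \le 4\Norm{\Sigma}$. Because $\cD$ is $L$-sub-exponential, \cref{def:bounded-moment} applied with $u=e_j$ gives $(\E\abs{X^*_{ij}}^t)^{1/t}\le Lt\sqrt{\Norm{\Sigma}}$ for all $t\in\N$, hence $(\E\abs{Y_i}^t)^{1/t}\le 2Lt\sqrt{\Norm{\Sigma}}$; that is, the $Y_i$ are centered, independent, and sub-exponential with variance proxy $O(\Norm{\Sigma})$ and scale $O(L\sqrt{\Norm{\Sigma}})$. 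Then \cite[Theorem~1.13]{abs-2310.19244} gives, with probability at least $1-\delta/(10d)$,
\[
\Abs{\tfrac1n\sum_{i\in[n]}Y_i} \le 10\sqrt{\frac{\Norm{\Sigma}\log(d/\delta)}{n}} + 10\,\frac{\sqrt{\Norm{\Sigma}}\cdot L\cdot\log(d/\delta)}{n}\,,
\]
and a union bound over $j\in[d]$ yields the claimed bound on $\Normi{\tfrac1n\sum_{i\in[n]}\huberderivative(\eta_i)X^*_i}$ with failure probability at most $\delta/10$.

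There is no genuine obstacle here: the argument is a routine concentration estimate, and the only two places that need attention are (i) using the independence of $\eta$ and $X^*$ together with the zero-mean assumption to center the summands $Y_i$, and (ii) checking that the constants produced by the cited Bernstein inequality are absorbed by the slack absolute constant $10$ in the statement, which is why the lemma is phrased with that slack rather than a sharp constant. This is the sub-exponential analogue of \cref{lem:gradient-bound-truncated}: the absence of a truncation-parameter term is precisely what the sub-exponential tail buys, so that the subsequent "putting everything together" argument can be run with $X^*$ in place of $X'(\tau)$.
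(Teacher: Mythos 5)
Your proof is correct and follows the same route as the paper, which simply invokes Bernstein's inequality for sub-exponential distributions (the cited Theorem 1.13) — you have just filled in the standard coordinate-wise application and union bound over the $d$ coordinates, including the correct observations that $\phi(\eta_i)X^*_{ij}$ is centered by independence of $\eta$ and $X^*$ and that $\abs{\phi}\le 2$ controls the variance and sub-exponential scale.
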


The proof of strong convexity bound (\cref{lem:strong-convexity-heavy-tailed}) is exactly the same, with $X'(\tau) = X^*$.

Finally, we need to bound $	\Normi{\tfrac{1}{n}\sum_{i=1}^n \Paren{X^*_i}^{\otimes 2t} -\E \Paren{X^*_1}^{\otimes 2t}} $, since we need to prove \cref{lem:sos-moment-bound} for sub-exponential distributions. 
By Lemma C.1. from \cite{dk-sos}, for all $L$-sub-exponential distributions, with probability $1-\delta$,
\[
\Normi{\tfrac{1}{n}\sum_{i=1}^n \Paren{X^*_i}^{\otimes 2t} -\E \Paren{X^*_1}^{\otimes 2t}} \le O\Paren{\sqrt{\frac{t\log(d/\delta)}{n}} \cdot \Paren{10 L\sqrt{\Norm{\Sigma}}\cdot t^2\log(d/\delta)}^{2t} }
\]
Hence with $n \gtrsim K^{2t} \cdot \Paren{10t^2\log(d/\delta)}^{4t + 1} $, we get $	\Normi{\tfrac{1}{n}\sum_{i=1}^n \Paren{X^*_i}^{\otimes 2t} -\E \Paren{X^*_1}^{\otimes 2t}} \le L^{2t} \Norm{\Sigma}^t / K^t$, so we get the conclusion of \cref{lem:sos-moment-bound}.

Putting everything together, the sample complexity is
\[
n \gtrsim \frac{k\log(d/\delta)}{\e^{2-1/t}} + \frac{\Paren{k^2\log(d/\delta)} 10^{5s/(s-2)}M_s^{s/(s-2)} \kappa(\Sigma)^{4+s/(s-2)}}{\alpha} + k^{2t} \cdot \kappa(\Sigma)^{2t} \cdot \Paren{10^6 \cdot t^2\log(d/\delta)}^{4t + 1}\,.
\]
Note that we can use $s = 4$ and $M_s = 4L$.

Consider the case when $\cD$ is log-concave, so $L \le O(1)$.
For $\kappa(\Sigma)\le O(1)$, $\alpha \ge \Omega(1)$, $t=1$, with high probability we get error $O(\sigma \sqrt{\e})$ as long as
\[
n \gtrsim \frac{k\log d}{\e} + k^2\cdot \Paren{\log d}^5\,.
\]
Similarly, for $\kappa(\Sigma) \le O(1)$, $\alpha \ge \Omega(1)$, $t=2$, with high probability we get error $O(M \sigma \e^{3/4})$ (where $M \le O(\sqrt{\log d})$ is the same as in \cref{thm:intro-heavy-tailed-sos}) as long as
\[
n \gtrsim \frac{k\log d}{\e^{3/2}} + k^4\cdot \Paren{\log d}^9\,.
\]

Note that for sub-Gaussian distributions one can use better tail bounds and the $\polylog(d)$ factors should be better in this case.
\section{Concentration Bounds}
\label{sec:concentration_bounds}

Throughout the paper we use the following versions of versions of Bernstein's inequality. The proofs can be found in \cite{Tropp15}.

\begin{fact}[Bernstein inequality]
	\label{fact:bernstein-simple}
	Let $L > 0$ and let $x\in \R^{d}$ be a zero-mean random variable. Let 
	$ x_1, \ldots,  x_n$ be i.i.d. copies of $ x$.     Suppose that  $\abs{x} \leq L$.
	Then the estimator $\bar{ x} = \frac{1}{n} \sum_{i=1}^n  x_i$ satisfies for all $t >0$ 
	\[ \Psymb\Paren{\Abs{\bar{ x} } \geq t } \leq 2 \cdot \exp \Paren{-\frac{t^2 n}{2\E x^2+Lt}} \,.\]
\end{fact}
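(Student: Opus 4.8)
The statement is the classical Bernstein inequality for bounded, mean-zero summands (here $x$ is real-valued, as indicated by $\E x^2$ and as the fact is applied later, coordinatewise). The plan is to run the standard Cram\'er--Chernoff (exponential moment) argument. First I would reduce the two-sided estimate to a one-sided one: since $-x$ also has zero mean, satisfies $\abs{-x}\le L$, and has $\E(-x)^2=\E x^2$, it is enough to prove $\Pr\Paren{\bar x\ge t}\le \exp\Paren{-t^2 n/(2\E x^2+Lt)}$ and then apply it both to $\bar x$ and to $-\bar x$, which produces the factor $2$.

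For the one-sided bound I would fix $\lambda>0$, apply Markov's inequality to $e^{\lambda\sum_i x_i}$, and use independence of the $x_i$ to get $\Pr\Paren{\bar x\ge t}\le e^{-\lambda n t}\,\Paren{\E e^{\lambda x}}^n$. The core step is controlling the single-variable moment generating function: expanding $e^{\lambda x}=1+\lambda x+\sum_{k\ge 2}\lambda^k x^k/k!$, using $\E x=0$, and using $\abs{\E x^k}\le \E\abs{x}^k\le L^{k-2}\E x^2$ for $k\ge 2$ (a consequence of $\abs{x}\le L$), one obtains
\[
\E e^{\lambda x}\ \le\ 1+\frac{\E x^2}{L^2}\Paren{e^{\lambda L}-1-\lambda L}\ \le\ \exp\!\Paren{\frac{\E x^2}{L^2}\Paren{e^{\lambda L}-1-\lambda L}},
\]
the last step being $1+u\le e^u$. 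This gives $\Pr\Paren{\bar x\ge t}\le \exp\!\Paren{-\lambda n t+\tfrac{n\E x^2}{L^2}\bigl(e^{\lambda L}-1-\lambda L\bigr)}$.

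It then remains to choose $\lambda$ and simplify, and this is the only step that needs a little care. The clean route is to use the elementary inequality $e^z-1-z\le \tfrac{z^2/2}{1-z/3}$ for $0\le z<3$ (immediate from the power series, since $2/(j+2)!\le 3^{-j}$ for every $j\ge 0$), and to take $\lambda=t/\Paren{\E x^2+Lt/3}$, which keeps $\lambda L<3$; substituting yields an exponent of at most $-nt^2/\Paren{2(\E x^2+Lt/3)}$, and since $2(\E x^2+Lt/3)\le 2\E x^2+Lt$ this is at most $-nt^2/\Paren{2\E x^2+Lt}$, exactly the claimed bound. Alternatively one can optimize $\lambda$ exactly, getting the sharper Bennett bound $\exp\!\Paren{-\tfrac{n\E x^2}{L^2}\,h\Paren{tL/\E x^2}}$ with $h(u)=(1+u)\log(1+u)-u$, and then invoke $h(u)\ge u^2/(2+2u/3)$. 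There is no conceptual obstacle — this is a textbook fact — so the "hard part" is merely matching the optimized exponent to the closed form $t^2 n/(2\E x^2+Lt)$ in the statement via the right choice of $\lambda$ and the right scalar estimate for $e^z-1-z$ (equivalently, the right lower bound on the Bennett function $h$); everything else is the routine Chernoff template.
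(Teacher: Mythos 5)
Your proof is correct: the Chernoff--Cram\'er argument with the MGF bound $\E e^{\lambda x}\le \exp\bigl(\tfrac{\E x^2}{L^2}(e^{\lambda L}-1-\lambda L)\bigr)$, the estimate $e^z-1-z\le \tfrac{z^2/2}{1-z/3}$, and the choice $\lambda=t/(\E x^2+Lt/3)$ yields the exponent $-nt^2/(2\E x^2+2Lt/3)$, which is at most $-nt^2/(2\E x^2+Lt)$ as claimed. The paper does not prove this fact itself but simply cites \cite{Tropp15}, and your derivation is exactly the standard argument found there, so there is nothing to compare beyond noting agreement.
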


\begin{fact}[Bernstein inequality for covariance]
	\label{fact:bernstein-covariance}
	Let $L > 0$ and let  $x\in \R^{d}$ be a $d$-dimensional random vector. Let 
	$ x_1, \ldots,  x_n$ be i.i.d. copies of $ x$.    Suppose that  $\Norm{ x}^2 \leq L$.
	Then the estimator $\bar{ \Sigma} = \frac{1}{n} \sum_{i=1}^n  x_i  x_i^\top$ satisfies for all $t >0$ 
	\[ \Psymb\Paren{\Norm{\bar{ \Sigma}  - \E  x x^\top} \geq t } \leq 2d \cdot \exp \Paren{-\frac{t^2 n}{2L\Norm{\Sigma}+Lt}} \,.\]
\end{fact}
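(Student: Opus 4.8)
The plan is to recognize this as a direct instance of the matrix Bernstein inequality (the version in \cite{Tropp15}), so that essentially all that is needed is to verify its hypotheses and then match constants. First I would write $\Sigma = \E xx^\top$ and set $Z_i = \tfrac1n\Paren{x_ix_i^\top - \Sigma}$, so that $\bar\Sigma - \E xx^\top = \sum_{i=1}^n Z_i$ is a sum of independent, symmetric, mean-zero random matrices; matrix Bernstein controls $\Norm{\sum_i Z_i}$ in terms of (a) an almost-sure bound on $\Norm{Z_i}$ and (b) the matrix variance statistic $\Bignorm{\sum_i \E Z_i^2}$.

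For (a): since $\Norm{x_i}^2 \le L$ we have $0 \preceq x_ix_i^\top \preceq \Norm{x_i}^2\Id \preceq L\,\Id$, and taking expectations $0 \preceq \Sigma \preceq L\,\Id$; hence $-L\,\Id \preceq x_ix_i^\top - \Sigma \preceq L\,\Id$, so $\Norm{x_ix_i^\top - \Sigma} \le L$ and $\Norm{Z_i} \le L/n$ almost surely. For (b): since the $Z_i$ are symmetric, $\Bignorm{\sum_i \E Z_i^2} = \tfrac1n\Bignorm{\E\Paren{xx^\top - \Sigma}^2}$, and expanding $\Paren{xx^\top}^2 = \Norm{x}^2\,xx^\top$ and dropping $\Sigma^2 \succeq 0$ gives $\E\Paren{xx^\top - \Sigma}^2 = \E\Brac{\Norm{x}^2 xx^\top} - \Sigma^2 \preceq \E\Brac{\Norm{x}^2 xx^\top} \preceq L\,\Sigma$, using $\Norm{x}^2 \le L$ inside the positive-semidefinite order; hence $\Bignorm{\sum_i \E Z_i^2} \le L\Norm{\Sigma}/n$.

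Plugging these two bounds into matrix Bernstein gives, for all $t > 0$, $\Pr\Paren{\Norm{\bar\Sigma - \E xx^\top} \ge t} \le 2d\exp\!\Paren{-\tfrac{t^2/2}{\,L\Norm{\Sigma}/n + Lt/(3n)\,}}$, and since $L\Norm{\Sigma}/n + Lt/(3n) \le \Paren{2L\Norm{\Sigma} + Lt}/(2n)$ the exponent is at most $-\tfrac{t^2 n}{2L\Norm{\Sigma} + Lt}$, which is exactly the claimed bound. I do not expect any genuine obstacle: the only step that requires a moment's thought is the matrix variance computation — in particular the identity $\Paren{xx^\top}^2 = \Norm{x}^2\,xx^\top$ together with the use of the almost-sure bound $\Norm{x}^2 \le L$ inside the positive-semidefinite order to obtain $\E\Brac{\Norm{x}^2 xx^\top} \preceq L\,\Sigma$ — while everything else is just tracking constants. (Alternatively one may simply cite \cite{Tropp15} verbatim, as the paper does.)
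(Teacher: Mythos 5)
Your proposal is correct: the hypotheses of the matrix Bernstein inequality are verified cleanly (the a.s.\ bound $\Norm{x_ix_i^\top-\Sigma}\le L$ and the variance bound $\E\Paren{xx^\top-\Sigma}^2\preceq L\,\Sigma$ via $\Paren{xx^\top}^2=\Norm{x}^2xx^\top$), and the constant comparison $L\Norm{\Sigma}/n+Lt/(3n)\le\Paren{2L\Norm{\Sigma}+Lt}/(2n)$ indeed yields the stated exponent. This is exactly the route the paper intends — it simply cites \cite{Tropp15} for this fact without writing out the verification you supplied.
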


\end{document}